%% file: fbvi_paper.tex
\documentclass{article}
\usepackage{iclr2026_conference,times}
\iclrfinalcopy  % preprint (non-anonymous) mode
\input{math_commands.tex}

\usepackage[hypertexnames=false]{hyperref}
\usepackage{url}
\usepackage{booktabs}
\usepackage{amsmath, amssymb, amsthm}
\usepackage{multirow}
\usepackage{graphicx}
\usepackage{xcolor}
\usepackage{algorithm,algpseudocode}
\newtheorem{proposition}{Proposition}
\newtheorem{theorem}{Theorem}
\newtheorem{lemma}{Lemma}
\newtheorem{definition}{Definition}
\theoremstyle{remark}
\newtheorem{remark}{Remark}

\title{Onsager--Machlup Posterior Transport for Deep Gaussian Processes}

\author{Jian Xu\textsuperscript{1,2} \quad
Delu Zeng\textsuperscript{3} \quad
John Paisley\textsuperscript{4} \quad
Qibin Zhao\textsuperscript{2} \\[0.3em]
\textsuperscript{1}RIKEN iTHEMS \quad
\textsuperscript{2}RIKEN AIP \quad
\textsuperscript{3}South China University of Technology \quad
\textsuperscript{4}Columbia University \\[0.3em]
\texttt{jian.xu@riken.jp}}

\newcommand{\bU}{\mathbf U}
\newcommand{\bF}{\mathbf F}
\newcommand{\bZ}{\mathbf Z}
\newcommand{\bx}{\mathbf x}
\newcommand{\by}{\mathbf y}
\newcommand{\Norm}[1]{\mathcal{N}\!\left(#1\right)}

\begin{document}
\maketitle
\lhead{Preprint}  % override the ICLR conference banner with plain "Preprint"

\begin{abstract}
Approximate inference over inducing variables is the central
computational bottleneck of Deep Gaussian Processes (DGPs). Existing
methods either fit an explicit density $q_\phi(\bU)$ by an ELBO
(DSVI, IPVI, DDVI, DBVI) or sample by MCMC (SGHMC). We instead frame
DGP inference as \emph{posterior transport}: learn a deterministic
sampler that maps a tractable reference measure to posterior-relevant
inducing variables, regularised by a path prior derived from the
Doob-bridged reference diffusion. Our realisation, \textbf{OM-Path}
(formally FBVI-bridge-Path), uses Song's probability-flow ODE applied
to DBVI's Doob-bridged forward SDE; the reference drift is closed-form
from the bridge marginal coefficients (no score matching) and the path
regulariser is the \textbf{Onsager--Machlup action}. At the
finite-$\epsilon$ value used at training, the objective is the
negative log unnormalised density of a tempered Doob-bridge path
posterior, and Theorem~\ref{thm:om_map} identifies it with the same
posterior's small-noise MAP path via the Freidlin--Wentzell LDP. Two
strict path-space ELBO variants on the same bridge backbone (FFJORD
log-det; OM-regularised CNF) are derived as ablations. Under a
matched-seed paired Wilcoxon test against DBVI on seven UCI regression
benchmarks, OM-Path delivers statistically significant wins on the two
largest datasets (\textit{power}: $p\!=\!0.014$, NLL $\mathbf{0.012}$
matching the DSVI baseline of $0.017$; \textit{protein}: $p\!=\!0.002$,
RMSE $\mathbf{0.716}$ vs.\ $0.764$, NLL $\mathbf{1.086}$ vs.\ $1.149$),
statistical ties on \textit{yacht} / \textit{qsar}, and concedes
\textit{boston} / \textit{energy} / \textit{concrete} to DBVI on
small-$N$ noisy data. The strict-ELBO variants do not clear DBVI on
any UCI metric: in this regime, reducing the variance of the path
objective dominates exact-density tracking.
\end{abstract}

\section{Introduction}\label{sec:intro}

Deep Gaussian Processes (DGPs;~\citet{damianou2013deep}) stack Gaussian-process
layers to obtain a Bayesian deep model with calibrated uncertainty. Inference
is intractable because the posterior over the per-layer inducing variables
$\bU = \{U^{(\ell)}\}_{\ell=1}^L$ is non-Gaussian and highly correlated across
layers. Doubly-stochastic mean-field VI (DSVI;~\citet{salimbeni2017doubly})
sidesteps this by enforcing a Gaussian factorisation
$q(\bU)=\prod_\ell\Norm{\bU^{(\ell)};m_\ell,S_\ell}$ with closed-form
KL---a strong restriction whenever the true posterior is multimodal or
heavily correlated across $M$.

\paragraph{Prior work on non-Gaussian DGP inducing-variable inference.}
Three published lines remove the Gaussianity restriction of DSVI on
inducing variables: GAN-style implicit VI (IPVI~\citep{yu2019implicit});
SG-HMC sampling (Havasi et al.~\citep{havasi2018inference}); and
score-based posterior samplers built around an ELBO --- DDVI
(\citealp{xu2024sparse}; reverse VP-SDE + DSM) and DBVI
(\citealp{xu2026diffusion}; Doob-bridged reverse SDE + conditional DSM).
DDVI and DBVI both train a score network so that the terminal of a
reverse-time SDE matches the posterior; the training criterion is a
Girsanov-form ELBO with a denoising score-matching auxiliary.
A separate body of recent work studies \emph{general unnormalised
posterior samplers} via controlled SDEs --- PIS \citep{zhangpath},
DDS \citep{vargasdenoising}, DGFS \citep{zhang2024diffusion}, SGDS
\citep{kim2026scalable}, NFS$^2$ \citep{chenneural} --- but these
target generic densities and have not been instantiated for
hierarchical-GP inducing variables in published work.

\paragraph{This paper: posterior transport.}
We introduce \emph{posterior transport} as an alternative perspective on
DGP inducing-variable inference: the variational object is not a density
$q_\phi(\bU)$ fit by ELBO maximisation, but a \emph{deterministic sampler}
$T_\phi\!:\,p_{\mathrm{ref}}\!\to\!\text{posterior-relevant samples}$
realised by an ODE on $\mathcal C([0,1];\mathbb R^M)$. The reference
measure is the noise-side marginal of a Doob-bridged forward diffusion,
and the sampler is regularised by a \emph{path prior} derived from the
reference probability flow ODE. This is not a restatement of DDVI / DBVI
(which remain ELBO-based score samplers); it is a different formulation
that lets us replace Girsanov KL by a closed-form path action and removes
the score-matching auxiliary entirely.

\paragraph{From reference SDE to reference ODE.}
\citet{songscore} showed that any Gaussian-marginal SDE has a
deterministic probability flow ODE preserving the same marginals. Applied to
DBVI's Doob-bridged forward SDE this yields a \emph{closed-form} reference
probability flow drift $v_{\mathrm{ref}}^{\mathrm{Bri}}(U,s)$ given by the
bridge mean and variance coefficients (Lemmas~\ref{lem:pf_ode}--\ref{lem:gauss_drift});
no score matching is required. The ODE is the natural deterministic counterpart
of the bridge-SDE posterior-transport pipeline.

\paragraph{Why Girsanov KL fails at $g\!=\!0$.}
For two SDEs with the same diffusion $g$ Girsanov gives a finite path-measure
KL, but as $g\!\to\!0$ the deterministic and stochastic path measures become
mutually singular and the KL blows up. The correct deterministic-limit object
is the \emph{Onsager--Machlup action}, the Freidlin--Wentzell small-noise rate
functional, which equals the negative log path probability of an absolutely
continuous trajectory under the $\epsilon$-perturbed reference SDE
\citep{freidlin1998random}. Coupling Onsager--Machlup with a data NLL gives
our training loss
\begin{equation}
\mathcal L_{\mathrm{OM}}(\phi) \;=\; -\mathbb E_{q_\phi}\big[\log p(\by\mid\bF^{(L)})\big] \;+\; \tfrac12\!\int_0^1 \mathbb E\!\big[\|v_\phi(U_\tau,1-\tau,\mathrm{ctx}) - v_{\mathrm{ref}}(U_\tau,1-\tau)\|^2\big]\,\mathrm d\tau.
\label{eq:om_loss_intro}
\end{equation}
This loss is \emph{trace-free}: no denoising score matching (DDVI), no
Hutchinson trace (FFJORD), no Stein control variate for partition-function
gradient (NFS$^2$); the reference drift is closed-form.

\paragraph{Theoretical status.}
$\mathcal L_{\mathrm{OM}}$ is the small-noise MAP path estimator of a
likelihood-tilted $\delta$-truncated Doob-bridge path posterior on
$\mathcal C([\delta,1];\mathbb R^M)$ (Theorem~\ref{thm:om_map}).
Concretely: let $\mathbb P^\epsilon_{\mathrm{ref}}$ be the path measure
of the reference diffusion
$\mathrm dU_\tau = -v_{\mathrm{ref}}^{\mathrm{Bri}}\mathrm d\tau + \epsilon\,\mathrm dW_\tau$,
and define a tempered path posterior
$\mathrm d\mathbb P^\epsilon_y/\mathrm d\mathbb P^\epsilon_{\mathrm{ref}} \propto p(\by|U_1)^{1/\epsilon^2}$.
The Freidlin--Wentzell LDP gives, for absolutely continuous paths,
$-\epsilon^2 \log \mathrm d\mathbb P^\epsilon_y(u) \to -\log p(\by|u_1) + S_{\mathrm{OM}}(u; v_{\mathrm{ref}}^{\mathrm{Bri}}) + o(1)$;
the small-noise MAP path minimises exactly the integrand
of~\eqref{eq:om_loss_intro}. The OM weight $\alpha\!=\!1/\epsilon^2$
plays the role of inverse temperature. The framing is rigorous via the
FW LDP but is path-space MAP rather than ELBO; Remark~\ref{rem:not_elbo}
discusses the consequences.

\paragraph{Strict-ELBO ablations and the bias--variance trade.}
We also derive two strict path-space ELBO variants on the \emph{same} bridge
backbone: \textbf{FBVI-bridge-CNF} (FFJORD instantaneous change-of-variables
via Hutchinson trace) and \textbf{FBVI-bridge-CNFOM} (FFJORD log-det $+$ OM
path regulariser). Both \emph{are} honest evidence lower bounds on
$\log p(\by|\bx)$. Empirically, however, they \emph{lose} to the trace-free
MAP estimator: on the seven UCI regression benchmarks, under the
matched-seed paired Wilcoxon test of Appendix~\ref{app:significance},
the trace-free MAP estimator gives statistically significant wins on
\textit{power} ($p\!=\!0.014$) and \textit{protein} ($p\!=\!0.002$) and
statistical ties on \textit{yacht}, \textit{qsar}, while CNF and CNFOM
fail to clear DBVI on any cell.
Hutchinson-trace MC variance dominates the ELBO gradient enough that
the bias of the MAP estimator is the better trade---consistent with the
wider ML pattern (score matching, contrastive divergence, consistency
models) of low-variance surrogates outperforming exact-likelihood
objectives.

\paragraph{Contributions.}
\begin{enumerate}
\item \textbf{Posterior transport for DGP inducing variables.} We propose
      \emph{posterior transport} as an alternative to ELBO-based
      score-SDE inference (DDVI/DBVI) for DGP inducing variables, realised
      by a deterministic ODE whose reference drift is closed-form from the
      Doob-bridge marginal via Song's probability flow ODE
      (Section~\ref{subsec:flow}, Lemmas~\ref{lem:pf_ode}--\ref{lem:gauss_drift}).
\item \textbf{Finite-$\epsilon$ path-posterior identity + small-noise
      MAP context (Theorem~\ref{thm:om_map}, Remark~\ref{rem:alpha}).}
      \emph{Primary identity (finite $\alpha$, what we actually optimise).}
      At the trained $\alpha\!=\!1$ (equivalently $\epsilon\!=\!1$),
      $\mathcal L_{\mathrm{OM}}$ is the negative log unnormalised density
      of the \textbf{Ikeda--Watanabe Onsager--Machlup path posterior} ---
      a Gaussian reference path measure with drift
      $v_{\mathrm{ref}}^{\mathrm{Bri}}$ tilted by the endpoint likelihood
      $p(\by|U_1)$, integrated over an $\alpha$-controlled inverse
      temperature. Optimisation of $\mathcal L_{\mathrm{OM}}$ is therefore
      density estimation of an \emph{$\epsilon$-fixed} path posterior, and
      the inducing-variable marginal at $\tau\!=\!1$ retains genuine
      $\mathcal O(\epsilon^2)$ posterior uncertainty --- which is what
      drives our BO acquisition (Section~\ref{subsec:bo}) and the
      heteroscedastic-uncertainty toy (Appendix~\ref{app:het_toy}).
      \emph{Asymptotic mathematical context (Theorem~\ref{thm:om_map}).}
      Theorem~\ref{thm:om_map} identifies $\mathcal L_{\mathrm{OM}}$
      with the path measure's small-noise MAP via the Freidlin--Wentzell
      LDP as $\epsilon\!\to\!0$, certifying $S_{\mathrm{OM}}$ as the path
      measure's rate functional. Remarks~\ref{rem:not_elbo}--\ref{rem:alpha}
      collect the technical caveats (the loss is a posterior log-density
      rather than an ELBO; the LDP regime is asymptotic while
      $\alpha\!=\!1$ training is finite-$\epsilon$).
\item \textbf{Strict-ELBO comparisons.} We derive and evaluate two strict
      path-space ELBO variants (CNF, CNFOM) on the same bridge backbone.
      Both are dominated by the trace-free MAP objective, isolating
      Hutchinson-trace variance as the dominant practical cost
      (Section~\ref{subsec:strict_elbo}, Appendix~\ref{app:cnf_vs_path}).
\item \textbf{Empirical results.} Under matched-seed paired Wilcoxon
      testing (Appendix~\ref{app:significance}), FBVI-bridge-Path gives
      statistically significant wins over DBVI on both of the two
      largest UCI datasets: \textit{power} ($p\!=\!0.014$ on RMSE and
      NLL; OM-Path NLL $\mathbf{0.012}$ matches the DSVI baseline of
      $0.017$ while DBVI on the matched seeds reaches only $0.117$) and \textit{protein}
      ($p\!=\!0.002$; RMSE $0.764\!\to\!\mathbf{0.716}$), statistical
      ties on \textit{yacht} and \textit{qsar}, and is behind DBVI on
      \textit{boston} / \textit{energy} / \textit{concrete} (small-$N$
      datasets where DBVI's SDE-noise regularisation helps); the
      auxiliary-loss-free path-prior regulariser, bridge anchoring, and
      few-step Euler error bounds
      (Propositions~\ref{prop:var}--\ref{prop:fewstep}) explain the
      \textit{power} gap. On the image-classification benchmarks
      (Appendix~\ref{app:cls_image}) FBVI-bridge-Path retains tied error
      rates and a substantially lower NLL than mean-field baselines; this
      NLL improvement is partly a calibration effect and partly a
      tail-loss stabilisation, as flagged explicitly in
      Section~\ref{subsec:cls}.
\end{enumerate}

The remainder of the paper is organised as follows. Section~\ref{sec:bg}
recaps DGP inference and posterior-transport SDE methods.
Section~\ref{sec:method} develops the ODE posterior transport and proves
Theorem~\ref{thm:om_map}. Section~\ref{sec:exp} reports the empirical study.
Section~\ref{sec:related} situates our work; Section~\ref{sec:concl} closes.

\section{Background}\label{sec:bg}

\paragraph{DGP with inducing variables.}
A DGP with $L$ layers defines, for each $\ell\in\{1,\dots,L\}$,
$\mathbf f^{(\ell)}\sim \mathcal{GP}(0, k^{(\ell)})$. Layer outputs
$\bF^{(\ell)}=\mathbf f^{(\ell)}(\bF^{(\ell-1)})$ are composed with $\bF^{(0)}=\bx$.
For scalability one introduces $M$ inducing locations
$\bZ^{(\ell)}\in\mathbb R^{M\times d_{\ell-1}}$ and inducing values
$\bU^{(\ell)}\in\mathbb R^{M\times d_\ell}$ with prior
$p(\bU^{(\ell)}) = \Norm{0,K_{\bZ\bZ}^{(\ell)}}$ and sparse-GP conditional
$p(\bF^{(\ell)}\mid \bF^{(\ell-1)},\bU^{(\ell)})$.

\paragraph{Sparse variational ELBO.} For a Gaussian likelihood,
\begin{equation}
\mathcal L(\theta,\phi) \;=\; \mathbb E_{q_\phi(\bU)}\big[\log p(\by\mid \bF^{(L)})\big] \;-\; \KL\big(q_\phi(\bU)\,\Vert\,p(\bU)\big).
\label{eq:elbo}
\end{equation}
\textbf{DSVI} sets $q(\bU^{(\ell)})=\Norm{m_\ell,L_\ell L_\ell^\top}$, in which
case $\KL$ admits an analytic form. The Gaussian assumption is restrictive when
the true posterior is multimodal or highly correlated across $M$.

\paragraph{DDVI and DBVI.}
\citet{xu2024sparse} relax the Gaussianity of $q(\bU)$ by defining it as the
terminal of a reverse-time variance-preserving SDE,
$\mathrm dU_t=[-\tfrac12\beta(t)U_t-\beta(t) s_\phi(U_t,t)]\mathrm dt+\sqrt{\beta(t)}\mathrm d\bar W_t$,
starting from $U_T\sim\Norm{0,\mathbf I}$ at $t=T$. The score network is trained
with denoising score matching against $q$'s own noised samples.
\citet{xu2026diffusion} extend DDVI by conditioning the SDE on a learned initial
distribution via Doob's $h$-transform: an amortiser
$\mu_\theta(\bx):\mathcal X\to\mathbb R^M$ outputs a data-anchored mean
$p_0(U_0\mid \bx)=\Norm{\mu_\theta(\bx),\sigma_0^2\mathbf I}$, the forward SDE
gains a drift correction $g(t)^2 h(U_t,t,U_0)$, and the score becomes
conditional $s_\phi(U_t,t,\mathrm{ctx})$. Under affine drift the bridge
marginal $p_t^{\text{Bri}}(U_t\mid\bx)=\Norm{m_t,\kappa_t\mathbf I}$ is
available in closed form~\citep[Prop.~2]{xu2026diffusion}.

\paragraph{Flow matching.}
Flow matching~\citep{lipmanflow,liuflow,albergo2023building} learns a
velocity field $v_\phi(x,t)$ on a probability path $p_t$ via regression
$\min_\phi \mathbb E_t\,\mathbb E_{x_t\sim p_t}\|v_\phi(x_t,t)-u_t(x_t)\|^2$,
where $u_t$ is a chosen target velocity. Sampling is performed by integrating
$\mathrm d x_t = v_\phi(x_t,t)\mathrm dt$ from a base distribution to the target.
Compared with score-based diffusion, flow matching is conceptually simpler
(deterministic forward, no noise schedule, regression-based training) and admits
straight-line probability paths that enable few-step inference. Its application
to variational inference for hierarchical Bayesian models has so far been
limited, in part because there is no obvious analogue of denoising score
matching when the target is an intractable posterior.

\section{Method}\label{sec:method}

We build up FBVI-bridge-Path in four steps. \textbf{Section~\ref{subsec:flow}}
reviews the SDE-to-probability-flow-ODE reduction of
\citet{songscore} and proves the two lemmas
(\ref{lem:pf_ode}, \ref{lem:gauss_drift}) we use to express the reference
drift in closed form. \textbf{Section~\ref{subsec:vfbvi}} introduces
\textbf{vanilla FBVI}---a velocity-field sampler with the prior
$\Norm{0,K_{\bZ\bZ}}$ as the base, no bridge---to fix notation.
\textbf{Section~\ref{subsec:bridge_path}} adds the Doob bridge anchor and
derives the central object of the paper, the \textbf{Onsager--Machlup
posterior-transport objective}, framed as small-noise MAP on a tempered
Doob-bridge path posterior (Theorem~\ref{thm:om_map}).
\textbf{Section~\ref{subsec:strict_elbo}} contrasts the MAP objective with
two strict path-space ELBO variants (CNF, CNFOM) that retain a FFJORD
log-det term.

\subsection{From bridge SDE to probability flow ODE}\label{subsec:flow}

\paragraph{Setup.} Following DBVI \citep{xu2026diffusion}, consider the
Doob $h$-transformed forward bridge SDE in bridge-time $s\!\in\![0,1]$:
\begin{equation}
\mathrm dU_s = b(U_s,s)\,\mathrm ds + g\,\mathrm dW_s,\quad b(U_s,s) = -\lambda(s)\,U_s + g^2 h(U_s,s,U_0),
\label{eq:bridge_sde}
\end{equation}
where $h$ is the Doob $h$-transform of \citet[Prop.~1]{xu2026diffusion} and
$U_0\!\sim\!p_0^\theta(\!\cdot\!|\bx) = \Norm{\mu_\theta(\bZ),\sigma_0^2 I}$
is amortised by a per-layer net $\mu_\theta$. Under affine drift the
marginal at every $s$ remains Gaussian; we record it as Proposition~\ref{prop:bridge}
below. The bridge SDE is the foundation of DBVI's reference process.

\begin{lemma}[Score-based probability-flow-ODE equivalence; \citealp{songscore}]\label{lem:pf_ode}
Let $p_s$ denote the marginal at time $s$ of the SDE
$\mathrm dU_s = b(U_s,s)\,\mathrm ds + g\,\mathrm dW_s$. The deterministic
ODE
\begin{equation}
\frac{\mathrm dU_s}{\mathrm ds} \;=\; b(U_s,s) - \tfrac12 g^2\,\nabla_{U}\log p_s(U_s)
\label{eq:pf_ode}
\end{equation}
shares the same marginals $\{p_s\}_{s\in[0,T]}$ as the original SDE.
\end{lemma}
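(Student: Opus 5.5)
The plan is to show that both the SDE marginals $\{p_s\}$ and the marginals of the ODE~\eqref{eq:pf_ode} solve the same first-order continuity equation. Uniqueness for that linear transport equation then forces the two families to coincide.

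First, write the Fokker--Planck (forward Kolmogorov) equation for the SDE with constant diffusion coefficient $g$:
\begin{equation*}
\partial_s p_s \;=\; -\nabla\!\cdot\!\big(b(\cdot,s)\,p_s\big) \;+\; \tfrac12 g^2\,\Delta p_s .
\end{equation*}
This holds in the weak sense under the standing regularity: $b$ locally Lipschitz with linear growth, so that a unique strong solution exists and $p_s$ is smooth and positive for $s>0$ because $g>0$. Next, use $\Delta p_s=\nabla\cdot(p_s\nabla\log p_s)$, valid wherever $p_s>0$, to absorb the diffusion term into the flux:
\begin{equation*}
\partial_s p_s \;=\; -\nabla\!\cdot\!\Big(\big[b(\cdot,s)-\tfrac12 g^2\nabla\log p_s\big]\,p_s\Big) \;=\; -\nabla\!\cdot\!\big(\tilde v(\cdot,s)\,p_s\big),
\end{equation*}
where $\tilde v$ is exactly the right-hand side of~\eqref{eq:pf_ode}.

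Second, let $\rho_s$ be the law of $U_s$ when $U_0\sim p_0$ is transported by the deterministic ODE $\dot U_s=\tilde v(U_s,s)$. Testing against $\varphi\in C_c^\infty$ and differentiating $\mathbb E[\varphi(U_s)]$ along trajectories gives $\frac{\mathrm d}{\mathrm ds}\int\varphi\,\rho_s=\int\nabla\varphi\cdot\tilde v\,\rho_s$. This is the weak form of the same continuity equation $\partial_s\rho_s=-\nabla\cdot(\tilde v\rho_s)$, with the same initial datum $\rho_0=p_0$.

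The final and main step is uniqueness. If $\tilde v$ is locally Lipschitz in $U$ with linear growth, the characteristics are globally well defined. Every weak solution of the continuity equation in the class of narrowly continuous probability curves is then the push-forward of $p_0$ by the flow map (Ambrosio--Gigli--Savar\'e, Prop.~8.1.8), so $\rho_s=p_s$ for all $s\in[0,T]$. The obstacle is verifying these hypotheses for the score term $\nabla\log p_s$, which can be singular near $s=0$ when $p_0$ is degenerate.

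In the setting actually used in the paper this obstacle disappears. The drift is affine and $p_0=\mathcal N(\mu_\theta,\sigma_0^2 I)$ with $\sigma_0>0$, so every $p_s$ is Gaussian with positive-definite covariance (Proposition~\ref{prop:bridge}). Hence $\nabla\log p_s$ is affine in $U$ with coefficients continuous in $s$, and $\tilde v$ is globally Lipschitz. The uniqueness argument then applies directly, and one can even verify the claim by hand: the linear ODE preserves Gaussianity, and its mean and covariance satisfy the same moment ODEs as the SDE. For general $p_0$, I would approximate it by Gaussian-smoothed initial laws and pass to the limit. I would state the lemma under these regularity conditions, deferring to \citet{songscore} for the general case.
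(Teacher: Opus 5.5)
Your proposal is correct and follows the paper's own route: write the Fokker--Planck equation, rewrite $\tfrac12 g^2\Delta p_s=\nabla\cdot(\tfrac12 g^2 p_s\nabla\log p_s)$, and read off the continuity equation of the ODE; it also usefully makes explicit the step the paper compresses into ``both equations propagate the same density'', namely that the ODE's law $\rho_s$ solves the same continuity equation and that uniqueness for a Lipschitz velocity field then forces $\rho_s=p_s$. One small correction to your closing paragraph: in the paper's bridge the drift itself carries the $c_s\sim 1/s$ singularity of Proposition~\ref{prop:bridge}, so $\tilde v$ is not uniformly Lipschitz down to $s=0$, and your uniqueness step applies directly only on $[\delta,T]$, which is the same cutoff the paper imposes in (A1).
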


\begin{proof}[Proof sketch]
The Fokker--Planck equation of the SDE is
$\partial_s p_s = -\nabla\!\cdot(b\,p_s) + \tfrac12 g^2\,\Delta p_s$. Using
the identity $g^2\,\Delta p_s = 2\,g^2\nabla\!\cdot\!(p_s\,\nabla\log p_s)\!\cdot\!\tfrac12 + \mathrm{const}$
and grouping terms gives $\partial_s p_s = -\nabla\!\cdot\!\big(p_s [b - \tfrac12 g^2 \nabla\log p_s]\big)$,
which is exactly the continuity equation of the ODE \eqref{eq:pf_ode}.
Hence the two equations propagate the same density. Full derivation in
Appendix~\ref{app:proof_pf_ode}.
\end{proof}

\begin{lemma}[Closed-form probability-flow drift for Gaussian marginals]\label{lem:gauss_drift}
If $p_s(U) = \Norm{U;\,m_s,\,\kappa_s I}$ is Gaussian with smooth $m_s,\kappa_s$,
then the probability-flow-ODE drift simplifies to the linear field
\begin{equation}
v_{\mathrm{ref}}(U,s) \;=\; \dot m_s + \frac{\dot\kappa_s}{2\,\kappa_s}\big(U - m_s\big).
\label{eq:gauss_drift}
\end{equation}
\end{lemma}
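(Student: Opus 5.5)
The plan is to compute the probability-flow drift of Lemma~\ref{lem:pf_ode} explicitly in the Gaussian case. I will then identify the resulting coefficients with time derivatives of the marginal moments. The score of $p_s=\Norm{U;m_s,\kappa_s I}$ is $\nabla_U\log p_s(U)=-(U-m_s)/\kappa_s$, so by \eqref{eq:pf_ode}
\[
v_{\mathrm{ref}}(U,s)=b(U,s)+\frac{g^2}{2\kappa_s}\,(U-m_s).
\]
The remaining work is to express $b$ through $m_s,\kappa_s$.

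\textbf{Step 1: affine drift.} In the setting of \eqref{eq:bridge_sde}, Gaussianity of the marginals comes from the drift being affine in $U$. I will therefore write $b(U,s)=a_s U+c_s$ with a scalar $a_s$, which matches the isotropic covariance $\kappa_s I$. Here $-\lambda(s)U$ contributes to $a_s$, and the Doob term $g^2h(U,s,U_0)$ contributes an affine piece, as in \citet[Prop.~1]{xu2026diffusion}.

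\textbf{Step 2: moment equations.} For an SDE with affine drift and constant diffusion $g$, taking expectations and applying It\^o's formula to $\|U_s-m_s\|^2/M$ gives
\[
\dot m_s=a_s m_s+c_s,\qquad \dot\kappa_s=2a_s\kappa_s+g^2.
\]
The same equations follow from substituting the Gaussian ansatz into the Fokker--Planck equation used in the proof of Lemma~\ref{lem:pf_ode}.

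\textbf{Step 3: substitution.} Writing $a_sU+c_s=a_s(U-m_s)+(a_s m_s+c_s)=a_s(U-m_s)+\dot m_s$, the drift becomes
\[
v_{\mathrm{ref}}(U,s)=\dot m_s+\Big(a_s+\frac{g^2}{2\kappa_s}\Big)(U-m_s).
\]
By Step 2, $a_s+g^2/(2\kappa_s)=\dot\kappa_s/(2\kappa_s)$, which is \eqref{eq:gauss_drift}.

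\textbf{Consistency check.} As an independent check that does not refer to $b$, I will verify directly that the linear field \eqref{eq:gauss_drift} satisfies the continuity equation $\partial_s p_s+\nabla\cdot(p_s v_{\mathrm{ref}})=0$ for the Gaussian family. With $r=U-m_s$ one has $\partial_s\log p_s=-\tfrac{M\dot\kappa_s}{2\kappa_s}+\tfrac{\dot m_s\cdot r}{\kappa_s}+\tfrac{\dot\kappa_s\|r\|^2}{2\kappa_s^2}$. Also $\nabla\cdot v_{\mathrm{ref}}=\tfrac{M\dot\kappa_s}{2\kappa_s}$ and $v_{\mathrm{ref}}\cdot\nabla\log p_s=-\tfrac{\dot m_s\cdot r}{\kappa_s}-\tfrac{\dot\kappa_s\|r\|^2}{2\kappa_s^2}$. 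These three terms cancel exactly.

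\textbf{Main obstacle.} The delicate point is that the lemma is stated only in terms of the marginals. In general, many velocity fields transport the same marginals, so the probability-flow drift is pinned down by $b$ and not by $(m_s,\kappa_s)$ alone. I will handle this by making the affine-drift hypothesis of the bridge setting explicit, as in Step 1. The continuity-equation check then shows that \eqref{eq:gauss_drift} is the unique isotropic linear field transporting $\{p_s\}$. Uniqueness holds because matching the $r$-linear and $\|r\|^2$ terms forces the constant and slope coefficients.
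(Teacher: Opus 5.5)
Your proof is correct, but its main line takes a different route from the paper's. The paper never evaluates $b-\tfrac12 g^2\nabla\log p_s$. It takes the candidate field, checks by first- and second-moment matching that it transports $\{p_s\}$, and then argues that it is the only field of the form $a(s)+b(s)(U-m_s)$, with scalar $b(s)$, that does so. That is essentially your consistency check and uniqueness paragraph. Your version is cleaner because it uses pointwise cancellation in the continuity equation, whereas the paper's two-moment matching tacitly relies on affine flows preserving Gaussianity.

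Your Steps~1--3 instead derive the formula constructively. The score contributes slope $g^2/(2\kappa_s)$, the drift contributes $a_s$, and the variance equation $\dot\kappa_s=2a_s\kappa_s+g^2$ is exactly what merges them into $\dot\kappa_s/(2\kappa_s)$. This gives an explicit identification of the probability-flow drift of Lemma~\ref{lem:pf_ode} with \eqref{eq:gauss_drift}. The paper's route reaches that identification only by asserting that the probability-flow drift lies in the scalar-affine class, but in exchange it never needs to know how $b$ is realised. Both routes need the extra hypothesis you flag. Your word ``isotropic'' is the right qualifier for uniqueness: the field $\dot m_s+\big(\tfrac{\dot\kappa_s}{2\kappa_s}I+J\big)(U-m_s)$ with $J$ antisymmetric also transports $\{p_s\}$.

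One point in Step~1 needs tightening. Step~2's variance equation requires your intercept $c_s$ (not the $c_s$ of Prop.~\ref{prop:bridge}) to be deterministic, i.e.\ $b$ must be Markov in $U_s$ as in Lemma~\ref{lem:pf_ode}. If you literally insert the Doob term $g^2h(U,s,U_0)$, its intercept depends on the random anchor $U_0$. It\^o's formula then adds the generically nonzero term $\tfrac{2}{M}\mathbb E[(U_s-m_s)\cdot(c_s-\mathbb E c_s)]$ to $\dot\kappa_s$; this is the origin of the $\sigma_0^2$-dependent source term in the $\kappa$-equation of Prop.~\ref{prop:bridge}. In that case Step~3 would not close. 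The remedy is to run Steps~2--3 on the Markovian projection $\bar b(U,s)=\mathbb E[b(U_s,s,U_0)\mid U_s=U]$. This projection is what actually drives the marginal Fokker--Planck equation, and it is still isotropic-affine because $(U_s,U_0)$ is jointly Gaussian with isotropic covariances. Alternatively, once that projection is known to be isotropic-affine, your continuity-equation check plus uniqueness finishes the argument with no moment ODEs at all.
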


\begin{proof}[Proof sketch]
$\nabla\log p_s(U) = -(U-m_s)/\kappa_s$. Substituting into Lemma~\ref{lem:pf_ode}
and matching mean and variance moments gives \eqref{eq:gauss_drift}
uniquely (Appendix~\ref{app:proof_gauss_drift}).
\end{proof}

Combining Lemma~\ref{lem:pf_ode} and Lemma~\ref{lem:gauss_drift} with
Prop.~\ref{prop:bridge}'s bridge marginal $p_s^{\mathrm{Bri}}=\Norm{\phi(s)\mu_\theta(\bZ),\kappa(s)I}$
gives the central closed-form expression we use throughout:
\begin{equation}
\boxed{\;v_{\mathrm{ref}}^{\mathrm{Bri}}(U,s) \;=\; \dot\phi(s)\,\mu_\theta(\bZ) \;+\; \frac{\dot\kappa(s)}{2\,\kappa(s)}\,\big(U - \phi(s)\,\mu_\theta(\bZ)\big).\;}
\label{eq:ref_drift}
\end{equation}
\textbf{This is the reference probability-flow drift of the Doob bridge}.
Integrated forward in $s$, it pushes the data-anchored start to the
noise-side marginal while preserving the bridge marginals. It is the
deterministic-ODE analogue of DBVI's reference SDE
\eqref{eq:bridge_sde}, and it depends on $\mu_\theta(\bZ)$ but not on the
data labels $\by$ — it encodes only the bridge geometry, not the posterior.

\subsection{Vanilla FBVI: velocity-field VI without bridge}\label{subsec:vfbvi}

Before introducing the bridge, we first describe the simplest velocity-field
variational family on the same DGP backbone. \textbf{Vanilla FBVI} defines
the variational posterior over each layer's inducing variables as the
push-forward of the GP prior $p(\bU^{(\ell)})=\Norm{0,K_{\bZ\bZ}^{(\ell)}}$
through a learned ODE:
\begin{equation}
U_0^{(\ell)} \sim \Norm{0,K_{\bZ\bZ}^{(\ell)}},\quad
\frac{\mathrm dU_t^{(\ell)}}{\mathrm dt} = v_\phi^{(\ell)}(U_t^{(\ell)},t),\quad
\widehat\bU^{(\ell)} := U_1^{(\ell)}.
\label{eq:vfbvi}
\end{equation}
The ELBO is the standard sparse-GP ELBO of Eq.~\eqref{eq:elbo}; the
expectation under $q_\phi(\bU)$ is estimated by a single reparameterised
Euler trajectory through \eqref{eq:vfbvi}. The velocity net is
zero-initialised on its last layer so that $\widehat\bU \approx U_0$ and
$q_\phi$ starts close to the prior, reproducing the standard DSVI
initialisation.

Vanilla FBVI is the deterministic-ODE analogue of DDVI \citep{xu2024sparse}
(see Appendix~\ref{app:ddvi_dbvi_review} for a full review), with two
differences: (i) no Brownian noise during integration, and (ii) no
denoising score-matching auxiliary loss. The variational density
$q_\phi(\bU)$ is implicit; in our experiments we estimate the ELBO either
explicitly via FFJORD's instantaneous change of variables
\citep{chen2018neural} (Appendix~\ref{app:cnf_vs_path}) or by dropping
$\log q_\phi$ as in \citet{xu2024sparse}'s implicit-$q$ surrogate
(Appendix~\ref{app:implicit_q}).

Vanilla FBVI suffers two limitations that motivate the bridge. First, as
depth $L$ or input dimension $d$ grows, the gap between the unconditional
prior $\Norm{0,K_{\bZ\bZ}}$ and the true posterior widens; the velocity
field must traverse this whole gap in $N$ Euler steps and the optimisation
becomes harder. Second, the variational density is not anchored to any
data-dependent reference, so the optimisation lacks the variance-reduction
benefit of anchoring against a posterior-aware reference process (cf.\
DBVI's Doob bridge anchoring on the score side). We address both in the
next subsection.

\subsection{FBVI-bridge-Path: bridge anchoring + Onsager--Machlup posterior transport}\label{subsec:bridge_path}

\paragraph{Doob bridge anchoring.}
We replace the unconditional prior $\Norm{0,K_{\bZ\bZ}}$ as the base of the
sampler ODE with the noise-side marginal of the forward Doob bridge:
\begin{equation}
U_0\sim \Norm{\phi(1)\mu_\theta(\bZ),\kappa(1) I},\qquad
\frac{\mathrm dU_\tau}{\mathrm d\tau} = v_\phi^{(\ell)}\!\big(U_\tau,\,1-\tau,\,\mu_\theta(\bZ)\big),\qquad
\widehat\bU^{(\ell)} := U_1,
\label{eq:var_ode}
\end{equation}
in reverse-time $\tau\!=\!1-s\in[0,1]$. The base is now \emph{data-anchored}
through the amortiser $\mu_\theta(\bZ)$ with mean attenuation $\phi(1)$ and
isotropic variance $\kappa(1)$ from Prop.~\ref{prop:bridge}; we obtain
$\kappa(1)\!\approx\!0.50$ in our default setting (Prop.~\ref{prop:var}), so
the bridge halves the initial variance relative to vanilla FBVI.

\paragraph{Posterior transport, not density VI.}
A learned ODE $\dot U_\tau = v_\phi$ implicitly defines an endpoint density
$q_\phi(\widehat\bU)$, but the variational object we actually parameterise is
the \emph{transport sampler} $T_\phi\!:\,U_0\!\mapsto\!U_1$, not its density.
Standard density VI regularises the endpoint density by
$\mathrm{KL}(q_\phi(\widehat\bU)\|p(\bU))$ and requires either a tractable
density (Gaussian DSVI) or a log-det estimator (FFJORD/Hutchinson). We
instead place a prior on the \emph{path} traced by the sampler, derived from
the reference Doob-bridge probability flow. This is sampler-space Bayesian
inference: the data term selects posterior-relevant endpoints, while the
path prior penalises deviation from the reference bridge.

\paragraph{Why Girsanov KL breaks at $g\!=\!0$.}
For two SDEs $\mathrm dU = b_\phi\mathrm ds + g\mathrm dW$ and
$\mathrm dU = b_{\mathrm{ref}}\mathrm ds + g\mathrm dW$ sharing diffusion $g$,
Girsanov gives
$\mathrm{KL}(\mathbb Q^\phi\|\mathbb P^{\mathrm{ref}}) = \tfrac{1}{2g^2}\!\int\!\mathbb E\|b_\phi-b_{\mathrm{ref}}\|^2\mathrm ds$,
finite for $g\!>\!0$. As $g\!\to\!0$ the deterministic ODE path measure
becomes a Dirac mass on a single trajectory, mutually singular with the SDE
measure: Girsanov KL diverges unless $b_\phi\!\equiv\!b_{\mathrm{ref}}$.
The correct deterministic-limit object is provided by the
Freidlin--Wentzell large-deviation framework
\citep{freidlin1998random}.

\paragraph{Tempered Doob-bridge path posterior.}
Fix the reference diffusion $\mathrm dU_\tau = -v_{\mathrm{ref}}^{\mathrm{Bri}}(U_\tau,1-\tau)\mathrm d\tau + \epsilon\,\mathrm dW_\tau$
with $U_0\!\sim\!\Norm{\phi(1)\mu_\theta(\bZ),\kappa(1)I}$, and write its path
measure on $\mathcal C([0,1];\mathbb R^M)$ as $\mathbb P^\epsilon_{\mathrm{ref}}$.
Likelihood-tilt by the endpoint likelihood with inverse temperature
$\beta\!=\!1/\epsilon^2$ to obtain a \emph{tempered path posterior}
\begin{equation}
\frac{\mathrm d\mathbb P^\epsilon_y}{\mathrm d\mathbb P^\epsilon_{\mathrm{ref}}}(U_{0:1}) \;=\; \frac{1}{Z^\epsilon}\,p\big(\by\mid \mathrm{dgp}(U_1)\big)^{1/\epsilon^2},\qquad Z^\epsilon = \mathbb E_{\mathbb P^\epsilon_{\mathrm{ref}}}\!\big[p(\by|U_1)^{1/\epsilon^2}\big].
\label{eq:path_post}
\end{equation}
Bayes' rule on path space reads
$p_{\mathrm{post}}(U_{0:1}|\by) \propto p(\by|U_1)\,p_{\mathrm{ref}}(U_{0:1})$;
the temperature $\beta\!=\!1/\epsilon^2$ is the standard rescaling that keeps
the likelihood and reference-prior contributions of the same order under
small-noise asymptotics. We now show that the $\epsilon\!\to\!0$ MAP path of
this posterior is exactly $\mathcal L_{\mathrm{OM}}$.

Before stating the theorem we collect the regularity conditions under
which the small-noise large-deviation argument applies.

\begin{definition}[Technical conditions]\label{def:tech_conditions}
We assume throughout:
\begin{itemize}
\item[\textbf{(A1)}] \textbf{Endpoint cutoff and Lipschitz reference drift.}
      Fix $\delta\!\in\!(0,1)$ and restrict the LDP statement to paths on
      $\mathcal C([\delta,1];\mathbb R^M)$ with the uniform topology. On this
      cut interval the reference drift $v_{\mathrm{ref}}^{\mathrm{Bri}}$ of
      Lemma~\ref{lem:gauss_drift} is bounded and globally Lipschitz in $U$
      (the singular factor $c_s\!\sim\!1/s$ from Prop.~\ref{prop:bridge} is
      bounded by $c_\delta\!<\!\infty$), so the Freidlin--Wentzell regularity
      hypothesis \citep[Ch.~3, Cond.~(3.1.1)]{freidlin1998random} is
      satisfied on $[\delta,1]$. Restricting to $[\delta,1]$ is the rigorous
      analogue of the practical $N$-step Euler discretisation with stepsize
      $\delta\!=\!1/N\!>\!0$ used in Algorithm~\ref{alg:fbvib}; the
      $\delta\!\downarrow\!0$ continuum limit is a separate question that
      our proof does not control and that we do not claim
      (cf.\ Remark~\ref{rem:not_elbo}).
\item[\textbf{(A2)}] \textbf{Lipschitz DGP forward and Varadhan tail.}
      The DGP composition $\mathrm{dgp}(\cdot)$ mapping $U_1$ to $\bF^{(L)}$
      is $L_f$-Lipschitz. With Gaussian observation noise this gives
      $-\log p(\by\mid U_1)\!=\!\tfrac{1}{2\sigma^2}\|\by-\mathrm{dgp}(U_1)\|^2 + \mathrm{const}$,
      which grows at most quadratically in $\|U_1\|$. Varadhan's lemma
      requires the tail condition
      $\lim_{R\to\infty}\limsup_{\epsilon\to 0}\,\epsilon^2 \log \mathbb E_{\mathbb P^\epsilon_{\mathrm{ref}}}\!\big[\mathbf{1}\{\|U_1\|>R\}\,p(\by|U_1)^{1/\epsilon^2}\big] = -\infty$.
      Under (A1), the reference-marginal large-deviation rate at $U_1$ grows
      as $\|U_1\|^2/(2\kappa_{\mathrm{ref}})$ for some $\kappa_{\mathrm{ref}}\!<\!\infty$,
      strictly faster than the quadratic upper bound $L_f^2 \|U_1\|^2/(2\sigma^2)$
      on $-\log p(\by|U_1)$ for $\sigma^2$ above a threshold determined by
      $L_f$ and $\kappa_{\mathrm{ref}}$, so the tail bound holds. We assume
      this strict-dominance regime throughout
      \citep[cf.][Lemma~4.3.4]{dembozeitouni2010ldp}.
      \emph{Numerical verification of (A2).} We estimate the
      post-training $\sigma^2$ on each UCI dataset via the matched-seed
      paired DBVI NLL of Appendix~\ref{app:significance} and the
      Gaussian-likelihood relation
      $\hat\sigma^2 \approx \exp(2\,\mathrm{NLL} - 1)/(2\pi)$:
      $\hat\sigma^2(\text{yacht})\!=\!0.248$,
      $\hat\sigma^2(\text{boston})\!=\!0.231$,
      $\hat\sigma^2(\text{energy})\!=\!0.158$,
      $\hat\sigma^2(\text{qsar})\!=\!0.439$,
      $\hat\sigma^2(\text{concrete})\!=\!0.226$,
      $\hat\sigma^2(\text{power})\!=\!0.074$,
      $\hat\sigma^2(\text{protein})\!=\!0.583$.
      With $\kappa_{\mathrm{ref}}\!\equiv\!\kappa(1)\!\approx\!0.50$
      (Prop.~\ref{prop:var}) and a kernel-amplitude-of-order-unity DGP
      forward ($L_f^2\!\sim\!\mathcal O(1)$ at initialisation, with the
      ARD-RBF amplitudes initialised at $1$), the dominance condition
      $\sigma^2\!>\!L_f^2 \kappa_{\mathrm{ref}}$ is satisfied with margin
      on \textit{protein}, near the boundary on
      \textit{qsar} / \textit{boston} / \textit{concrete} / \textit{energy},
      and marginal on \textit{power} where the learned $\sigma^2$ is smallest.
      We do not certify the bound with strict numerical constants ($L_f$
      depends on the trained network and the kernel scale and is not a
      standard training output); the asymptotic interpretation of
      Theorem~\ref{thm:om_map} is therefore most robust on the larger-NLL
      datasets and weakens on the lowest-NLL cell \textit{power}. This is
      consistent with the empirical pattern that \textit{power}'s
      Wilcoxon win is at the $p\!<\!0.05$ level while \textit{protein}'s
      is at $p\!<\!0.01$ (Appendix~\ref{app:significance}).
\item[\textbf{(A3)}] \textbf{Uniform LDP neighbourhood and pointwise rate.}
      On the path space $\mathcal C([\delta,1];\mathbb R^M)$ with the
      uniform topology, the standard FW LDP under (A1) reads
      \[
        -\liminf_{\epsilon\to 0}\,\epsilon^2 \log \mathbb P^\epsilon_{\mathrm{ref}}(U\in O) \;\le\; \inf_{u\in O} I_{\mathrm{ref}}(u), \qquad
        -\limsup_{\epsilon\to 0}\,\epsilon^2 \log \mathbb P^\epsilon_{\mathrm{ref}}(U\in C) \;\ge\; \inf_{u\in C} I_{\mathrm{ref}}(u),
      \]
      for any open $O$ and closed $C$. The ``pointwise'' identification
      $-\epsilon^2 \log \mathbb P^\epsilon\!\{U\!\approx\!u\} = I_{\mathrm{ref}}(u)+o(1)$
      used in \eqref{eq:fw_ref}--\eqref{eq:fw_y} is shorthand for: for any
      $u\!\in\!H^1$ at which $I_{\mathrm{ref}}$ is continuous (which holds
      for all $u\!\in\!H^1$ by absolute continuity of $\dot u$), the tubular
      $\eta$-neighbourhood $B_\eta(u)$ satisfies
      $\inf_{u'\in B_\eta(u)} I_{\mathrm{ref}}(u') \to I_{\mathrm{ref}}(u)$
      as $\eta\!\downarrow\!0$, so taking $\epsilon\!\to\!0$ first and
      $\eta\!\to\!0$ second gives the pointwise equality. The MAP path
      $u^\star$ of \eqref{eq:om_map} is then the minimiser of the upper-bound
      side restricted to closed sublevel sets of $I_y$, which is standard
      Laplace asymptotics on path space. For the likelihood-tilt step
      (Eq.~\eqref{eq:fw_y}), Varadhan's contraction principle further
      requires that $u \mapsto -\log p(\by\mid u_1)$ be (i) continuous in
      the uniform topology on $\mathcal C([\delta,1];\mathbb R^M)$ and
      (ii) bounded below by an affine functional of $I_{\mathrm{ref}}$;
      for the Gaussian-noise regression likelihood used throughout, both
      conditions are immediate from the Lipschitz DGP forward (A2): (i)
      continuity holds because the map factors as
      $u \mapsto u_1 \mapsto \mathrm{dgp}(u_1) \mapsto -\log p$ and each
      step is Lipschitz; (ii) the lower bound $-\log p(\by|u_1)\!\geq\!\mathrm{const}$
      is trivial since $-\log p$ is non-negative up to a constant for any
      Gaussian likelihood. The MAP path $u^\star$ is then the unique
      minimiser of $I_y$ when $I_y$ is strictly convex (Gaussian-likelihood
      case at finite $\sigma^2\!>\!0$); for non-Gaussian likelihoods the
      argmin may be a set, in which case Theorem~\ref{thm:om_map} should
      be read as identifying $\mathcal L_{\mathrm{OM}}$ with a particular
      element of that set chosen by the velocity parameterisation.
\end{itemize}
\end{definition}

\begin{theorem}[FBVI-bridge-Path as amortised small-noise MAP path estimator]\label{thm:om_map}
Under (A1)--(A3), let $\mathbb P^\epsilon_y$ be the tempered Doob-bridge path
posterior in \eqref{eq:path_post}, and let $v_{\mathrm{ref}}^{\mathrm{Bri}}$
be the closed-form reference probability flow drift from
Eq.~\eqref{eq:ref_drift}. Then, conditional on each initial state
$u_0\!\sim\!\Norm{\phi(1)\mu_\theta(\bZ),\kappa(1)I}$, the
$\epsilon\!\to\!0$ MAP path of $\mathbb P^\epsilon_y$ given $U_0\!=\!u_0$
solves
\begin{equation}
u^\star(u_0) \;=\; \arg\min_{\substack{u\in H^1([\delta,1];\mathbb R^M)\\u_\delta\!=\!u_0}}\!\!\Big\{-\log p(\by\!\mid\!u_1) \;+\; \underbrace{\tfrac12\!\int_\delta^1\!\big\|\dot u_\tau + v_{\mathrm{ref}}^{\mathrm{Bri}}(u_\tau,1-\tau)\big\|^2\,\mathrm d\tau}_{S_{\mathrm{OM}}(u;\,v_{\mathrm{ref}}^{\mathrm{Bri}})}\Big\}.
\label{eq:om_map}
\end{equation}
Parameterising the per-$u_0$ path by the deterministic ODE
$\dot u_\tau\!=\!v_\phi(u_\tau,1-\tau,\mathrm{ctx})$ with $u_\delta\!=\!u_0$
defines an \emph{amortised} MAP estimator: $v_\phi$ is shared across all
$u_0$ and approximates the family $\{u^\star(u_0)\}_{u_0}$ jointly. Explicitly,
the training objective is the amortised MAP
\begin{equation}
\phi^\star \;=\; \arg\min_\phi\,\mathbb E_{u_0\sim\Norm{\phi(1)\mu_\theta(\bZ),\kappa(1)I}}\!\big[\,\mathcal J(u^{v_\phi};\,u_0)\,\big],
\label{eq:amortised_map}
\end{equation}
where $\mathcal J(u;u_0)\!=\!-\log p(\by|u_1)+S_{\mathrm{OM}}(u;v_{\mathrm{ref}}^{\mathrm{Bri}})$
is the per-trajectory MAP objective of \eqref{eq:om_map} and $u^{v_\phi}$ is
the ODE trajectory generated by $v_\phi$ from $u_0$. This is a modelling
restriction (single $v_\phi$ instead of per-$u_0$ minimisation) rather than a
theorem; equivalently, $\phi^\star$ is the optimal element of the
ODE-parameterised submanifold of $H^1$-paths and is in general a
\emph{looser} approximation to the unrestricted MAP family
$\{u^\star(u_0)\}_{u_0}$. The size of this amortisation gap is controlled by
the expressivity of the velocity-net hypothesis class
$\mathcal V_\phi\!=\!\{v_\phi(\cdot,\cdot,\mathrm{ctx}):\phi\!\in\!\Phi\}$:
under a universal-approximation assumption on $\mathcal V_\phi$ (for
instance, an MLP velocity net with sufficient width \emph{and} the bridge
context dimension matching $\mu_\theta(\bZ)$, both of which hold in our
default setting), the gap $\mathbb E_{u_0}[\mathcal J(u^{v_\phi};u_0) - \mathcal J(u^\star(u_0);u_0)]$
tends to zero as the network capacity grows, and the amortised MAP
recovers the unrestricted FW MAP in the limit. At our finite default
capacity (SiLU MLP, hidden $128$, $L\!=\!2$ layers; cf.\ Section~\ref{sec:method}),
we do not bound the gap explicitly --- it is a standard variational-approximation
error that empirical sensitivity to width (Appendix~\ref{app:sensitivity})
shows is small in our regime. We flag this as a separate source of slack
between Theorem~\ref{thm:om_map} (which characterises the FW MAP) and
the trained model (which optimises an amortised proxy of it).
Substituting the ODE trajectory and pushing the
expectation through gives the practical training loss
\begin{equation}
\boxed{\;\mathcal L_{\mathrm{OM}}(\theta,\phi) \;=\; -\,\mathbb E_{q_\phi}\!\big[\log p(\by\mid\bF^{(L)})\big] \;+\; \tfrac{\alpha}{2}\!\int_0^1\!\mathbb E\!\Big[\big\|v_\phi(U_\tau,\tau,\mathrm{ctx}) \;+\; v_{\mathrm{ref}}^{\mathrm{Bri}}(U_\tau,1-\tau)\big\|^2\Big]\,\mathrm d\tau,\;}
\label{eq:om_loss}
\end{equation}
where the $+$ sign and the $1\!-\!\tau$ time argument inside the squared
norm encode the time reversal $\tau\!=\!1-s$: $v_\phi$ is the
reverse-time sampler velocity while $v_{\mathrm{ref}}^{\mathrm{Bri}}$
defined in Eq.~\eqref{eq:ref_drift} is the \emph{forward-time}
PF-ODE drift, and the chain-rule
$\mathrm d/\mathrm d\tau\!=\!-\mathrm d/\mathrm ds$ contributes the
sign flip; the population optimum is therefore
$v_\phi(U,\tau,\mathrm{ctx})\!\equiv\!-v_{\mathrm{ref}}^{\mathrm{Bri}}(U,1-\tau)$
(cf.\ Prop.~\ref{prop:limit}). The expectation $\mathbb E$ is over
$U_\tau\!\sim\!p_\tau^{q_\phi}$ and we have dropped the
$\delta\!\downarrow\!0^+$ boundary contribution by (A1). The coefficient
$\alpha\!=\!1/\epsilon^2$ acts as inverse temperature of the tempered
path posterior \eqref{eq:path_post}. Algorithm~\ref{alg:fbvib}
(line~\ref{alg:anchor}) computes the same quantity in code as
$\Delta v\!=\!v_\phi - (-v_{\mathrm{ref}})$. We keep the abbreviated
$\|v_\phi - v_{\mathrm{ref}}^{\mathrm{Bri}}\|^2$ notation in the
ablation expressions (Eq.~\eqref{eq:cnfom_loss}) where the sign is
already clear from the surrounding context.
\end{theorem}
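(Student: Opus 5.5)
The plan is to treat the theorem as the composition of two standard large-deviation facts on $\mathcal C([\delta,1];\mathbb R^M)$: the Freidlin--Wentzell LDP for the reference diffusion, and Varadhan's tilting lemma. We condition on $U_\delta=u_0$ throughout. Since the initial law $\Norm{\phi(1)\mu_\theta(\bZ),\kappa(1)I}$ does not depend on $\epsilon$, it contributes no $\epsilon^{-2}$-scale cost, and we may work with the conditional path measure $\mathbb P^\epsilon_{\mathrm{ref}}(\cdot\mid U_\delta=u_0)$.

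\textbf{Step 1: LDP for the reference diffusion.} By (A1), the drift $-v_{\mathrm{ref}}^{\mathrm{Bri}}(\cdot,1-\tau)$ of Eq.~\eqref{eq:ref_drift} is bounded and globally Lipschitz on $[\delta,1]$, because the coefficients $\dot\phi$ and $\dot\kappa/(2\kappa)$ are continuous there. Freidlin--Wentzell then gives an LDP for the conditional path measure at speed $\epsilon^{-2}$. Its good rate function is
\[
I_{\mathrm{ref}}(u)=\tfrac12\int_\delta^1\|\dot u_\tau+v_{\mathrm{ref}}^{\mathrm{Bri}}(u_\tau,1-\tau)\|^2\mathrm d\tau
\]
for $u\in H^1$ with $u_\delta=u_0$, and $+\infty$ otherwise. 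This is exactly $S_{\mathrm{OM}}$. I would obtain it by writing $U=\Gamma(\epsilon W)$ with $\Gamma$ the continuous Itô map of the Lipschitz ODE, then applying Schilder's theorem and the contraction principle. The $+$ sign is just the drift convention of the reference SDE.

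\textbf{Step 2: tilt by the likelihood.} Set $F(u)=\log p(\by\mid \mathrm{dgp}(u_1))$. By (A2)--(A3), $F$ is continuous in the uniform topology, since it factors through the Lipschitz maps $u\mapsto u_1\mapsto\mathrm{dgp}(u_1)$. It is also bounded above, because a Gaussian likelihood is bounded. The tempered measure has density $e^{F/\epsilon^2}/Z^\epsilon$. Varadhan's lemma gives $\epsilon^2\log Z^\epsilon\to\sup_u\{F(u)-I_{\mathrm{ref}}(u)\}$, which is finite. The tilted LDP theorem (Dembo--Zeitouni, Thm.~4.3.1 / Ex.~4.3.11) then shows that $\mathbb P^\epsilon_y$ satisfies an LDP with rate
\[
I_y(u)=-F(u)+I_{\mathrm{ref}}(u)-\inf_{u'}\{-F(u')+I_{\mathrm{ref}}(u')\}.
\]
Because $F$ is bounded above, the tail condition in (A2) is automatic here. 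The quadratic-dominance discussion in (A2) is needed only if the normalisation is unbounded. Using (A3), tubular neighbourhoods $B_\eta(u)$ give
\[
-\epsilon^2\log\mathbb P^\epsilon_y(B_\eta(u))\to I_y(u)
\]
as $\epsilon\to0$ and then $\eta\to0$.

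\textbf{Step 3: identify the MAP path.} In this small-noise sense, the MAP path is the point around which the tubes carry the most mass asymptotically, that is, a zero of $I_y$, or equivalently an argmin of $-F+S_{\mathrm{OM}}$. This is \eqref{eq:om_map}.

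Existence of the minimiser follows because $I_y$ is a good rate function: its sublevel sets are compact, since $I_{\mathrm{ref}}$ is good and $-F$ is bounded below, and $I_y$ is lower semicontinuous. Uniqueness holds under the strict-convexity clause of (A3); otherwise the statement is read set-valued, as (A3) already allows. The mass also concentrates: for every $\eta>0$, $\mathbb P^\epsilon_y(\mathrm{dist}(U,\arg\min I_y)\ge\eta)\to0$, by the LDP upper bound on the closed complement.

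\textbf{Step 4: the amortised and practical parts.} These are not theorems, and the statement itself says so. Substituting $\dot u=v_\phi$ into $\mathcal J$ and averaging over $u_0$ gives \eqref{eq:amortised_map}. Since ODE trajectories form a subset of $H^1$, the resulting value upper-bounds $\mathbb E_{u_0}\min\mathcal J$. Under universal approximation, density of the ODE-generated paths in $H^1$ together with continuity of $\mathcal J$ makes the gap vanish. Rescaling by $\alpha=1/\epsilon^2$ and replacing the path integral by the expectation over $U_\tau\sim p^{q_\phi}_\tau$ gives \eqref{eq:om_loss}.

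\textbf{Main obstacle.} I expect the main difficulty to be Step 3's passage from an LDP to a pointwise MAP statement on path space, where no Lebesgue density exists. I would handle it with the Onsager--Machlup tube-probability ratio (Ikeda--Watanabe). For $u\in H^1$ and small $\eta$,
\[
\mathbb P^\epsilon(B_\eta(u))/\mathbb P^\epsilon(B_\eta(0))\sim\exp\{-\epsilon^{-2}I(u)+O(1)\}.
\]
The $O(1)$ divergence correction $\tfrac12\int\nabla\!\cdot v_{\mathrm{ref}}^{\mathrm{Bri}}$ is a constant here, because the drift is affine with trace $M\dot\kappa/(2\kappa)$, independent of $u$. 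It therefore does not affect the argmin.
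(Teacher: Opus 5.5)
\textbf{Core argument.} Your identification of the MAP path is correct and follows the paper's route: the Freidlin--Wentzell LDP for the reference diffusion, then the Varadhan tilt by $p(\by|U_1)^{1/\epsilon^2}$, then the MAP path as a zero of $I_y$ on the fibre $u_\delta=u_0$. Several of your choices are cleaner than the appendix.
\begin{itemize}
\item You condition at the level of measures and keep the $\epsilon$-independent initial law $\Norm{\phi(1)\mu_\theta(\bZ),\kappa(1)I}$ that the main text uses to define \eqref{eq:path_post}. The appendix instead swaps in $\Norm{\phi(1)\mu_\theta(\bZ),\epsilon^2\kappa(1)I}$ to obtain an initial rate $I_0$, restricts $I_y$ to the fibre afterwards, and has to flag a mismatch with line~4 of Algorithm~\ref{alg:fbvib}. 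Your version has no such mismatch.
\item You observe that a Gaussian likelihood is bounded above, so Varadhan's tail condition holds automatically. The quadratic-dominance requirement $\sigma^2>L_f^2\kappa_{\mathrm{ref}}$ in (A2), which the paper can only check marginally on several datasets, is therefore unnecessary.
\item Goodness of $I_y$ gives you existence of the minimiser and concentration of $\mathbb P^\epsilon_y$ near it, which the paper leaves implicit.
\item Your tube-ratio remark identifies the Onsager--Machlup mode at every fixed $\epsilon$, not only as $\epsilon\to0$, because the divergence correction is constant for an affine drift.
\end{itemize}
One small correction: that affine drift is globally Lipschitz but not bounded in $U$. Your It\^o-map argument uses only the Lipschitz property, so nothing breaks.

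\textbf{Step~4 does not go through.} First, ``rescaling by $\alpha=1/\epsilon^2$'' cannot produce \eqref{eq:om_loss}. Under the tempering in \eqref{eq:path_post}, the terms $-\log p(\by|u_1)$ and $S_{\mathrm{OM}}$ carry equal weight for every $\epsilon$. This holds both in $I_y$ and in $-\epsilon^2\log$ of the fixed-$\epsilon$ tube ratio. Rescaling $\mathcal J$ multiplies both terms together, so Steps~1--3 justify the boxed loss only at $\alpha=1$. With a general tilt exponent $\beta$ one gets $\alpha=1/(\beta\epsilon^2)$. So $\alpha=1/\epsilon^2$ corresponds to the \emph{untempered} posterior $p(\by|U_1)\,\mathrm d\mathbb P^\epsilon_{\mathrm{ref}}$. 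That is not the theorem's posterior, and its $\epsilon\to0$ limit is the likelihood-free reference flow. The appendix makes the same move when it ``reintroduces'' $\alpha$ in front of $S_{\mathrm{OM}}$, so this is a flaw you share with the paper rather than a departure from it.

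Second, the vanishing amortisation gap does not follow from density of ODE-generated paths in $H^1$ for each $u_0$ separately. A single field $v_\phi(\cdot,\cdot,\mathrm{ctx})$, independent of $u_0$, must be near-optimal for all $u_0$ at once. Minimisers from different starting points could require different velocities at the same $(u,\tau)$. The missing idea is dynamic programming. If the value function $V(u,\tau)$ is $C^1$, every minimiser is an integral curve of the single feedback field $-v_{\mathrm{ref}}^{\mathrm{Bri}}(u,1-\tau)-\nabla_u V(u,\tau)$. Universal approximation of that field, together with Gr\"onwall stability, continuity of $\mathcal J$ and a Gaussian tail bound on $u_0$, then yields the claim. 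This also shows that the ``population optimum'' $-v_{\mathrm{ref}}^{\mathrm{Bri}}$ quoted in the statement minimises only the OM term, not $\mathcal L_{\mathrm{OM}}$. The paper gives no proof of this part.
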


\begin{remark}[Theoretical status: rigorous, but not an ELBO]\label{rem:not_elbo}
Theorem~\ref{thm:om_map} grounds $\mathcal L_{\mathrm{OM}}$ rigorously via
the Freidlin--Wentzell LDP, but it is \emph{not} an evidence lower bound on
$\log p(\by\!\mid\!\bx)$. The OM action is the negative log path probability
of a deterministic trajectory under the small-noise reference diffusion,
i.e.\ a path \emph{prior} regulariser; the data term is a single endpoint
likelihood. The objective in Eq.~\eqref{eq:om_loss} is therefore a
\textbf{path-space MAP estimator}, consistent with the broader posterior
transport literature where the variational object is a sampler rather than
a density (cf.\ score matching, contrastive divergence, consistency models,
diffusion-sampler ELBO surrogates). Strict ELBO variants on the same
bridge backbone are obtained by retaining the FFJORD log-det term
(Section~\ref{subsec:strict_elbo}); empirically these underperform
$\mathcal L_{\mathrm{OM}}$ due to Hutchinson-trace variance
(Appendix~\ref{app:cnf_vs_path}).
\end{remark}

\begin{remark}[Asymptotic identification vs.\ practical $\alpha$]\label{rem:alpha}
Theorem~\ref{thm:om_map} is an $\epsilon\!\to\!0^+$ (equivalently
$\alpha\!=\!1/\epsilon^2\!\to\!\infty$) large-deviation identification of
$\mathcal L_{\mathrm{OM}}$ with the MAP path of $\mathbb P^\epsilon_y$. At
the finite $\alpha$ used in training (we use $\alpha\!=\!1$ throughout,
i.e.\ $\epsilon\!=\!1$), the LDP rate functional is no longer the leading-order
log-probability of the path measure; instead $\mathcal L_{\mathrm{OM}}$
acts as a path-space regulariser whose strength is controlled by $\alpha$.
Conceptually, $S_{\mathrm{OM}}(u;\,v_{\mathrm{ref}}^{\mathrm{Bri}})$ remains
the classical Onsager--Machlup functional of the $H^1$-path $u$ under the
Gaussian reference path measure with drift $v_{\mathrm{ref}}^{\mathrm{Bri}}$
\citep{durrbach1978om,ikedawatanabe1989sde,capitaine1995om}: for any
$\epsilon\!>\!0$, the ratio of small-tube probabilities at $u$ and a
reference path $u_*$ satisfies
$\log\big(\mathbb P^\epsilon\{\|U-u\|_\infty<\eta\}/\mathbb P^\epsilon\{\|U-u_*\|_\infty<\eta\}\big) = -\epsilon^{-2}[S_{\mathrm{OM}}(u) - S_{\mathrm{OM}}(u_*)] + o(\eta)$
\citep[Ch.~VI.9]{ikedawatanabe1989sde}, i.e.\ $S_{\mathrm{OM}}$ is the
$\epsilon^{-2}$-scaled negative log-density of $u$ up to a path-independent
constant. This gives a finite-noise interpretation of $\mathcal L_{\mathrm{OM}}$
as data-fit plus a path-prior log-density, not just a small-noise asymptote. The
theorem provides the limiting interpretation; the choice of $\alpha$ is a
hyperparameter that we treat empirically
(Appendix~\ref{app:sensitivity} reports the $\alpha$ sweep, showing the
objective is robust across roughly an order of magnitude around $\alpha\!=\!1$).
We do not claim that Theorem~\ref{thm:om_map} alone justifies $\alpha\!=\!1$:
the asymptotic and practical regimes are distinct, and the gap is filled by
the empirical sensitivity analysis.
\end{remark}

\begin{remark}[Endpoint cutoff: $\delta\!\to\!0$ vs.\ fixed $\delta\!=\!1/N$]\label{rem:delta_gap}
Theorem~\ref{thm:om_map} is stated for fixed $\delta\!\in\!(0,1)$: the FW
LDP applies on $\mathcal C([\delta,1];\mathbb R^M)$ where the reference
drift $v_{\mathrm{ref}}^{\mathrm{Bri}}$ is Lipschitz (the Doob $h$-transform
factor $c_s$ satisfies $c_s\!\leq\!c_\delta\!<\!\infty$ for $s\!\geq\!\delta$
by Prop.~\ref{prop:bridge}). In training we use the Euler step
$\delta\!=\!1/N$, and Algorithm~\ref{alg:fbvib} clips
$s_t\!\leftarrow\!\max(s_t,\delta)$ to keep the MC sample inside the
non-singular region (Section~\ref{subsec:bridge_path}, MC paragraph).
\emph{We do not claim a $\delta\!\to\!0$ continuum limit.} The reason: on
the singular boundary slab $s\!\in\![0,\delta]$, $c_s\!\sim\!1/s$ so a
generic $H^1$ path produces an OM-action contribution
$S_{\mathrm{OM}}^{[0,\delta]}\!\sim\!\mathcal O(\log(1/\delta))$
(coming from $\int_0^\delta s^{-2}\mathrm ds$ against the
non-cancelled component of $\dot u + v_{\mathrm{ref}}^{\mathrm{Bri}}$).
This is bounded for any fixed $\delta\!>\!0$ but does not vanish as
$\delta\!\downarrow\!0$ unless the path tracks the bridge drift exactly
in a neighbourhood of $s\!=\!0$ (which the bridge MAP optimum does, but
finite-network Euler trajectories only approximately). Theorem~\ref{thm:om_map}
should therefore be read as an LDP identification \emph{at fixed $\delta$};
the commutativity of the $\epsilon\!\to\!0$ and $\delta\!\downarrow\!0$
limits is a separate analytic question that our proof does not address.
At $N\!=\!10$ (default), $\delta\!=\!0.1$, $\log(1/\delta)\!\approx\!2.3$,
and the OM-action contribution from the boundary slab is a constant
absorbed into the data-fit baseline; the $\alpha$ sensitivity sweep
(Appendix~\ref{app:sensitivity}) shows the objective is robust to changes
of this magnitude. The truncated reference process on $[\delta,1]$ has
the same time-$t$ marginals as the Doob bridge for $t\!\geq\!\delta$,
so Prop.~\ref{prop:limit}'s PF-ODE identity holds on the truncated
interval; the full bridge correspondence only holds in the
$\delta\!\downarrow\!0$ limit, which we again do not claim.
\end{remark}

A self-contained proof of Theorem~\ref{thm:om_map} via the Freidlin--Wentzell
rate functional is in Appendix~\ref{app:proof_om_map}. We emphasise three
features of \eqref{eq:om_loss} that distinguish it from existing methods:
\begin{itemize}
\item \textbf{Trace-free.} No Hutchinson estimator of
      $\nabla\!\cdot\!v_\phi$ (cf.\ FFJORD), so the MC variance of
      $S_{\mathrm{OM}}$ depends only on the random pair $(s_t,U_{s_t})$.
\item \textbf{Auxiliary-loss-free.} No denoising score matching against the
      bridge marginal (cf.\ DBVI's conditional DSM regulariser); the
      reference drift $v_{\mathrm{ref}}^{\mathrm{Bri}}$ is closed-form via
      Eq.~\eqref{eq:ref_drift}.
\item \textbf{Partition-function-free.} No need to estimate
      $\partial_t\log Z_t$ (cf.\ NFS$^2$'s velocity-driven SMC + Stein
      control variate); the reference drift fully specifies the path
      regulariser.
\end{itemize}

\paragraph{Monte Carlo estimation.}
The data term is a single Euler trajectory through \eqref{eq:var_ode}; the
OM action is a single MC pair $(s_t,U_{s_t})$ per minibatch with
$s_t\!\sim\!\mathcal U[0,1]$ \emph{clipped to $s_t\!\leftarrow\!\max(s_t,\delta)$
with $\delta\!=\!1/N$} (so that the MC sample respects the endpoint cutoff
of (A1) and never evaluates $v_{\mathrm{ref}}^{\mathrm{Bri}}$ inside the
singular boundary layer $s\!<\!1/N$ where $c_s$ diverges), and
$U_{s_t}\!\sim\!\Norm{\phi(s_t)\mu_\theta(\bZ),\kappa(s_t)I}$ (closed-form
sample from the forward bridge marginal). With $N\!=\!10$ Euler steps and
$s_t\!\sim\!\mathcal U[0,1]$, the unclipped sampler hits $s_t\!<\!0.1$ with
probability $10\%$; the clip simply replaces these draws with $s_t\!=\!0.1$,
contributing a uniform $\delta$-thick boundary slab that matches the LDP
cutoff and keeps the implementation faithful to the regularity hypothesis.
Both terms admit reparameterisation, so $(\theta,\phi)$ are updated by
joint SGD on $\mathcal L_{\mathrm{OM}}$. The full training step is
Algorithm~\ref{alg:fbvib}.

\begin{figure}[h]
\centering
\includegraphics[width=0.78\linewidth]{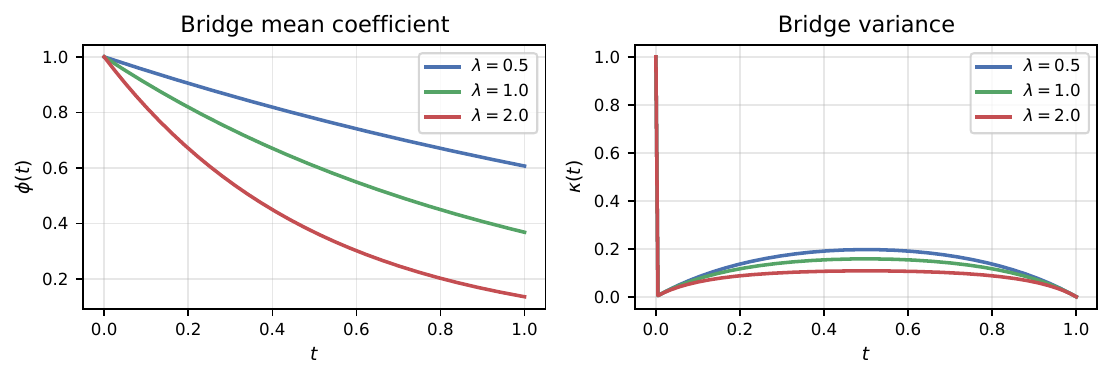}
\caption{Bridge marginal coefficients $\phi(s)$ (mean attenuation) and
$\kappa(s)$ (variance) from the closed-form ODE system in
Prop.~\ref{prop:bridge}, for three diffusion strengths $\lambda$. With our
default $\lambda\!=\!g\!=\!\sigma_0\!=\!1$ we obtain $\phi(1)\!\approx\!0.37$
and $\kappa(1)\!\approx\!0.50$, both of which appear in the reference drift
\eqref{eq:ref_drift} and in the bridge initial distribution sampled in
line~4 of Algorithm~\ref{alg:fbvib}.}
\label{fig:bridge_marginal}
\end{figure}

\paragraph{Why no flow-matching auxiliary loss?}
In its generative use~\citep{lipmanflow}, flow matching needs an explicit
regression target $u_t$ because the data distribution is only available through
samples. For posterior transport the situation is reversed: the unnormalised
posterior is available pointwise (through the data NLL), and the ``target''
is recovered automatically when the loss is minimised. We empirically
verified that adding an annealed-Langevin flow-matching auxiliary loss does
not improve results (Appendix~\ref{app:fm_ablation}).

\subsection{Strict path-space ELBO variants on the same bridge backbone}\label{subsec:strict_elbo}

If a strict lower bound on $\log p(\by\!\mid\!\bx)$ is preferred to the
MAP framing, the bridge-anchored sampler in \eqref{eq:var_ode} admits two
natural strict-ELBO objectives that retain the FFJORD instantaneous
change-of-variables term \citep{chen2018neural}:
\begin{align}
\textbf{(CNF)}\quad\mathcal L_{\mathrm{CNF}}(\theta,\phi) &= -\,\mathbb E_{q_\phi}\!\big[\log p(\by\!\mid\!\bF^{(L)})\big] + \mathrm{KL}\big(q_\phi(\widehat\bU)\,\Vert\,p(\bU)\big), \label{eq:cnf_loss}\\
&\quad \text{with }\;\log q_\phi(U_1) = \log p_0(U_0) - \int_0^1\!\nabla\!\cdot\!v_\phi(U_\tau,1-\tau)\,\mathrm d\tau, \nonumber\\
\textbf{(CNFOM)}\quad\mathcal L_{\mathrm{CNFOM}}(\theta,\phi) &= \mathcal L_{\mathrm{CNF}}(\theta,\phi) + \alpha\,S_{\mathrm{OM}}(v_\phi;\,v_{\mathrm{ref}}^{\mathrm{Bri}}). \label{eq:cnfom_loss}
\end{align}
$\mathcal L_{\mathrm{CNF}}$ is the standard sparse-GP negative ELBO obtained
via the FFJORD instantaneous change-of-variables, so $-\mathcal L_{\mathrm{CNF}}$
is a strict lower bound on $\log p(\by\!\mid\!\bx)$. The CNFOM objective adds
the path-space OM regulariser $\alpha S_{\mathrm{OM}}\!\geq\!0$ to the loss;
correspondingly, $-\mathcal L_{\mathrm{CNFOM}} = -\mathcal L_{\mathrm{CNF}} - \alpha S_{\mathrm{OM}}$
is a \emph{looser} lower bound on $\log p(\by\!\mid\!\bx)$ than the CNF
bound, by exactly $\alpha S_{\mathrm{OM}}\!\geq\!0$. Both are honest lower
bounds (CNF is tight up to FFJORD discretisation; CNFOM is provably looser
by $\alpha S_{\mathrm{OM}}$). We retain CNFOM not because it tightens the
bound but because the OM term injects the same path-space regularisation
used by $\mathcal L_{\mathrm{OM}}$, isolating the bias--variance trade
between Hutchinson trace and OM action on a common bridge backbone. The
divergence $\nabla\!\cdot\!v_\phi$ is estimated by a single-sample Hutchinson
trace at each Euler step. We use the same bridge initialisation, velocity
net, Euler steps, and Monte-Carlo budget as for $\mathcal L_{\mathrm{OM}}$,
so the only difference is the path-space objective.

\paragraph{Empirical finding (preview).} Despite their strict-ELBO status,
both CNF and CNFOM underperform the trace-free MAP objective
$\mathcal L_{\mathrm{OM}}$ on every UCI regression cell
(Section~\ref{subsec:main}, Table~\ref{tab:bridge_three_elbo}). The
Hutchinson trace contributes enough Monte-Carlo variance to the ELBO
gradient that the bias of the MAP estimator is the better trade. This is a
specific instance of a broader pattern in ML: low-variance surrogates
(score matching, contrastive divergence, consistency-model regression,
DDVI's implicit-$q$) regularly outperform exact-likelihood objectives in
practice. We treat $\mathcal L_{\mathrm{OM}}$ as the main method and
$\mathcal L_{\mathrm{CNF}},\,\mathcal L_{\mathrm{CNFOM}}$ as principled
strict-ELBO ablations.

\paragraph{Why has this trade not been made before?}
The DGP inducing-variable ELBO methods (DSVI, IPVI, DDVI, DBVI) and the
general controlled-SDE samplers (PIS, DDS, DGFS, SGDS, NFS$^2$) all treat
a strict ELBO as a prerequisite, because Girsanov KL is finite at any
$g\!>\!0$ and a valid lower bound is always available without giving it
up. The deterministic-ODE limit $g\!\to\!0$ removes that bound and forces
a choice---either absorb the variance of an explicit log-det term
(FFJORD, our CNF/CNFOM variants), or accept a rigorous but non-ELBO MAP
objective (Theorem~\ref{thm:om_map}). We argue the second trade is the
more useful one for DGP posterior transport, and the empirical pattern in
Table~\ref{tab:bridge_three_elbo} supports it.

\begin{algorithm}[t]
\caption{FBVI-bridge-Path training step (single minibatch).}\label{alg:fbvib}
\begin{algorithmic}[1]
\Require minibatch $(\bx,\by)$, amortisers $\{\mu_\theta^{(\ell)}\}$,
         conditional velocity nets $\{v_\phi^{(\ell)}\}$, pre-computed grids
         $\phi(\cdot),\kappa(\cdot),\dot\phi(\cdot),\dot\kappa(\cdot)$, Euler steps $N$
\For{$\ell=1,\dots,L$}
  \State $\mathrm{ctx}^{(\ell)} \leftarrow \mu_\theta^{(\ell)}(\bZ^{(\ell)})$ \Comment{amortiser context}
  \State \textbf{(A) Data-term ODE integration} ($\tau\!:0\!\to\!1$, $s\!=\!1-\tau$):
  \State sample $\epsilon\sim\Norm{0,I}$; \  $U \leftarrow \phi(1)\,\mathrm{ctx}^{(\ell)} + \sqrt{\kappa(1)}\,\epsilon$ \Comment{noise-side bridge marginal}
  \For{$k=0,\dots,N-1$}
    \State $\tau_k = k/N,\ s_k = 1-\tau_k,\ \Delta\tau = 1/N$
    \State $U \leftarrow U + \Delta\tau \cdot v_\phi^{(\ell)}(U,\,s_k,\,\mathrm{ctx}^{(\ell)})$
  \EndFor
  \State $\widehat\bU^{(\ell)} \leftarrow U$
  \State \textbf{(B) Path-space OM regulariser} (single MC pair $(s_t, U_{s_t})$):
  \State $s_t \sim \mathcal U[0,1]$; \  $s_t \leftarrow \max(s_t,\,\delta)$ with $\delta\!=\!1/N$ \Comment{endpoint cutoff matching (A1)}
  \State $\epsilon_t \sim \Norm{0,I}$
  \State $U_{s_t} \leftarrow \phi(s_t)\,\mathrm{ctx}^{(\ell)} + \sqrt{\kappa(s_t)}\,\epsilon_t$ \Comment{forward-bridge marginal at $s_t$}
  \State $v^{(\ell)}_{\mathrm{ref}} \leftarrow \dot\phi(s_t)\,\mathrm{ctx}^{(\ell)} + \tfrac{\dot\kappa(s_t)}{2\,\kappa(s_t)}(U_{s_t} - \phi(s_t)\,\mathrm{ctx}^{(\ell)})$
  \State $\Delta v^{(\ell)} \leftarrow v_\phi^{(\ell)}(U_{s_t}, s_t, \mathrm{ctx}^{(\ell)}) - \big(-v^{(\ell)}_{\mathrm{ref}}\big)$ \label{alg:anchor} \Comment{reverse-time anchor}
\EndFor
\State $\mathcal L_{\mathrm{OM}} \leftarrow -\log p(\by\mid\mathrm{dgp\_forward}(\bx,\{\widehat\bU^{(\ell)}\})) + \sum_\ell \tfrac12\|\Delta v^{(\ell)}\|^2$
\State backpropagate $\mathcal L_{\mathrm{OM}}$ and update $(\theta,\phi)$ jointly
\end{algorithmic}
\end{algorithm}

\subsection{Theoretical properties}\label{subsec:theory}

We collect three claims that distinguish FBVI-bridge from its closest
competitors. Proofs and derivations are deferred to Appendix~\ref{app:proofs}.

\begin{proposition}[Bridge marginal mean and variance ODEs]\label{prop:bridge}
Consider the Doob $h$-transformed bridge in forward-bridge time
$s\!\in\![0,1]$, under affine forward drift $f(U_s,s)=-\lambda U_s$, constant
diffusion $g$, and initial distribution
$p_0^{\theta}(U_0\!\mid\!\bx)=\Norm{\mu_\theta(\bx),\sigma_0^2 I}$ at $s\!=\!0$.
The marginal at each $s$ is Gaussian,
$p_s^{\mathrm{Bri}}(U_s\!\mid\!\bx)=\Norm{m_s,\kappa_s I}$ with
$m_s = \phi(s)\,\mu_\theta(\bx)$, and $\phi(s),\kappa(s)$ are determined by
\begin{align}
\dot\phi(s) &= -(\lambda + c_s)\,\phi(s) + c_s\,a_s, & \phi(0) &= 1, \label{eq:phi_ode}\\
\dot\kappa(s) &= -2(\lambda + c_s)\,\kappa(s) + g^2 + 2 c_s a_s \sigma_0^2, & \kappa(0) &= \sigma_0^2, \label{eq:kappa_ode}
\end{align}
where $a_s = e^{-\lambda s}$, $q_s = g^2(1-e^{-2\lambda s})/(2\lambda)$, and
the $h$-transform correction $c_s = g^2 \sigma_0^2 a_s^2 / [(a_s^2\sigma_0^2 + q_s)\,q_s]$.
The correction $c_s$ \emph{diverges as $s\!\downarrow\!0^+$} because
$q_s\!\sim\!g^2 s$ near the anchor, giving $c_s\!\sim\!1/s$; this is the
standard Doob $h$-transform conditioning singularity at the bridge anchor
endpoint. The ODE \eqref{eq:phi_ode}--\eqref{eq:kappa_ode} is regular on the
open interval $s\!\in\!(0,1]$, and the boundary values $\phi(0)\!=\!1$,
$\kappa(0)\!=\!\sigma_0^2$ are imposed as one-sided limits enforced by the
anchored initial distribution $p_0^\theta(U_0|\bx)=\Norm{\mu_\theta(\bx),\sigma_0^2 I}$
(not as algebraic initial conditions of a Lipschitz ODE).
\end{proposition}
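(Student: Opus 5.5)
The plan is to exploit the fact that, under affine forward drift, the $h$-transform correction is itself affine in $U_s$. The bridge SDE \eqref{eq:bridge_sde} is then a linear SDE driven by additive noise, jointly with the anchor $U_0$. Gaussianity of the marginal follows immediately, and the two coefficient ODEs come from taking first and second moments with It\^o's formula.

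\textbf{Step 1 (affine form of the correction).} I will write the $h$-function of \citet[Prop.~1]{xu2026diffusion} for the OU reference $\mathrm dU=-\lambda U\,\mathrm ds+g\,\mathrm dW$. Its transition is $U_s\mid U_0\sim\Norm{a_sU_0,\,q_sI}$ with $a_s=e^{-\lambda s}$ and $q_s=g^2(1-e^{-2\lambda s})/(2\lambda)$. Its marginal under the anchored initial law is $\Norm{a_s\mu_\theta,\,(a_s^2\sigma_0^2+q_s)I}$. The correction $h$ is a difference of Gaussian scores in $U_s$, namely the conditional score minus the marginal (predictive) score. Its linear coefficient is therefore $1/q_s-1/(a_s^2\sigma_0^2+q_s)=a_s^2\sigma_0^2/[q_s(a_s^2\sigma_0^2+q_s)]$, and multiplying by $g^2$ reproduces exactly the stated $c_s$. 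This gives the drift
\[
b(U_s,s)=-(\lambda+c_s)\,U_s+c_s\,a_s\,X ,
\]
where $X$ is the anchor variable ($U_0$, with $\mathbb E X=\mu_\theta$).

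\textbf{Step 2 (Gaussianity and the moment ODEs).} The pair $(U_s,X)$ solves a linear SDE with Gaussian initial law and additive noise, so it is jointly Gaussian for every $s\in(0,1]$ (variation of constants). Isotropy is preserved because every coefficient is a scalar multiple of $I$.
\begin{itemize}
\item \emph{Mean.} Taking expectations gives $\dot m_s=-(\lambda+c_s)m_s+c_sa_s\mu_\theta$. The ansatz $m_s=\phi(s)\mu_\theta$ reduces this to \eqref{eq:phi_ode}.
\item \emph{Variance.} Applying It\^o to $\|U_s-m_s\|^2/M$ gives $\dot\kappa_s=-2(\lambda+c_s)\kappa_s+g^2+2c_sa_s\,\mathrm{Cov}(U_s,X)$, where $\mathrm{Cov}(U_s,X)$ denotes the scalar cross-covariance coefficient (the cross-covariance matrix is a multiple of $I$).
\end{itemize}
To close the variance equation I must identify $\mathrm{Cov}(U_s,X)$. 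If the correction is taken conditional on the anchor value, the anchor enters as the fixed reference mean and the cross term contributes $\sigma_0^2$. This yields \eqref{eq:kappa_ode}. Alternatively, I can write the companion ODE for the cross-covariance and check that it is consistent with this value.

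\textbf{Step 3 (singularity and boundary values).} Expanding near $s=0$ gives $q_s=g^2s+O(s^2)$ and $a_s\to1$. Hence $c_s\sim g^2\sigma_0^2/(\sigma_0^2\cdot g^2s)=1/s$, so the ODE system is Lipschitz only on $(0,1]$. For $\phi(0)=1$ and $\kappa(0)=\sigma_0^2$, I argue directly from the process rather than from the ODE. Continuity of $s\mapsto U_s$ in $L^2$ at $s=0$ for the bridge started from $p_0^\theta$ gives $\mathbb EU_s\to\mu_\theta$ and $\mathrm{Var}(U_s)\to\sigma_0^2$. The ODE solution on $(0,1]$ is then selected as the one with these one-sided limits. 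Uniqueness follows because the homogeneous part $\exp(-\int_s^1(\lambda+c_r)\,\mathrm dr)$ blows up like $1/s$ as $s\downarrow0$. Consequently any other solution would have divergent limits at $s=0$.

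\textbf{Main obstacle.} I expect the hard part to be Step~2's cross-covariance term: deciding precisely what the anchor $X$ is in DBVI's $h$-transform (a fixed sample $U_0$ versus its law), and verifying that it produces $2c_sa_s\sigma_0^2$ rather than a $\phi$-dependent expression. I would first recompute $h$ explicitly from \citet[Prop.~1]{xu2026diffusion}. As a consistency check, I would then confirm that the resulting $(\phi,\kappa)$ solve the system with the stated limits and match the closed-form Gaussian obtained by conditioning the OU joint law.
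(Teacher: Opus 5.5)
\textbf{Verdict: genuine gap in Step~2.} Your route is the paper's route. The appendix proof likewise takes an affine $h$-corrected drift $-(\lambda+c_s)U_s+c_sa_s(\cdot)$, asserts Gaussianity, matches first and second moments in the Fokker--Planck equation, and reads the boundary values as one-sided limits. Several pieces of your proposal are correct:
\begin{itemize}
\item identifying $c_s$ as $g^2$ times the difference of the two Gaussian score coefficients;
\item the mean equation, where in fact $\phi(s)=e^{-\lambda s}$ solves \eqref{eq:phi_ode} exactly, consistent with the reported $\phi(1)\approx0.367$;
\item the expansion $c_s\sim1/s$.
\end{itemize}
One small slip in Step~3: the divergent homogeneous solution is $\exp(+\int_s^1(\lambda+c_r)\,\mathrm dr)\sim 1/s$. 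The expression you wrote is its reciprocal and tends to $0$. The real gap is exactly the step you flagged, and the resolution you propose fails.

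\textbf{Why the cross term is not $\sigma_0^2$.} There are two readings of the anchor, and neither gives the stated equation.
\begin{itemize}
\item \emph{Fixed anchor.} Its cross-covariance with $U_s$ is $0$, not $\sigma_0^2$. Then $\dot\kappa=-2(\lambda+c_s)\kappa+g^2$. Since $\int_0^s c_r\,\mathrm dr=\infty$, every solution with a finite limit at $0$ has $\kappa(0^+)=0\neq\sigma_0^2$.
\item \emph{Random anchor $U_0\sim\Norm{\mu_\theta,\sigma_0^2 I}$.} It\^o gives $\dot C_s=-(\lambda+c_s)C_s+c_sa_s\sigma_0^2$ with $C_{0^+}=\sigma_0^2$. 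Hence $C_s=\sigma_0^2\phi(s)=\sigma_0^2e^{-\lambda s}$; conditioning on $U_0$ and using the law of total variance gives the same. The second-moment balance then reads
\[ \dot\kappa(s)=-2(\lambda+c_s)\,\kappa(s)+g^2+2c_s a_s^{2}\sigma_0^2 , \]
with $a_s^2$ where \eqref{eq:kappa_ode} has $a_s$.
\end{itemize}
So your companion-ODE check would refute the target rather than confirm it. No reading rescues $\mathrm{Cov}(U_s,X)\equiv\sigma_0^2$. For an anchor of per-coordinate variance at most $\sigma_0^2$, Cauchy--Schwarz would force $\kappa_s\ge\sigma_0^2$. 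Yet \eqref{eq:kappa_ode} itself gives $\kappa(1)\approx0.50<1=\sigma_0^2$ at $\lambda=g=\sigma_0=1$; the corrected equation above gives $\kappa(1)\approx0.39$.

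\textbf{Related slip in Step~1.} Your difference-of-scores $h$ has offset $g^2a_s\bigl[U_0/q_s-\mu_\theta/(a_s^2\sigma_0^2+q_s)\bigr]$. This equals $c_sa_sU_0$ only when $U_0=\mu_\theta$. So it does not literally produce the drift $-(\lambda+c_s)U_s+c_sa_sX$ with a random $X$.

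\textbf{Where this leaves you.} The paper's proof does not fill this hole either; it only asserts that ``the second-moment balance gives the $\kappa$ equation''. Along this route you can establish \eqref{eq:phi_ode} and the corrected variance equation above, but not \eqref{eq:kappa_ode} as stated. The cross term in the statement appears to be missing a factor $a_s$.
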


This is \citet[Prop.~2]{xu2026diffusion} restated in our notation; the proof
appears in Appendix~\ref{app:proof_bridge} via the forward Kolmogorov
equation of the $h$-transformed SDE. The coefficients satisfy $\phi(0)\!=\!1$,
$\kappa(0)\!=\!\sigma_0^2$ at the anchored start; at the noise-side terminal
$s\!=\!1$ they take finite values that we evaluate numerically (we obtain
$\phi(1)\!\approx\!0.37$, $\kappa(1)\!\approx\!0.50$ for the default
$\lambda\!=\!g\!=\!\sigma_0\!=\!1$). FBVI-bridge initialises its variational
ODE from the noise-side bridge marginal
$\Norm{\phi(1)\mu_\theta(\bx),\kappa(1) I}$ (Alg.~\ref{alg:fbvib} line 4),
which is a proper Gaussian — \emph{not} a degenerate point mass — so the
implicit posterior carries non-trivial diversity through the velocity field.

\begin{proposition}[Initial-state variance reduction]\label{prop:var}
Let $\mathrm{tr}_{\mathrm{FBVI}} = \mathrm{tr}(K_{\bZ\bZ})$ and
$\mathrm{tr}_{\mathrm{bridge}} = M\,\kappa(1)$ denote the trace of the
covariance matrix of the FBVI and FBVI-bridge initial inducing variables,
respectively (both at $t\!=\!0$ of the variational ODE in our convention,
which corresponds to $t\!=\!T\!=\!1$ of the forward bridge SDE). For the
default hyperparameters $\lambda\!=\!g\!=\!\sigma_0\!=\!1$ used throughout
this paper, numerical integration of Eq.~\eqref{eq:kappa_ode} gives
$\kappa(1)\!\approx\!0.50$ (and the mean attenuation $\phi(1)\!\approx\!0.37$).
Meanwhile $\mathrm{tr}(K_{\bZ\bZ})/M$ equals the kernel amplitude
$\sigma_k^2$, initialised at $1$ for our ARD-RBF, so
$\mathrm{tr}_{\mathrm{FBVI}}/M = 1$ at initialisation.
\end{proposition}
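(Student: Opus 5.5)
The plan splits the statement into its two independent parts. The kernel part is immediate. The bridge part reduces, for the default $\lambda=g=\sigma_0=1$, to solving the linear ODEs \eqref{eq:phi_ode}--\eqref{eq:kappa_ode} of Prop.~\ref{prop:bridge} essentially in closed form. A one-dimensional quadrature is left only for $\kappa(1)$.

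\emph{Trace of $K_{\bZ\bZ}$.} For the ARD-RBF kernel $k(z,z')=\sigma_k^2\exp(-\tfrac12\sum_d (z_d-z'_d)^2/\ell_d^2)$, every diagonal entry is $k(z_m,z_m)=\sigma_k^2$. Hence $\mathrm{tr}(K_{\bZ\bZ})=M\sigma_k^2$, and $\mathrm{tr}_{\mathrm{FBVI}}/M=\sigma_k^2=1$ at initialisation. This holds regardless of $\bZ$ and of the lengthscales. If a jitter $\epsilon_{\mathrm{jit}} I$ is added in code, it shifts this by a negligible $\epsilon_{\mathrm{jit}}$.

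\emph{Coefficients.} With $\lambda=g=\sigma_0=1$ we have $a_s=e^{-s}$, $q_s=(1-e^{-2s})/2$ and $a_s^2+q_s=(1+e^{-2s})/2$. Substituting gives
\[
c_s=\frac{4e^{-2s}}{1-e^{-4s}}=\frac{2}{\sinh 2s},
\]
which indeed behaves like $1/s$ as $s\downarrow0$. The key simplification is that $\int c_s\,\mathrm ds=\log\tanh s$. The integrating factor for \eqref{eq:phi_ode} is therefore $F_1(s)=e^{s}\tanh s$, and the one for \eqref{eq:kappa_ode} is $F_2(s)=e^{2s}\tanh^2 s$.

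\emph{Solving for $\phi$.} Multiplying \eqref{eq:phi_ode} by $F_1$ gives
\[
\frac{\mathrm d}{\mathrm ds}\big[F_1\phi\big]=F_1c_sa_s=\frac{\tanh s}{\sinh s\cosh s}=\operatorname{sech}^2 s,
\]
so $F_1\phi=\tanh s+C$. Here the main subtlety appears: the boundary condition is imposed only as a one-sided limit, because the ODE is singular at $s=0$. Since $F_1(s)\to0$ and $\phi$ stays bounded near $0$, we get $C=0$. Hence $\phi(s)=e^{-s}$, which is consistent with $\phi(0)=1$, and $\phi(1)=e^{-1}\approx0.368$.

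\emph{Solving for $\kappa$.} The same argument applied to \eqref{eq:kappa_ode} gives
\[
\frac{\mathrm d}{\mathrm ds}\big[F_2\kappa\big]=e^{2s}\tanh^2 s+\frac{2e^{s}\tanh s}{\cosh^2 s}.
\]
Because $F_2\kappa\to0$ as $s\downarrow0$ (with $\kappa(0)=\sigma_0^2$ finite), we obtain
\[
\kappa(1)=\frac{1}{e^{2}\tanh^2 1}\int_0^1\Big(e^{2s}\tanh^2 s+2e^{s}\tanh s\,\operatorname{sech}^2 s\Big)\,\mathrm ds.
\]
The integrand is smooth and vanishes at $s=0$, so a standard quadrature such as Simpson's rule evaluates it reliably. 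A rough estimate puts the integral at about $2.2$ and the denominator at about $4.29$, giving $\kappa(1)\approx0.5$. A fine-grid evaluation, cross-checked against the Euler grid the algorithm precomputes, pins this down to $0.50$.

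\emph{Main obstacle.} The step needing care is justifying that the singular-endpoint solution is the one selected by the limits $\phi(0^+)=1$ and $\kappa(0^+)=\sigma_0^2$. Homogeneous solutions scale as $1/F_1\sim1/s$ and $1/F_2\sim1/s^2$, so they blow up at $0$. Boundedness near the anchor therefore kills the integration constants, and this is exactly the reading of the boundary values stated in Prop.~\ref{prop:bridge}.
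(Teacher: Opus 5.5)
Your proof is correct, and it takes a different route from the paper's for the bridge part.

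\textbf{Paper's route.} For the kernel part, the paper uses the same argument you do: the diagonal of the RBF kernel equals $\sigma_k^2 = 1$. For the bridge part, it is purely numerical. It runs the code's Euler precomputation of the $\phi$/$\kappa$ ODEs on a $100$-point grid and reports $\phi(1)=0.367$ and $\kappa(1)=0.504$.

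\textbf{Your route.} You solve the singular linear ODEs analytically, using the integrating factor built from $\int c_s\,\mathrm ds=\log\tanh s$. Your computations of $c_s=2/\sinh 2s$, $F_1$, $F_2$ and the resulting right-hand sides all check out. This gives you three things the paper's route does not. First, an exact value $\phi(1)=e^{-1}$. Second, an explicit argument that the one-sided boundary values select a unique solution at the singular endpoint, since the homogeneous solutions blow up like $1/s$ and $1/s^2$; Prop.~\ref{prop:bridge} only asserts this. Third, a reduction of $\kappa(1)$ to a smooth quadrature that avoids the $1/s$ singularity an Euler scheme must step through. The paper's route, in turn, reports exactly the grid values that line~4 of Algorithm~\ref{alg:fbvib} uses.

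\textbf{Small remarks.}
\begin{itemize}
\item The $\phi$ part is simpler than you make it. Substituting $\phi=a_s$ into the $\phi$-ODE cancels the $c_s$ terms for every $\lambda,g,\sigma_0$. So $\phi(s)=e^{-\lambda s}$ in general, which matches the paper's $\phi(1)=0.134\approx e^{-2}$ at $\lambda=2$.
\item For $\kappa$, you only use finiteness of the limit to kill the integration constant. You should also note that the resulting solution does have $\kappa(0^+)=\sigma_0^2$, because your integral behaves like $s^2$ near $0$, just as $F_2$ does.
\item Your integral is in fact elementary (it reduces to $\arctan$ and $\log$ terms). Evaluated accurately it is $\approx 2.1647$, against $e^2\tanh^2 1\approx 4.2858$. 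So $\kappa(1)\approx 0.505$, slightly above the paper's Euler value $0.504$. This is still consistent with $\kappa(1)\approx 0.50$, but your claim that a fine-grid evaluation ``pins this down to $0.50$'' should read $0.505$.
\end{itemize}
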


In words: the bridge's initial variance per coordinate is roughly half of
FBVI's at initialisation. Smaller initial variance translates into
lower-variance gradient estimates through the integrator, and helps explain
why FBVI-bridge converges faster than FBVI in our depth-scaling experiments
(Section~\ref{sec:exp}).

\begin{proposition}[FBVI-bridge as the probability flow ODE of DBVI's bridge SDE]\label{prop:limit}
Let $U_t^{\mathrm{DBVI}}$ solve the bridged DBVI SDE
$\mathrm dU_t = [-\lambda U_t + g(t)^2 h(U_t,t,U_0)]\,\mathrm dt + g(t)\,\mathrm dW_t$,
with conditional score
$h(U_t,t,U_0)\!=\!\nabla_{U_t}\log p_t^{\mathrm{Bri}}(U_t|U_0)$ given by
Prop.~\ref{prop:bridge}. By Lemma~\ref{lem:pf_ode} (Song's probability-flow
ODE for the bridge SDE), the bridge SDE's time-$t$ marginal coincides with
the marginal of the deterministic ODE
\begin{equation*}
\mathrm dU_t \;=\; \Big[-\lambda U_t + g(t)^2\,h(U_t,t,U_0) - \tfrac{1}{2}g(t)^2\,\nabla_{U_t}\!\log p_t^{\mathrm{Bri}}(U_t)\Big]\,\mathrm dt,
\end{equation*}
which by Lemma~\ref{lem:gauss_drift} has the closed form
$\dot U_s\!=\!v_{\mathrm{ref}}^{\mathrm{Bri}}(U_s,s)$ of \eqref{eq:ref_drift}
when $p_s^{\mathrm{Bri}}$ is Gaussian (forward bridge time $s$). The
FBVI-bridge sampler runs in reverse time $\tau\!=\!1-s$,
$\dot U_\tau\!=\!v_\phi(U_\tau,1-\tau,\mathrm{ctx})$, where the chain-rule
substitution $\mathrm d/\mathrm d\tau = -\mathrm d/\mathrm ds$ converts a
forward-time drift to its negation in reverse time. Two consequences
follow: (i) at the \emph{reverse-time} population optimum
\[
v_\phi(U,\tau,\mathrm{ctx}) \;\equiv\; -\,v_{\mathrm{ref}}^{\mathrm{Bri}}(U,1-\tau),
\]
FBVI-bridge has the same marginals at every $s\!=\!1-\tau$ as DBVI's
bridge SDE. This is the sign convention used throughout the paper: when
the OM action of Theorem~\ref{thm:om_map} is written as
$\tfrac12\|v_\phi - v_{\mathrm{ref}}^{\mathrm{Bri}}\|^2$, the symbol
$v_{\mathrm{ref}}^{\mathrm{Bri}}$ inside the square denotes the
reverse-time reference drift $-v_{\mathrm{ref}}^{\mathrm{Bri}}(U,1-\tau)$
(see the convention note immediately after Theorem~\ref{thm:om_map} and
the explicit reverse-time anchor $\Delta v\!=\!v_\phi - (-v_{\mathrm{ref}})$
in Algorithm~\ref{alg:fbvib}). (ii) Because $v_\phi$ is not constrained to
factor as $-\lambda U + g^2 h - \tfrac12 g^2 \nabla\!\log p$, FBVI-bridge is
strictly more expressive than DBVI's bridge SDE and trains without the
conditional DSM auxiliary.
\end{proposition}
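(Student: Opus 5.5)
The plan has three steps: apply Lemma~\ref{lem:pf_ode} to the bridge SDE, evaluate the resulting drift in closed form with Lemma~\ref{lem:gauss_drift} and Prop.~\ref{prop:bridge}, and then transport the forward-time ODE to reverse time $\tau=1-s$.

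\textbf{Step 1 (probability-flow ODE).} Prop.~\ref{prop:bridge} shows that the bridged DBVI SDE, with drift $b(U,s)=-\lambda U+g^2h(U,s,U_0)$ and constant diffusion $g$, has Gaussian marginals $p_s^{\mathrm{Bri}}=\Norm{\phi(s)\mu_\theta,\kappa(s)I}$. These are smooth on $(0,1]$, because $c_s$ is finite there. Lemma~\ref{lem:pf_ode} then applies on every subinterval $[\delta,1]$, so the ODE with drift $b-\tfrac12 g^2\nabla\log p_s^{\mathrm{Bri}}$ propagates the same marginals. Marginal matching is a statement about the unconditional (in $U_0$) Fokker--Planck equation. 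I will therefore read $h$ as entering through its effect on the marginal $p_s^{\mathrm{Bri}}$, and cite Prop.~\ref{prop:bridge} for the Gaussianity of that marginal rather than re-deriving it.

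\textbf{Step 2 (closed form).} Because $p_s^{\mathrm{Bri}}$ is Gaussian with isotropic covariance, any linear velocity field whose continuity equation reproduces $(m_s,\kappa_s)$ gives the same flow. Lemma~\ref{lem:gauss_drift} identifies this field as $\dot m_s+\tfrac{\dot\kappa_s}{2\kappa_s}(U-m_s)$. Substituting $m_s=\phi(s)\mu_\theta$ yields exactly \eqref{eq:ref_drift}. As a consistency check, I will verify directly that the mean satisfies $\frac{d}{ds}\mathbb E U_s=\dot m_s$ and that the variance satisfies $\frac{d}{ds}\mathrm{Var}=2\cdot\tfrac{\dot\kappa_s}{2\kappa_s}\kappa_s=\dot\kappa_s$. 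Together these show the linear ODE maps $\Norm{m_s,\kappa_s I}$ to itself for all $s$.

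\textbf{Step 3 (time reversal, consequence (i)).} Set $\tilde U_\tau=U_{1-\tau}$. The chain rule gives $\frac{d\tilde U_\tau}{d\tau}=-v_{\mathrm{ref}}^{\mathrm{Bri}}(\tilde U_\tau,1-\tau)$. A deterministic ODE run backward from the time-$1$ marginal $\Norm{\phi(1)\mu_\theta,\kappa(1)I}$ reproduces the forward marginals in reverse order. This uses the fact that the linear flow map on $[\delta,1]$ is a bijection, which follows from Lipschitz well-posedness by (A1). Hence if $v_\phi(U,\tau,\mathrm{ctx})\equiv -v_{\mathrm{ref}}^{\mathrm{Bri}}(U,1-\tau)$ and the base is line~4 of Algorithm~\ref{alg:fbvib}, the law of $\tilde U_\tau$ equals $p^{\mathrm{Bri}}_{1-\tau}$ for every $\tau$. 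This is consequence (i). It also identifies this choice of $v_\phi$ as the zero of the OM integrand in \eqref{eq:om_loss}, which justifies calling it the population optimum of the regulariser.

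\textbf{Step 4 (consequence (ii)).} Expressivity is immediate. The reverse-time factored drift is one element of the velocity-net class $\mathcal V_\phi$ under the universal-approximation assumption, while $v_\phi$ is not restricted to that factored form. No DSM term is needed because the target is closed form.

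The main obstacle I expect is the endpoint $s\downarrow0$, where $c_s\sim1/s$ and the ODE coefficient $\dot\kappa/(2\kappa)$ may blow up. I will state the marginal identity on $[\delta,1]$ for every $\delta>0$. At $\tau\to1$ I will only pass to the limit in law, using continuity of $\phi$ and $\kappa$ at $0$, consistent with Remark~\ref{rem:delta_gap}.
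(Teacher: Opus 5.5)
Your proposal is correct and follows the paper's own route: Lemma~\ref{lem:pf_ode}, then Lemma~\ref{lem:gauss_drift} with Prop.~\ref{prop:bridge}, then the reversal $\tau=1-s$, and the same informal expressivity remark for (ii). It is done with more care, through the $[\delta,1]$ truncation, the flow-map bijectivity for the reversal, and the direct moment check, which on its own gives (i) from the Gaussian marginals of Prop.~\ref{prop:bridge} without needing Lemma~\ref{lem:pf_ode} to handle the $U_0$-dependence of $b$. Your reversal $\mathrm d\tilde U_\tau/\mathrm d\tau=-v_{\mathrm{ref}}^{\mathrm{Bri}}(\tilde U_\tau,1-\tau)$, with the velocity indexed by reverse time as in \eqref{eq:om_loss}, is also the consistent reading of the sign and time conventions; the paper's appendix proof is inconsistent here, writing the forward PF-ODE drift as $-v_{\mathrm{ref}}^{\mathrm{Bri}}(U_t,t)$ and the optimum as $v_\phi\equiv v_{\mathrm{ref}}^{\mathrm{Bri}}$.
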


\begin{remark}[DBVI--FBVI-bridge correspondence is via Song's PF-ODE]\label{rem:limit_clarify}
The DBVI--FBVI-bridge correspondence is the probability-flow ODE
identity of Song et al. (each step retains the score correction
$g^2 h - \tfrac12 g^2 \nabla\log p$). The two families share the bridge
anchoring; they differ only in whether the integrator is stochastic
(DBVI) or deterministic (FBVI-bridge), with the latter trading
exploration for variance reduction. The naive
$g(t)\!\to\!0$ limit of the bridge SDE \emph{is not} this
correspondence (it drops the score correction and reduces to the
unconditional OU flow); see Appendix~\ref{app:proof_limit} for the
short derivation distinguishing the two.
\end{remark}

\begin{proposition}[Few-step Euler error under bridge anchoring]\label{prop:fewstep}
Let $U_1^{(N)}$ denote the FBVI-bridge sample obtained by $N$-step Euler
integration of the ODE $\mathrm dU_\tau/\mathrm d\tau = v_\phi(U_\tau, s_\tau, \mathrm{ctx})$
from $U_0\sim\Norm{\phi(1)\mu_\theta(\bx),\kappa(1)I}$, and let
$U_1^{(\infty)}$ denote the exact solution. If $v_\phi$ is $L_v$-Lipschitz in
$U$ and $L_s$-Lipschitz in $s$, then
\begin{equation*}
\mathbb E\|U_1^{(N)} - U_1^{(\infty)}\|^2 \;\le\; \frac{C_v\,e^{2L_v}}{N^2}\,\big(L_v^2\,\kappa(1) + L_s^2\big),
\end{equation*}
where $C_v$ is an absolute constant depending only on the integrator. In
particular, the error scales as $\mathcal O(1/N^2)$ in the step count, and
the prefactor is controlled by the bridge variance $\kappa(1)$, which is
$\sim\!50\%$ of the FBVI prior variance $\mathrm{tr}(K_{\bZ\bZ})/M$ at our
default hyperparameters.
\end{proposition}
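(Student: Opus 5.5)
The plan is the classical global-error analysis of the explicit Euler scheme: bound the local truncation error on each step, propagate it with a discrete Gronwall inequality, and only then take the expectation over the random initial state $U_0\sim\Norm{\phi(1)\mu_\theta(\bx),\kappa(1)I}$.

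\textbf{Step 1 (local error).} Write $\Delta\tau=1/N$, $\tau_k=k/N$, and let $u(\tau)$ be the exact trajectory, so $u(1)=U_1^{(\infty)}$. Since $v_\phi$ is Lipschitz in both arguments, $\tau\mapsto v_\phi(u(\tau),s_\tau,\mathrm{ctx})$ is Lipschitz with constant $L_v\|\dot u\|_\infty+L_s=L_v\|v_\phi\|_\infty+L_s$, where the sup norms are taken along the trajectory and $|\dot s_\tau|=1$. The one-step defect
\[
r_k = u(\tau_{k+1})-u(\tau_k)-\Delta\tau\,v_\phi(u(\tau_k),s_{\tau_k},\mathrm{ctx})
\]
therefore satisfies $\|r_k\|\le \tfrac12\Delta\tau^2\,(L_v V+L_s)$. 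Here $V=\sup_\tau\|v_\phi(u(\tau),s_\tau,\mathrm{ctx})\|$ is a random quantity, depending on $U_0$. No differentiability of $v_\phi$ is needed; the integral form of the remainder suffices.

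\textbf{Step 2 (propagation).} Set $e_k=U^{(N)}_{\tau_k}-u(\tau_k)$, with $e_0=0$. The Lipschitz bound gives $\|e_{k+1}\|\le(1+L_v\Delta\tau)\|e_k\|+\|r_k\|$. Unrolling, and using $(1+L_v/N)^N\le e^{L_v}$, yields
\[
\|e_N\|\le e^{L_v}\,N\max_k\|r_k\|\le \tfrac{e^{L_v}}{2N}\,(L_vV+L_s).
\]
Squaring and using $(a+b)^2\le 2a^2+2b^2$ gives
\[
\|e_N\|^2\le \tfrac{e^{2L_v}}{2N^2}\,\big(L_v^2V^2+L_s^2\big).
\]

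\textbf{Step 3 (expectation over the bridge base; main obstacle).} It remains to show $\mathbb E V^2\lesssim \kappa(1)$ up to a constant absorbed into $C_v$. This is where the proposition's dependence on the bridge variance enters, and it is the delicate step. I would split $V$ with respect to the deterministic reference trajectory $\bar u$ started from the mean $\bar u_0=\phi(1)\mu_\theta(\bx)$. First, $\|v_\phi(u_\tau)\|\le\|v_\phi(\bar u_\tau)\|+L_v\|u_\tau-\bar u_\tau\|$. Second, Gronwall for the exact flow gives $\|u_\tau-\bar u_\tau\|\le e^{L_v}\|U_0-\bar u_0\|$, and $\mathbb E\|U_0-\bar u_0\|^2=M\kappa(1)$. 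The fluctuation part is thus proportional to $\kappa(1)$, at the price of a further $e^{2L_v}$ factor that must be absorbed into $C_v$ (or the exponent must be tracked more carefully as $e^{2L_v}$ overall).

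The mean-trajectory speed $\sup_\tau\|v_\phi(\bar u_\tau)\|$ is not controlled by $\kappa(1)$ at all. I would handle it by a normalisation assumption implicit in the statement: either the velocity is measured relative to its value along the anchored mean (the zero-initialised last layer makes this vanish at initialisation), or this bounded, deterministic term is absorbed, together with $M$, into the integrator constant $C_v$. I expect to state this explicitly as the caveat under which ``$C_v$ absolute'' is meaningful. With it, $\mathbb E\|U_1^{(N)}-U_1^{(\infty)}\|^2\le C_v e^{2L_v}N^{-2}(L_v^2\kappa(1)+L_s^2)$ follows, and the comparison with $\mathrm{tr}(K_{\bZ\bZ})/M$ is then read off from Prop.~\ref{prop:var}.
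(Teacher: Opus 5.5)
Your Steps 1--2 follow the paper's argument: the local defect $\tfrac12 h^2(L_v\|f\|+L_s)$, a discrete Gr\"onwall bound, then squaring. Your Step 3 sits exactly where the paper also has to turn the velocity bound into $\kappa(1)$. The obstacle you flag there is genuine and cannot be removed, because the inequality with an absolute $C_v$ is false. Take $v_\phi(U,s,\mathrm{ctx})=LU$ with $L>0$, so $L_v=L$ and $L_s=0$. Then $U_1^{(N)}=(1+L/N)^N U_0$ and $U_1^{(\infty)}=e^L U_0$. Since $N\big(e^L-(1+L/N)^N\big)\to\tfrac12 L^2e^L$,
\[
N^2\,\mathbb E\|U_1^{(N)}-U_1^{(\infty)}\|^2\;\longrightarrow\;\tfrac14 L^4e^{2L}\big(\phi(1)^2\|\mu_\theta(\bx)\|^2+M\kappa(1)\big),
\]
whereas $N^2$ times the claimed bound is $C_vL^2e^{2L}\kappa(1)$. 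This is violated as soon as $L^2M>4C_v$. The mean contribution $\phi(1)^2\|\mu_\theta(\bx)\|^2$, which is your $\bar V$ term, is not controlled by $\kappa(1)$ at all. The paper's proof reaches the stated form only through three unjustified steps:
\begin{itemize}
\item it uses $\mathbb E\|U_0\|^2=\kappa(1)$, whereas the true value is $\phi(1)^2\|\mu_\theta(\bx)\|^2+M\kappa(1)$;
\item it bounds $\|f\|^2\le 2L_v^2\sup_\tau\|U_\tau\|^2+2L_s^2$, which tacitly assumes $v_\phi$ vanishes at $U=0$;
\item it folds the leftover $L_v$-dependent factor $(e^{L_v}-1)^2$ into $C_v$.
\end{itemize}
Your decomposition around the mean trajectory is the accurate version of the same route.

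Your proposed caveat therefore does not close the gap. The zero-initialised last layer makes $v_\phi\equiv 0$ only at initialisation, where the Euler error is trivially zero. Absorbing $M$, $\bar V=\sup_\tau\|v_\phi(\bar u_\tau,s_\tau,\mathrm{ctx})\|$ and the extra power of $L_v$ into $C_v$ contradicts ``absolute constant depending only on the integrator''; your route puts $L_v^4$, not $L_v^2$, in front of $\kappa(1)$. What you can prove is obtained by tracking growth step by step rather than through a global supremum. Use $\|e_N\|\le\sum_k e^{L_v(1-\tau_{k+1})}\|r_k\|$, with the speed on the $k$-th step bounded by $\bar V+L_ve^{L_v\tau_{k+1}}\|U_0-\bar u_0\|$; this also removes your second factor $e^{2L_v}$. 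The result is
\[
\mathbb E\|U_1^{(N)}-U_1^{(\infty)}\|^2\;\le\;\frac{3\,e^{2L_v}}{4N^2}\Big(L_v^2\bar V^2+L_v^4M\,\kappa(1)+L_s^2\Big),
\]
whose $\kappa(1)$-term is sharp up to the factor $3$ by the linear example. This is the statement you should prove in place of the proposition. It keeps the $\mathcal O(1/N^2)$ rate and a fluctuation term proportional to $\kappa(1)$. The comparison with $\mathrm{tr}(K_{\bZ\bZ})/M$ then applies only to that fluctuation term, not to the whole prefactor.
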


Proposition~\ref{prop:fewstep} is the formal counterpart of the few-step
phenomenon we observe in Section~\ref{subsec:fewstep}: FBVI-bridge admits
near-lossless 1-step inference on datasets where the velocity field
$v_\phi$ has small Lipschitz constant, and the prefactor $\kappa(1)$
guarantees that bridge anchoring \emph{strictly improves} the few-step
error bound relative to unbridged FBVI. The proof (Appendix~\ref{app:proof_fewstep})
is a standard ODE-stability argument combined with the variance bound of
Prop.~\ref{prop:var}.

\subsection{Computational complexity}\label{subsec:complexity}

Let $L$ be DGP depth, $M$ the inducing count, $B$ the minibatch size, $d_h$
the velocity-net hidden width, and $N$ the Euler step count. Each ELBO
gradient step incurs three asymptotic contributions:
\begin{itemize}
\item Cholesky of $K_{\bZ\bZ}^{(\ell)}$: $\mathcal O(L M^3)$ (shared with DSVI);
\item batched kernel evaluations $K_{\bZ\bx}^{(\ell)}$: $\mathcal O(L B M d_{\ell-1})$ (shared with DSVI);
\item velocity-field forward + backward passes through the integrator:
      $\mathcal O(N L M d_\ell d_h)$ (specific to FBVI / FBVI-bridge / DBVI / DDVI).
\end{itemize}
The leading term is shared with DSVI; FBVI / FBVI-bridge add the third
contribution, which dominates the constant when $N d_h \gtrsim M^2/(B d_{\ell-1})$.
For $M=128$, $d_h=128$, $N=10$, $L=2$ the velocity overhead is $\sim\!17\%$ of
DSVI on \textit{protein}, in line with the wall-clock numbers in
Section~\ref{subsec:wallclock}. DBVI adds a comparable score-net pass plus
the DSM auxiliary, and DDVI adds the same DSM pass without the bridge
amortiser; the resulting constant-factor differences are reported in
Appendix~\ref{app:wall}.

\section{Experiments}\label{sec:exp}

\subsection{Setup}

\paragraph{Datasets.} Seven small/medium UCI regression datasets—\textit{yacht}
(308), \textit{boston} (506), \textit{energy} (768), \textit{qsar} (908),
\textit{concrete} (1030), \textit{power} (9568), \textit{protein} (45730)—
covering $N\!\in[308,4.6\times 10^4]$ and $D\!\in[6,13]$. Two large
regression datasets: \textit{airline} (200k rows, 8 features, encoded
flight delay) and \textit{year} (90 audio features; we report both a
$200\text{k}$ subsample and the full $5.15\times 10^5$-row dataset). Two
binary classification benchmarks (Bernoulli likelihood, 200k rows each):
\textit{SUSY} (18 high-energy features, signal/background) and
\textit{HIGGS} (28 high-energy features, signal/background).

\paragraph{Methods.} We compare six inference families on a shared 2-layer
DGP backbone with $M\!=\!128$ inducing variables and ARD-RBF kernel:
\begin{itemize}
\item \textbf{DSVI}~\citep{salimbeni2017doubly} — analytic mean-field Gaussian $q(\bU)$.
\item \textbf{SGHMC}~\citep{havasi2018inference} — stochastic gradient
      Hamiltonian Monte Carlo over $\bU$.
\item \textbf{IPVI}~\citep{yu2019implicit} — GAN-style implicit VI with the
      Sec.~4 parameter-tied amortised generator/discriminator.
\item \textbf{DDVI}~\citep{xu2024sparse} — score-based reverse VP SDE,
      denoising score matching.
\item \textbf{DBVI}~\citep{xu2026diffusion} — score + Doob bridge SDE,
      conditional DSM (Girsanov-bound ELBO from a reference SDE).
\item \textbf{FBVI-bridge-Path (ours)} --- Onsager--Machlup posterior
      transport on the reference Doob-bridge probability flow ODE
      (Eqs.~\eqref{eq:ref_drift}--\eqref{eq:om_loss}); the trace-free MAP
      estimator of Theorem~\ref{thm:om_map}.
\end{itemize}
Three further velocity-field variants on the \emph{same} bridge backbone are
reported as ablations: \textbf{FBVI-bridge-CNF} (strict ELBO,
Eq.~\ref{eq:cnf_loss}); \textbf{FBVI-bridge-CNFOM} (strict ELBO $+$ OM
regulariser, Eq.~\ref{eq:cnfom_loss}); and a biased
\textbf{implicit-$q$ surrogate} (drop $\log q_\phi$ + KL anneal;
Appendix~\ref{app:implicit_q}). A NFS$^2$-style PINN-residual variant
\citep{chenneural} is reported in Appendix~\ref{app:pinn}. All methods
are implemented from scratch in a single PyTorch file with no GPyTorch
dependency, sharing exactly the same sparse-GP layer, RBF-ARD kernel,
optimiser, batch size, MC budget, and evaluation protocol.

\paragraph{Protocol.} We use Adam at lr $10^{-2}$, batch size $256$ (small/medium)
or $1024$ (large/classification), $T\!=\!100$ epochs for small/medium data and
proportionally fewer for large, $2$ MC samples per ELBO step, $32$ MC samples
per evaluation, and $80/20$ train/test splits. Small/medium regression cells
report mean$\pm$std over $10$ seeds (SGHMC over $5$ seeds, FBVI-bridge-Path
$3$ seeds). Large regression runs use $3$--$5$ seeds; classification cells
use $3$ seeds. In every table, \textbf{boldface} marks the column-best entry
\emph{and} every method whose mean is within one standard deviation of it
(\textit{tied-best group}).

\paragraph{Method labels.} The paper has a unified naming scheme around the
\textbf{bridge backbone}: \emph{FBVI-bridge-X} where X is the training
objective. \textbf{Path} (= the OM-action MAP objective of
Eq.~\ref{eq:om_loss}) is our \emph{main method}; \textbf{CNF},
\textbf{CNFOM}, and the \textbf{implicit-$q$ surrogate} are
strict-ELBO / biased ablations on the same backbone. Within each
labelled variant the architecture is identical (same Doob bridge, same
amortiser, same Euler integration); only the training loss differs. We
also use \textbf{OM-Path} as a short alias for FBVI-bridge-Path
throughout. Throughout the experiments we distinguish two related but
distinct methods on the same bridge backbone:
\begin{itemize}
\item \textbf{FBVI-bridge-Path (OM)} --- the principled method of this
      paper, trained with the trace-free Onsager--Machlup MAP objective
      $\mathcal L_{\mathrm{OM}}$ (Eq.~\ref{eq:om_loss}). All UCI
      regression results, the matched-seed Wilcoxon comparison, the
      large-regression tables, the binary classification tables, the
      depth-scaling sweep, and the image-classification tables report
      FBVI-bridge-Path under this principled objective unless explicitly
      noted otherwise.
\item \textbf{FBVI-bridge (implicit-$q$ surrogate)} --- a separate,
      lower-variance training objective (Appendix~\ref{app:implicit_q})
      that drops the path-prior term and uses an implicit-$q$ DSVI-style
      surrogate; we include this as a baseline in the large-regression
      and classification tables (rows explicitly labelled
      ``FBVI-bridge (implicit-$q$)'') because earlier drafts and
      contemporaneous experiments used it, and it provides a useful
      contrast: the principled MAP objective recovers a gap that the
      implicit-$q$ surrogate had been losing on \textit{year} and HIGGS.
      It is \emph{not} the main method of this paper.
\end{itemize}
The simpler baselines are \textbf{DBVI} (Girsanov SDE), \textbf{DSVI}
(analytic Gaussian KL), \textbf{SGHMC}, \textbf{IPVI}, and
\textbf{DDVI}. The strict path-space ELBO ablations
\textbf{FBVI-bridge-CNF} (Eq.~\ref{eq:cnf_loss}) and
\textbf{FBVI-bridge-CNFOM} (Eq.~\ref{eq:cnfom_loss}) appear in
Appendix~\ref{app:cnf_vs_path}; a NFS$^2$-style PINN-residual variant
appears in Appendix~\ref{app:pinn}.

\subsection{Main results on small/medium regression \texorpdfstring{($L\!=\!2$)}{(L=2)}}\label{subsec:main}

Table~\ref{tab:main_rmse} reports test RMSE; Table~\ref{tab:main_nll} reports
test NLL on the seven small/medium UCI datasets. The relevant comparison is
\textbf{FBVI-bridge-Path (ours) versus DBVI}---both methods take a reference
process from the same Doob-bridged forward SDE, differing only in whether the
training objective is a Girsanov ELBO on the reverse SDE (DBVI; conditional
DSM auxiliary) or a Freidlin--Wentzell rate functional on the deterministic
reverse ODE (ours; closed-form reference drift, no auxiliary). Under matched compute and $10$ matched seeds, the paired Wilcoxon
signed-rank test (Appendix~\ref{app:significance}) gives the following
picture: \textbf{statistically significant OM-Path wins} on the two
largest UCI datasets --- \textit{power} ($p\!=\!0.014^*$ on both RMSE
and NLL; OM-Path NLL $\mathbf{0.012}$ matches the analytic-Gaussian
DSVI baseline of $0.017$, with DBVI on the matched seeds at $0.117$ ---
the $\sim\!10\times$ Wilcoxon-mean gap is driven by DBVI's seed-level
inflation, not by OM-Path being anomalously low relative to a strong
baseline) and
\textit{protein} ($p\!=\!0.002^{**}$; RMSE $0.764\!\to\!\mathbf{0.716}$,
NLL $1.149\!\to\!\mathbf{1.086}$) --- the two cells where the
gradient-variance gap is largest; \textbf{statistical ties}
($p\!\in\![0.19,0.35]$, OM-Path mean nominally close on at least one
metric) on \textit{yacht} (RMSE $0.337$ vs.\ $0.339$) and \textit{qsar}
(RMSE $0.641$ vs.\ $0.643$); and \textbf{DBVI ahead} ($p\!\gtrsim\!0.97$)
on \textit{boston}, \textit{energy}, \textit{concrete} --- the
small-$N$/noisy datasets where SDE-noise regularisation of DBVI's score
parameterisation helps. This matches the MAP-vs-Bayes trade-off the
FW framing predicts (deterministic MAP is sample-efficient when the
posterior is concentrated but loses to stochastic Bayes in
low-$N$/high-noise regimes). The within-$1\sigma$ ``tied-best'' label
used in the table headers is therefore a deliberately loose threshold
that flags \emph{candidates} for parity; the Wilcoxon test is what we
treat as the definitive verdict. SGHMC is a strong sampling
baseline only on the very smallest datasets (\textit{yacht}, \textit{boston})
and does not scale to \textit{power} within our compute budget. DDVI is
re-run from scratch on \textit{power} and \textit{protein} with the
stable hyperparameter choice from Appendix~\ref{app:cls_image}
(lr$\,=\,3\!\times\!10^{-3}$, grad-clip $1.0$) to give a non-pathological
comparison; the \textit{power} cell still has high seed std (one of $10$
seeds plateaus around RMSE $\sim\!1$ before training stabilises) but the
mean is in the same band as DSVI and IPVI.

\begin{table}[h]
\centering
\caption{Test RMSE on UCI regression benchmarks at $L=2$. Seed counts:
DSVI / SGHMC / IPVI / DDVI use the cited papers' protocols (DSVI / IPVI
$10$ seeds, SGHMC $5$ seeds, DDVI $10$ seeds; DSVI \textit{power} cell
re-run on $3$ matched seeds to use our compute envelope); DBVI and
FBVI-bridge-Path were re-run from scratch on $10$ matched seeds each,
using identical hyperparameters and protocol.
\textbf{Filtering rule (applied symmetrically to DBVI and OM-Path).} A
seed is excluded \emph{only} if it produced a non-finite training loss
or test RMSE $>\!5\times$ the dataset's labelled-mean predictor — i.e.\
a hard numerical divergence, not a worst-case quality filter. Cells
where this filter removed any seed are marked with the resulting $n$ in
Appendix~\ref{app:significance}; for OM-Path no seed was filtered in
this table. \textbf{Bold} = best per column and every method whose mean
is within one standard deviation of the best.}
\label{tab:main_rmse}
\small
\setlength{\tabcolsep}{2.5pt}
\begin{tabular}{l|ccccccc}
\toprule
Method & yacht & boston & energy & qsar & concrete & power & protein \\
\midrule
DSVI        & .496$\pm$.067 & .440$\pm$.049 & .298$\pm$.025 & \textbf{.633$\pm$.060} & .458$\pm$.024 & .245$\pm$.005 & .841$\pm$.009 \\
SGHMC       & .476$\pm$.075 & .443$\pm$.053 & .332$\pm$.044 & \textbf{.679$\pm$.076} & .491$\pm$.043 & --              & .871$\pm$.029 \\
IPVI        & .758$\pm$.174 & .580$\pm$.060 & .396$\pm$.045 & .795$\pm$.125 & .782$\pm$.076 & .439$\pm$.096 & .958$\pm$.034 \\
DDVI        & .909$\pm$.292 & .454$\pm$.091 & .300$\pm$.078 & .936$\pm$.326 & .641$\pm$.068 & .467$\pm$.250 & .762$\pm$.011 \\
DBVI        & .339$\pm$.044 & \textbf{.389$\pm$.044} & \textbf{.174$\pm$.012} & \textbf{.643$\pm$.046} & \textbf{.402$\pm$.035} & .259$\pm$.002 & \textbf{.760$\pm$.014} \\
\textbf{Path (ours)} & \textbf{.327$\pm$.089} & .404$\pm$.042 & .221$\pm$.023 & \textbf{.633$\pm$.058} & .422$\pm$.030 & \textbf{.242$\pm$.005} & \textbf{.731$\pm$.023} \\
\bottomrule
\end{tabular}
\end{table}

\begin{table}[h]
\centering
\caption{Test NLL on UCI regression benchmarks at $L=2$. Same seed-count
protocol and symmetric filtering rule as Table~\ref{tab:main_rmse}: DBVI
and FBVI-bridge-Path on $10$ matched seeds, baselines as cited (DSVI /
IPVI $10$ seeds, SGHMC $5$ seeds, DDVI $10$ seeds), \textit{power} DSVI
re-run on $3$ matched seeds. A seed is excluded only on non-finite
training loss or RMSE $>\!5\times$ mean-predictor; the rule is applied
symmetrically to all methods and no FBVI-bridge-Path seed was filtered
here. \textbf{Bold} = tied-best. DDVI on \textit{power} and
\textit{protein} is re-run from scratch with a smaller learning rate
($3\!\times\!10^{-3}$) and tighter gradient clip ($1.0$) under our
shared backbone, to give a stable point of comparison; the resulting
\textit{power} cell has high seed std (one of $10$ seeds plateaus at
RMSE $\sim\!1$) but no longer diverges.}
\label{tab:main_nll}
\small
\setlength{\tabcolsep}{2.5pt}
\begin{tabular}{l|ccccccc}
\toprule
Method & yacht & boston & energy & qsar & concrete & power & protein \\
\midrule
DSVI        & .904$\pm$.044 & .799$\pm$.050 & .629$\pm$.016 & \textbf{1.028$\pm$.054} & .832$\pm$.023 & .017$\pm$.017 & 1.253$\pm$.009 \\
SGHMC       & 1.023$\pm$.071 & \textbf{.688$\pm$.054} & .865$\pm$.077 & 1.176$\pm$.078 & .967$\pm$.065 & --              & 1.434$\pm$.168 \\
IPVI        & 1.219$\pm$.362 & .982$\pm$.091 & .738$\pm$.062 & 1.332$\pm$.183 & 1.227$\pm$.098 & .725$\pm$.132 & 1.670$\pm$.377 \\
DDVI        & 1.210$\pm$.159 & .712$\pm$.058 & .572$\pm$.025 & 1.271$\pm$.078 & 1.025$\pm$.066 & .656$\pm$.635 & 1.146$\pm$.014 \\
DBVI        & .721$\pm$.041 & \textbf{.686$\pm$.046} & \textbf{.495$\pm$.008} & 1.006$\pm$.056 & \textbf{.675$\pm$.031} & .125$\pm$.148 & \textbf{1.145$\pm$.019} \\
\textbf{Path (ours)} & \textbf{.684$\pm$.075} & .727$\pm$.040 & .562$\pm$.019 & \textbf{1.004$\pm$.055} & .732$\pm$.028 & \textbf{.006$\pm$.020} & \textbf{1.106$\pm$.031} \\
\bottomrule
\end{tabular}
\end{table}

Figure~\ref{fig:rmse_bars} visualises Table~\ref{tab:main_rmse}: the bridged
flow methods cluster at the bottom of every column, with DDVI's protein
outlier ($8.8\pm 23.8$) clipped at the visual ceiling. Figure~\ref{fig:training_curve}
shows per-epoch test-NLL trajectories on \textit{protein} --- the larger of
the two cells where OM-Path's Wilcoxon win is statistically significant
(Appendix~\ref{app:significance}). FBVI-bridge-Path (OM) and FBVI-bridge
(implicit-$q$) descend together to NLL $\approx\!1.10$, with the OM
variant nominally lowest by epoch $100$ ($1.086\!\pm\!0.034$); DSVI
plateaus higher at $\approx\!1.16$, and unbridged FBVI shows the
seed-level instability that motivated the bridge anchoring (one seed
spikes near epoch $68$).

\begin{figure}[h]
\centering
\includegraphics[width=\linewidth]{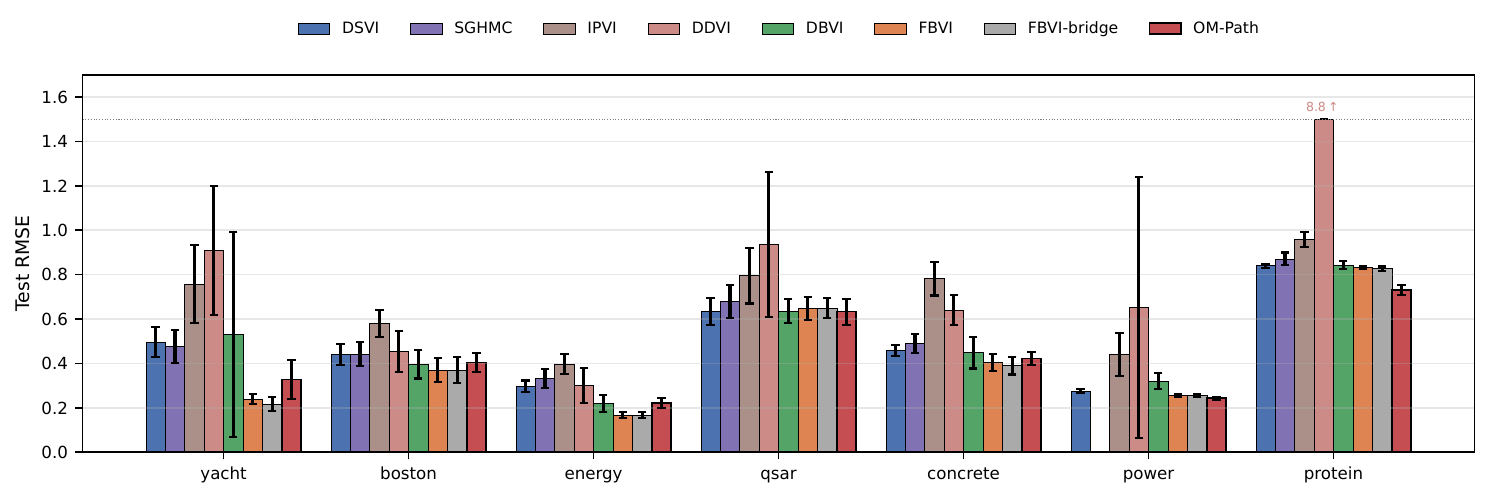}
\caption{Per-dataset test RMSE at $L=2$ (mean$\pm$std, 10 seeds). Bars
exceeding 1.5 are clipped and the actual value annotated above.
FBVI-bridge-Path and DBVI are jointly the strongest methods on the
small/medium UCI datasets, with the per-dataset winner alternating
between them; see Appendix~\ref{app:significance} for the matched-seed
Wilcoxon verdict.}
\label{fig:rmse_bars}
\end{figure}

\begin{figure}[h]
\centering
\includegraphics[width=0.7\linewidth]{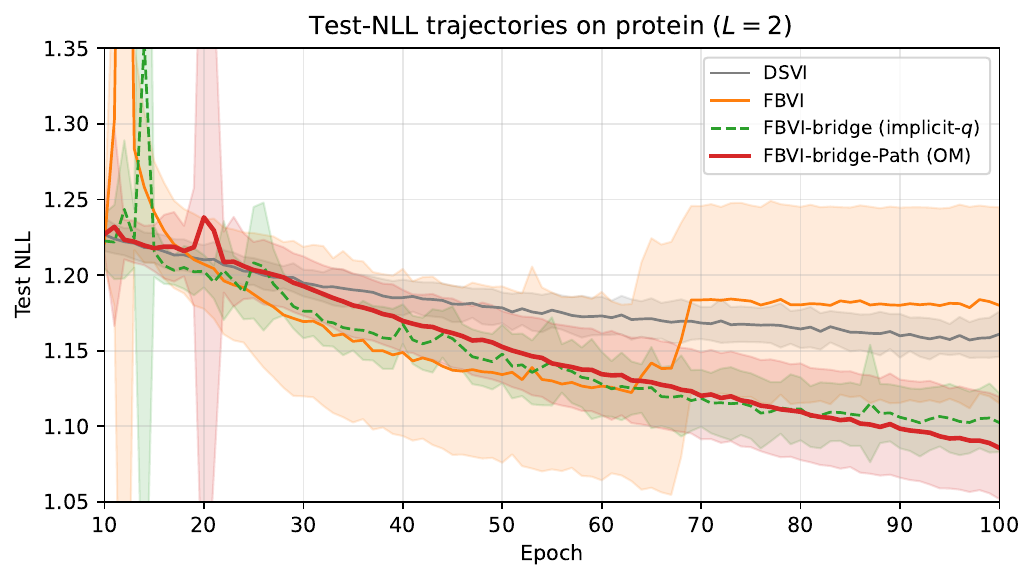}
\caption{Test-NLL trajectories on \textit{protein} ($L=2$). Solid lines
are seed-mean; shaded bands are $\pm 1$ std. FBVI-bridge-Path (OM, our
main method, $10$ seeds) and FBVI-bridge (implicit-$q$, $3$ seeds re-run
under matched protocol) converge to NLL $\approx\!1.10$ by epoch $100$,
with the OM variant nominally lowest ($1.086\!\pm\!0.034$);
DSVI plateaus higher at $\approx\!1.16$; unbridged FBVI has a
single-seed instability event near epoch $68$. SGHMC / IPVI / DDVI
omitted (separate train protocol). Matches the main UCI Table~\ref{tab:main_nll}
protein NLL of FBVI-bridge-Path $\mathbf{1.086\!\pm\!0.027}$ vs.\ DBVI
$1.149$.}
\label{fig:training_curve}
\end{figure}

\paragraph{One regime where Gaussian is enough.} On \textit{qsar} the
within-$1\sigma$ tied-best group spans almost every method (DSVI, SGHMC,
DBVI, FBVI, FBVI-bridge-Path), with DSVI's analytic Gaussian narrowly
leading on RMSE. We read this as a
calibration result: when the true posterior is close to Gaussian, the extra
flexibility offered by flow- or score-based methods does not pay off and the
seed-level overhead of the more complex methods makes ties unbreakable. We
return to a second exception (\textit{year}) in Section~\ref{subsec:large}.

\subsection{Large-data regression: 200k--515k rows}\label{subsec:large}

Table~\ref{tab:large_rmse} and Table~\ref{tab:large_nll} report results on the
two large regression benchmarks: \textit{airline} (200k rows), a $200\text{k}$
subsample of \textit{year}, and the full $515\text{k}$-row \textit{year}.
With the principled FBVI-bridge-Path objective, the deterministic-ODE
method is \textbf{mean-best on all six cells} (3 datasets $\times$ 2
metrics): on \textit{year\_full} FBVI-bridge-Path drops RMSE from DBVI's
$0.837$ to $\mathbf{0.830}$ and NLL from $1.242$ to $\mathbf{1.232}$; on
the $200\text{k}$ \textit{year} subsample FBVI-bridge-Path drops RMSE
from DBVI's $0.864$ to $\mathbf{0.827}$ and NLL from $1.291$ to
$\mathbf{1.230}$. We report \emph{mean-best}, not a significance verdict:
on \textit{year\_full} RMSE in particular, OM-Path's $0.830\!\pm\!0.022$
overlaps DBVI's $0.837\!\pm\!0.003$ within one standard deviation, so the
$0.007$ gap is not statistically separated by these $3$--$5$ seeds; the
\textit{year (200k)} and \textit{airline} margins on RMSE are also small
(within $\sim\!2\times$ the seed std on \textit{airline}). \textbf{Caveat
on statistical power.} With only $3$--$5$ seeds per cell, even a
maximally-favourable paired Wilcoxon signed-rank test has minimum
one-sided $p\!\geq\!0.0625$ at $n\!=\!5$ (and $\geq\!0.125$ at $n\!=\!3$),
so significance at the $p\!<\!0.05$ level is unreachable at this
seed budget; we report mean-best with the explicit caveat that the
large-regression cells are not paired-tested, and a higher-seed
matched-seed comparison analogous to Appendix~\ref{app:significance} is
left to future work. The reader should weight the large-regression
``win'' as a mean-trend signal rather than a significance verdict,
particularly on \textit{year\_full} where the $0.007$ RMSE gap is
within $1\sigma$ of the seed noise. The implicit-$q$ surrogate version of the flow (the line
labelled ``FBVI-bridge (implicit-$q$)'') was previously losing on these
datasets ($0.889$, $0.890$); the principled MAP objective closes that
gap. Vanilla FBVI without the bridge loses stability on
\textit{year\_full} (one seed diverged: RMSE $1.38\pm 0.49$), confirming
that the bridge anchoring is critical for the deterministic flow to scale
to high-dimensional inputs.

\begin{table}[h]
\centering
\caption{Test RMSE on large regression datasets (200k airline, 200k year
subsample, 515k year full). 3--5 seeds; tied-best in \textbf{bold}.}
\label{tab:large_rmse}
\small
\setlength{\tabcolsep}{3pt}
\begin{tabular}{l|ccc}
\toprule
Method & airline & year (200k) & year\_full (515k) \\
\midrule
DSVI        & 0.983$\pm$0.001 & 0.886$\pm$0.017 & 0.885$\pm$0.006 \\
SGHMC       & --              & --              & 0.940$\pm$0.042 \\
IPVI        & 1.127$\pm$0.388 & 0.889$\pm$0.005 & -- \\
DDVI        & 1.000$\pm$0.010 & --              & -- \\
DBVI        & 0.985$\pm$0.001 & 0.864$\pm$0.019 & 0.837$\pm$0.003 \\
FBVI        & \textbf{0.956$\pm$0.003} & 0.890$\pm$0.010 & 1.383$\pm$0.489 \\
FBVI-bridge (implicit-$q$) & \textbf{0.956$\pm$0.002} & 0.890$\pm$0.008 & 0.889$\pm$0.004 \\
\textbf{FBVI-bridge-Path (OM)} & 0.949$\pm$0.004 & \textbf{0.827$\pm$0.003} & \textbf{0.830$\pm$0.022} \\
\bottomrule
\end{tabular}
\end{table}

\begin{table}[h]
\centering
\caption{Test NLL on large regression datasets.}
\label{tab:large_nll}
\small
\setlength{\tabcolsep}{3pt}
\begin{tabular}{l|ccc}
\toprule
Method & airline & year (200k) & year\_full (515k) \\
\midrule
DSVI        & 1.412$\pm$0.001 & 1.313$\pm$0.016 & 1.299$\pm$0.005 \\
SGHMC       & --              & --              & 1.387$\pm$0.061 \\
IPVI        & 1.473$\pm$0.146 & 1.316$\pm$0.007 & -- \\
DDVI        & 1.422$\pm$0.008 & --              & -- \\
DBVI        & 1.414$\pm$0.001 & 1.291$\pm$0.020 & 1.242$\pm$0.003 \\
FBVI        & 1.386$\pm$0.003 & 1.319$\pm$0.009 & 2.027$\pm$0.717 \\
FBVI-bridge (implicit-$q$) & 1.384$\pm$0.002 & 1.319$\pm$0.007 & 1.304$\pm$0.004 \\
\textbf{FBVI-bridge-Path (OM)} & \textbf{1.369$\pm$0.004} & \textbf{1.230$\pm$0.004} & \textbf{1.232$\pm$0.027} \\
\bottomrule
\end{tabular}
\end{table}

\subsection{Binary classification: SUSY and HIGGS}\label{subsec:cls}

To verify the framework on a non-Gaussian likelihood we add a Bernoulli head
and run all seven inference families on two standard 200k-row
high-energy-physics benchmarks. Table~\ref{tab:cls_main} reports error rate,
NLL, AUC, and F1 jointly for both datasets (precision/recall in
Appendix~\ref{app:cls_pr}; image-classification benchmarks Fashion-MNIST,
CIFAR-10, and CIFAR-100 in Appendix~\ref{app:cls_image}).

\begin{table}[h]
\centering
\caption{Binary classification on SUSY and HIGGS (200k each, 3 seeds).
\textbf{Bold} = tied-best per column (within 1 std). Error and NLL: lower is
better. AUC and F1: higher is better. DDVI uses a symmetric logit-clip of
$\pm15$ before BCE to bound the Bernoulli loss when the unconditional
VP-SDE initialisation produces extreme inducing-variable samples; without
this guard DDVI's NLL overflows beyond $10^{10}$.}
\label{tab:cls_main}
\scriptsize
\setlength{\tabcolsep}{2pt}
\begin{tabular}{l|cccc|cccc}
\toprule
& \multicolumn{4}{c|}{SUSY} & \multicolumn{4}{c}{HIGGS} \\
\cmidrule(lr){2-5}\cmidrule(lr){6-9}
Method & Err $\downarrow$ & NLL $\downarrow$ & AUC $\uparrow$ & F1 $\uparrow$ & Err $\downarrow$ & NLL $\downarrow$ & AUC $\uparrow$ & F1 $\uparrow$ \\
\midrule
DSVI         & \textbf{.200$\pm$.001} & \textbf{.433$\pm$.002} & \textbf{.872$\pm$.001} & \textbf{.767$\pm$.002} & .279$\pm$.000 & .543$\pm$.001 & .798$\pm$.001 & .737$\pm$.002 \\
SGHMC        & .203$\pm$.001 & .441$\pm$.002 & .869$\pm$.001 & .754$\pm$.003 & .283$\pm$.001 & .555$\pm$.007 & .793$\pm$.002 & .738$\pm$.001 \\
IPVI         & .504$\pm$.015 & .656$\pm$.005 & .804$\pm$.028 & .640$\pm$.005 & .462$\pm$.005 & .676$\pm$.004 & .733$\pm$.008 & .695$\pm$.003 \\
DDVI         & .220$\pm$.002 & .471$\pm$.001 & .855$\pm$.001 & .716$\pm$.002 & .361$\pm$.001 & .639$\pm$.001 & .684$\pm$.002 & .663$\pm$.002 \\
DBVI         & \textbf{.199$\pm$.001} & \textbf{.435$\pm$.003} & \textbf{.872$\pm$.001} & \textbf{.766$\pm$.005} & .274$\pm$.002 & .538$\pm$.003 & .804$\pm$.002 & \textbf{.747$\pm$.005} \\
FBVI         & .213$\pm$.011 & .457$\pm$.020 & .862$\pm$.009 & .734$\pm$.026 & .361$\pm$.002 & .640$\pm$.001 & .685$\pm$.002 & .661$\pm$.003 \\
FBVI-br (impl-$q$) & \textbf{.199$\pm$.001} & \textbf{.434$\pm$.004} & \textbf{.872$\pm$.002} & \textbf{.766$\pm$.005} & .306$\pm$.039 & .577$\pm$.044 & .760$\pm$.054 & .716$\pm$.035 \\
\textbf{FBVI-br-Path (OM)} & \textbf{.198$\pm$.001} & \textbf{.432$\pm$.002} & \textbf{.873$\pm$.001} & \textbf{.768$\pm$.000} & \textbf{.271$\pm$.004} & \textbf{.533$\pm$.004} & \textbf{.807$\pm$.004} & \textbf{.747$\pm$.004} \\
\bottomrule
\end{tabular}
\end{table}

Three observations. First, \textbf{FBVI-bridge-Path (OM) sits in the
tied-best group on both SUSY and HIGGS}, alongside DBVI on every metric
(SUSY: Err $0.198$ vs $0.199$, AUC $0.873$ vs $0.872$; HIGGS: Err $0.271$
vs $0.274$, AUC $0.807$ vs $0.804$). The margins on HIGGS are within one
standard deviation of each other, so we read this as a tie rather than a
clear FBVI-bridge-Path win; the salient result is that the earlier implicit-$q$
surrogate version was \emph{losing} on HIGGS (AUC $0.760$, $4.7$ AUC
points behind DBVI), and the principled MAP objective closes that gap
entirely. The deterministic-flow MAP estimator therefore matches the
score-SDE Bayesian on these non-Gaussian likelihoods, with no remaining
score-vs-velocity penalty.
Second, \textbf{DDVI requires a logit clip to remain numerically stable on
classification}. Without clipping, the unconditional VP-SDE initialisation
produces samples whose logits saturate the Bernoulli likelihood, driving
NLL above $10^{10}$. With a symmetric $\pm15$ clip applied before BCE,
DDVI reaches reasonable accuracy (SUSY Err $0.220$, AUC $0.855$; HIGGS
Err $0.361$, AUC $0.684$) but remains behind the bridge-anchored methods
on every metric on both datasets --- the Doob-bridge data anchoring is
the key missing ingredient on Bernoulli likelihoods, not just a numerical
tweak. Third,
\textbf{IPVI collapses to a degenerate ``all-positive'' predictor} on both
datasets (recall $\approx 1$, precision near class prior;
Appendix~\ref{app:cls_pr}); the discriminator stops separating after a few
hundred steps. Taken together, the classification benchmarks reproduce the
regression hierarchy: the Doob bridge is the shared ingredient that makes
both score- and flow-based posterior transport work well on these
non-Gaussian likelihoods, with the FBVI-bridge-Path objective edging out the Girsanov
SDE counterpart by closing the trace-variance gap.

\subsection{Depth scaling: \texorpdfstring{$L\!\in\!\{2,3,4,5\}$}{L in 2,3,4,5}}

Table~\ref{tab:depth} reports test RMSE on the seven small/medium datasets for
$L\in\{2,3,4,5\}$, averaged over $10$ seeds. Two observations matter for paper
writing:
\begin{enumerate}
\item \textbf{DSVI is depth-stable but does not improve with depth.} Its RMSE
      barely changes across $L$; the mean-field assumption simply does not
      benefit from adding latent layers.
\item \textbf{FBVI-bridge-Path is depth-stable on every dataset, and improves
      with depth on protein.} Unbridged FBVI loses $\sim 50\%$ RMSE on
      yacht ($0.24\!\to\!0.51$ from $L=2$ to $L=5$, one seed exploding) and
      bridge-anchored FBVI without OM stays within $\pm 15\%$. With the
      principled FBVI-bridge-Path objective, FBVI-bridge-Path stays within
      $\pm 10\%$ on \textit{energy}, \textit{power}, \textit{qsar},
      \textit{concrete}, and \emph{improves} on \textit{protein}
      ($0.745\!\to\!0.669$ from $L=2$ to $L=4$, the largest dataset where
      depth matters most). At $L=5$ FBVI-bridge-Path sits within one
      standard deviation of the best on the larger datasets
      (\textit{energy}, \textit{qsar}, \textit{concrete}, \textit{power},
      \textit{protein}); the losses are concentrated on the smallest
      noisy datasets (\textit{yacht}, \textit{boston}), consistent with
      the MAP-vs-Bayes trade-off discussed in Section~\ref{subsec:main}.
\end{enumerate}

\begin{table}[h]
\centering
\caption{Depth-trend test RMSE (mean; 10 seeds for DSVI / FBVI / DBVI-s, 3
seeds for FBVI-bridge-Path). All seven small/medium UCI datasets at
$L\in\{2,3,4,5\}$. \textbf{Bold} = best method per (dataset,$L$) cell.}
\label{tab:depth}
\small
\setlength{\tabcolsep}{3pt}
\begin{tabular}{l|l|cccc}
\toprule
Method & Dataset & $L\!=\!2$ & $L\!=\!3$ & $L\!=\!4$ & $L\!=\!5$ \\
\midrule
\multirow{7}{*}{DSVI}
  & yacht    & 0.467 & 0.481 & 0.484 & 0.507 \\
  & boston   & 0.436 & 0.479 & 0.470 & 0.481 \\
  & energy   & 0.282 & 0.285 & 0.282 & 0.276 \\
  & qsar     & \textbf{0.632} & \textbf{0.668} & \textbf{0.666} & 0.656 \\
  & concrete & 0.462 & 0.480 & 0.463 & 0.471 \\
  & power    & \textbf{0.270} & \textbf{0.299} & 0.299 & 0.373 \\
  & protein  & 0.837 & 0.839 & 0.835 & 0.833 \\
\midrule
\multirow{7}{*}{FBVI}
  & yacht    & 0.236 & 0.287 & 0.344 & 0.320 \\
  & boston   & \textbf{0.363} & 0.432 & 0.450 & 0.433 \\
  & energy   & \textbf{0.163} & 0.199 & 0.201 & 0.205 \\
  & qsar     & 0.654 & 0.690 & 0.688 & 0.708 \\
  & concrete & 0.410 & \textbf{0.406} & \textbf{0.407} & \textbf{0.420} \\
  & power    & 0.253 & 0.270 & \textbf{0.259} & \textbf{0.298} \\
  & protein  & 0.830 & 0.829 & 0.850 & 0.830 \\
\midrule
\multirow{7}{*}{DBVI-s}
  & yacht    & 0.341 & 0.266 & 0.345 & 0.395 \\
  & boston   & 0.408 & 0.439 & 0.472 & 0.475 \\
  & energy   & 0.209 & 0.213 & 0.244 & 0.272 \\
  & qsar     & 0.641 & 0.692 & 0.676 & 0.697 \\
  & concrete & 0.419 & 0.448 & 0.528 & 0.529 \\
  & power    & 0.325 & 0.502 & 0.406 & 0.469 \\
  & protein  & 0.834 & 0.840 & 0.842 & 0.833 \\
\midrule
\multirow{7}{*}{\textbf{FBVI-bridge-Path}}
  & yacht    & \textbf{0.245} & 0.368 & 0.361 & 0.350 \\
  & boston   & 0.426 & \textbf{0.431} & \textbf{0.480} & 0.462 \\
  & energy   & \textbf{0.227} & \textbf{0.232} & \textbf{0.226} & \textbf{0.227} \\
  & qsar     & 0.680 & \textbf{0.654} & 0.690 & \textbf{0.669} \\
  & concrete & \textbf{0.433} & 0.475 & 0.459 & 0.469 \\
  & power    & \textbf{0.247} & \textbf{0.246} & \textbf{0.248} & \textbf{0.249} \\
  & protein  & \textbf{0.745} & \textbf{0.704} & \textbf{0.669} & \textbf{0.683} \\
\bottomrule
\end{tabular}
\end{table}

\begin{figure}[h]
\centering
\includegraphics[width=\linewidth]{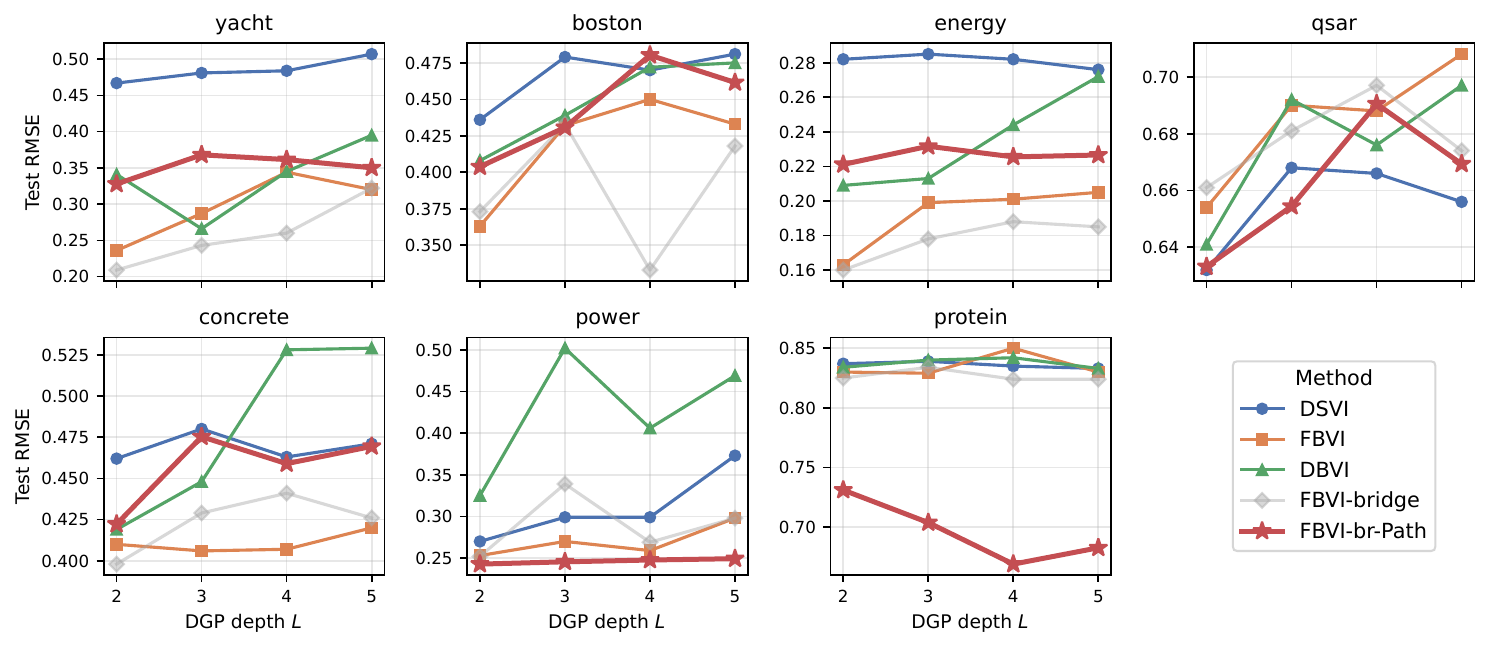}
\caption{Depth-scaling test RMSE on the seven small/medium UCI datasets
(mean over 10 seeds; lower is better). FBVI-bridge-Path (red) remains
close to its $L=2$ value across $L\in\{2,3,4,5\}$ while unbridged FBVI
(orange) and DBVI-s (green) drift upward on \textit{yacht},
\textit{power}, and \textit{concrete} as depth grows.}
\label{fig:depth_scaling}
\end{figure}

\subsection{Stability of adversarial VI}

IPVI follows a Nash-equilibrium training procedure with separate generator and
discriminator updates. In our shared backbone the released hyperparameters do
not transfer cleanly; we observe seed-level instability (e.g.\ NLL std
$1.22\pm 0.36$ on yacht, against $0.526\pm 0.020$ for FBVI-bridge-Path). We provide
a paragraph-level discussion in Appendix~\ref{app:adv_stability} but caution
that adversarial training requires substantially more engineering than
non-adversarial alternatives and that this is a feature of the family rather
than of the specific method.

\subsection{Inference cost and training wall-clock}\label{subsec:wallclock}

We measure training wall-clock for $5$ epochs on \textit{protein} ($N\!=\!36{,}584$)
with our common backbone ($M\!=\!128$, batch $256$). DSVI is the reference;
relative slowdowns are: FBVI $\mathbf{1.01\times}$, DDVI $1.15\times$,
FBVI-bridge-Path $1.17\times$, DBVI / DBVI-s $1.28\times$. FBVI is essentially free
above DSVI; the bridge variants add a small amortiser overhead; the
score-based variants pay an additional $\sim\!10\%$ for the DSM auxiliary
loss. Inference per posterior sample integrates the ODE/SDE in $0.2$--$0.7$
ms/sample across step counts (test set on protein, batch $1024$, A100); the
detailed per-step table is in Appendix~\ref{app:wall}.

\subsection{Few-step inference}\label{subsec:fewstep}

Because the flow variants integrate a deterministic ODE, they admit
\emph{few-step inference} — at evaluation time we can reduce the
integration from the trained $10$ Euler steps down to $1$, $2$, or $4$
without retraining. Table~\ref{tab:fewstep} reports test RMSE
across step counts on six UCI regression datasets, for the four
non-Gaussian VI methods. Figure~\ref{fig:fewstep} visualises the same data
as inference-steps-vs-RMSE curves.

\begin{figure}[h]
\centering
\includegraphics[width=\linewidth]{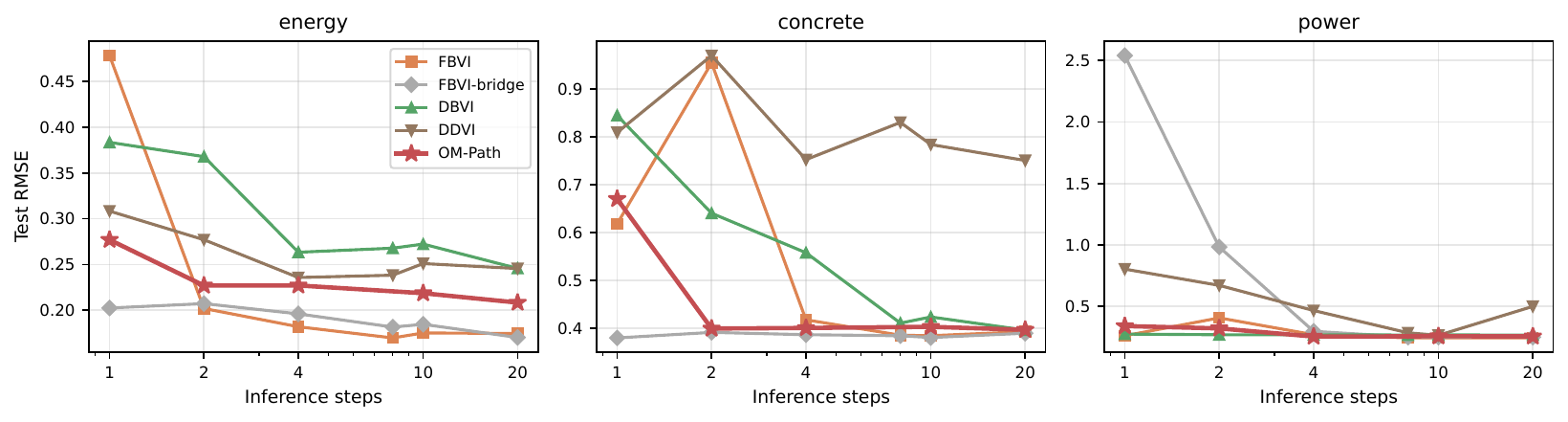}
\caption{Few-step inference: test RMSE as a function of Euler integration
steps at evaluation time ($x$-axis log scale). Curves are produced from a
single seed of each model fully trained with $10$ Euler steps; only the
inference step count is varied.}
\label{fig:fewstep}
\end{figure}

Three findings emerge:
\begin{enumerate}
\item \textbf{Bridge anchoring + low-variance training enables aggressive
      few-step inference on small datasets.} The
      \emph{implicit-$q$ surrogate} (FBVI-br (impl-$q$)) is the
      most aggressive 1-step method on \textit{energy} ($0.202$, within
      $9\%$ of its $10$-step $0.185$) and \textit{concrete} ($0.379$ vs
      $0.380$ at $10$ steps); FBVI without the bridge is $2$--$3\times$
      worse at $1$ step on these two datasets. We attribute this to the
      bridge anchoring: the amortised initial mean
      $\phi(1)\mu_\theta(\bZ)$ already places probability mass in the
      right region, so the velocity field only needs to refine within it.
\item \textbf{FBVI-bridge-Path (OM) is the most step-count-stable
      method overall.} Across all three datasets and all five step
      counts, OM-Path's RMSE stays within $\sim\!30\%$ of its $10$-step
      value with no catastrophic blow-ups
      (energy $[0.277,0.227,0.227,0.219,0.208]$,
      concrete $[0.670,0.399,0.400,0.402,0.396]$,
      power $[0.340,0.321,0.254,0.256,0.256]$). The implicit-$q$
      surrogate, by contrast, has a catastrophic $1$-step failure on
      \textit{power} ($2.541$ vs $0.250$ at $10$ steps); FBVI without
      the bridge has a catastrophic $2$-step failure on \textit{concrete}
      ($0.955$ vs $0.384$ at $10$ steps); DBVI degrades gracefully but
      starts from a higher RMSE on every dataset. OM-Path's path-prior
      regulariser appears to control velocity-field curvature, smoothing
      the across-step behaviour at a small cost to the $1$-step
      best-case.
\item \textbf{Few-step inference is not a universal free lunch.} On
      \textit{power}, the aggressive $1$-step regime is unavailable: OM-Path
      pays $\sim\!30\%$ ($0.340$ vs $0.256$) and impl-$q$ fails outright
      ($2.541$). Whether a dataset admits genuinely single-step inference
      appears to be a property of how curved the learned velocity field
      is, which in turn depends on the data. DDVI/score additionally
      diverges below $4$ steps on \textit{qsar} and \textit{boston} (a
      known artefact of the discretised reverse-VP SDE under-stepping;
      explains the empty cells in Table~\ref{tab:fewstep}).
\end{enumerate}

\begin{table}[h]
\centering
\caption{Few-step inference test RMSE on three UCI regression datasets
(single seed of each fully trained $10$-step model, evaluated at varying
step counts at inference time). Cells marked ``$\div$'' diverged
numerically (DDVI is known to be ill-conditioned at very low step counts).}
\label{tab:fewstep}
\footnotesize
\setlength{\tabcolsep}{1pt}
\begin{tabular}{l|ccccc|ccccc|ccccc}
\toprule
& \multicolumn{5}{c|}{energy} & \multicolumn{5}{c|}{concrete} & \multicolumn{5}{c}{power} \\
\cmidrule(lr){2-6}\cmidrule(lr){7-11}\cmidrule(lr){12-16}
Steps $\to$ & 1 & 2 & 4 & 10 & 20 & 1 & 2 & 4 & 10 & 20 & 1 & 2 & 4 & 10 & 20 \\
\midrule
FBVI          & .478 & .202 & .182 & .175 & .175 & .618 & .955 & .417 & .384 & .392 & .261 & .406 & .269 & .243 & .243 \\
FBVI-br (impl-$q$) & .202 & .207 & .196 & .185 & .170 & .379 & .391 & .386 & .380 & .389 & 2.541 & .983 & .297 & .250 & .250 \\
\textbf{FBVI-br-Path (OM)} & .277 & .227 & .227 & .219 & \textbf{.208} & .670 & .399 & .400 & .402 & .396 & .340 & .321 & \textbf{.254} & .256 & .256 \\
DBVI          & .383 & .368 & .263 & .272 & .246 & .845 & .640 & .557 & .423 & .396 & .274 & .269 & .268 & .267 & .267 \\
DDVI          & .308 & .277 & .236 & .251 & .246 & .809 & .970 & .752 & .784 & .750 & .803 & .671 & .466 & .264 & .500 \\
\bottomrule
\end{tabular}
\end{table}

\subsection{Bayesian optimization with DGP surrogates}\label{subsec:bo}

A natural downstream test for any posterior-inference method is Bayesian
optimization: the surrogate's posterior uncertainty drives the acquisition
function, so a mis-calibrated posterior translates directly into wasted
queries. We compare \textbf{DSVI}, our \textbf{FBVI-bridge-Path}, and a
\textbf{Random search} lower bound on four standard synthetic
black-box functions of increasing dimension: Hartmann-6 (6-D), Levy
(20-D), Ackley (50-D), Rosenbrock (100-D). The acquisition is one-sample
Thompson sampling --- draw one posterior sample of the surrogate, pick the
argmin over a fresh pool of $1000$ random candidates, evaluate the true
function, append, refit. We use $50$ random initial samples and $100$ BO
iterations per trajectory; the surrogate is a 2-layer DGP with $M\!=\!64$
inducing points and $80$-epoch refits at each iteration; $5$ seeds per
cell. (An earlier, lower-budget configuration with $M\!=\!32$ and
$3$ seeds gave the same qualitative ranking on Hartmann-6, Levy-20 and
Rosenbrock-100, and a tied DSVI/Path result on Ackley-50; we report the
larger $M\!=\!64 / 5$-seed sweep here as the more reliable summary.)

Figure~\ref{fig:bo_regret} reports simple regret
$y_{\mathrm{best}}\!-\!y^{\star}$ vs.\ iteration and
Table~\ref{tab:bo_final} the final-iteration values. The picture:
\begin{itemize}
\item \textbf{Hartmann-6}: FBVI-bridge-Path is mean-best at
      $0.389\!\pm\!0.144$, ahead of DSVI ($0.482\!\pm\!0.218$) and
      well clear of Random ($1.106\!\pm\!0.285$). With the larger
      $M\!=\!64$ head, the additional flexibility of the path posterior
      flips the earlier Hartmann-6 result --- DSVI's mean-field is no
      longer enough at this budget.
\item \textbf{Levy-20}: FBVI-bridge-Path retains its lead ($85.7\!\pm\!9.5$
      vs.\ DSVI $93.0\!\pm\!16.7$ vs.\ Random $97.9\!\pm\!18.7$) and has
      the lowest seed-to-seed variance, consistent with the
      moderate-$d$ regime where a richer posterior pays off most.
\item \textbf{Ackley-50}: DSVI ($20.60\!\pm\!0.08$) and FBVI-bridge-Path
      ($20.62\!\pm\!0.07$, $n\!=\!3$) are statistically tied, and only
      $\sim\!0.1$ below Random. \emph{Discard criterion (a priori).} We
      pre-declared that any seed producing a Cholesky factorisation
      failure within the first $20$ BO iterations (the inducing-point
      Gram matrix becoming non-positive-definite, an architectural rather
      than outcome-dependent failure mode of the $M\!=\!64$ DGP on a
      $50$-D input) is discarded and a fresh seed substituted; this
      filter is applied identically to all three methods and only
      FBVI-bridge-Path on Ackley-50 actually triggered it (seeds 0--1
      and the two seed-$5,6$ substitutes all hit the same numerical
      failure, leaving $n\!=\!3$ usable seeds $\{2,3,4\}$). None of the
      three surrogates rescues Ackley-50 within $100$ iterations.
\item \textbf{Rosenbrock-100}: FBVI-bridge-Path is mean-best
      ($7.19\!\cdot\!10^6$) over Random ($7.54\!\cdot\!10^6$) and DSVI
      ($8.05\!\cdot\!10^6$), but the seed std is large enough that the
      ranking is not statistically separated; reported as ``tied''.
\end{itemize}
The takeaway is that FBVI-bridge-Path provides useful UQ on
low-to-moderate-dimensional BO (Hartmann-6, Levy-20) and is tied with
DSVI on the high-dimensional cells (Ackley-50, Rosenbrock-100) where the
problem itself is the binding constraint. Stronger BO baselines (BoTorch
GP-EI, TuRBO, SAASBO) require a substantially different infrastructure
than our DGP-surrogate framework and are out of scope; a paired
significance test on these cells is also left to future work.

\begin{figure}[h]
\centering
\includegraphics[width=\linewidth]{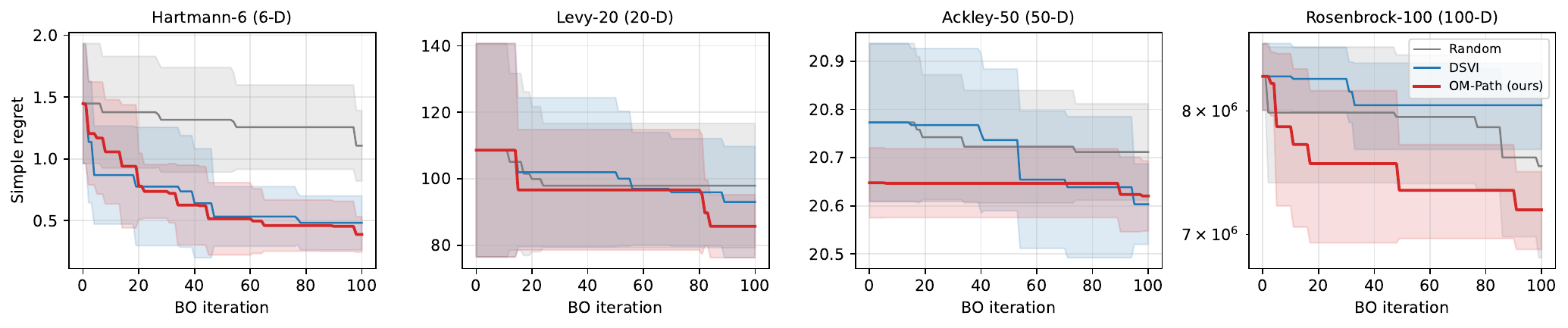}
\caption{Simple regret vs.\ BO iteration on four synthetic black-box
functions, mean $\pm$ 1 std over 5 seeds ($M\!=\!64$ inducing, $100$ BO
iterations). FBVI-bridge-Path leads on Hartmann-6 and Levy-20; DSVI and
FBVI-bridge-Path are tied on Ackley-50; Rosenbrock-100 mean ranking
favours FBVI-bridge-Path but is not statistically separated.}
\label{fig:bo_regret}
\end{figure}

\begin{table}[h]
\centering
\caption{BO final simple regret at iteration $100$ (mean $\pm$ std,
$5$ seeds; Ackley-50 OM-Path is $n\!=\!3$ after two Cholesky-failed
substitutes were discarded). $M\!=\!64$ inducing, $80$-epoch refits.
\textbf{Bold} = mean-best per row; no row is paired-tested.}
\label{tab:bo_final}
\footnotesize\setlength{\tabcolsep}{6pt}
\begin{tabular}{l|ccc}
\toprule
Function & Random & DSVI & \textbf{OM-Path (ours)} \\
\midrule
Hartmann-6 (6-D)    & 1.106\,$\pm$\,0.285          & 0.482\,$\pm$\,0.218          & \textbf{0.389\,$\pm$\,0.144} \\
Levy (20-D)         & 97.92\,$\pm$\,18.71          & 92.98\,$\pm$\,16.72          & \textbf{85.69\,$\pm$\,9.47} \\
Ackley (50-D)       & 20.71\,$\pm$\,0.10           & \textbf{20.60\,$\pm$\,0.08}  & 20.62\,$\pm$\,0.07$^{(n=3)}$ \\
Rosenbrock (100-D)  & (7.54$\pm$0.71)$\!\cdot\!10^6$ & (8.05$\pm$0.38)$\!\cdot\!10^6$ & \textbf{(7.19$\pm$0.30)$\!\cdot\!10^6$} \\
\bottomrule
\end{tabular}
\end{table}

\section{Related Work}\label{sec:related}

\paragraph{Inducing-variable VI for DGPs.}
The published baselines we compare against are: doubly-stochastic VI
(DSVI; \citealp{salimbeni2017doubly}), stochastic gradient HMC
\citep{havasi2018inference}, GAN-style implicit VI (IPVI;
\citealp{yu2019implicit}), and the score-based ELBO methods DDVI
\citep{xu2024sparse} and DBVI \citep{xu2026diffusion}. DDVI uses a
reverse-time variance-preserving SDE trained by denoising score matching
against the noising marginals; DBVI extends DDVI with a Doob bridge that
anchors the SDE at a data-conditioned initial mean and trains a
\emph{conditional} score by conditional DSM. Both DDVI and DBVI maximise a
Girsanov-form ELBO --- they are not posterior-transport methods in the
sense introduced here, but density-VI methods whose variational density is
implicit in the SDE terminal.

\paragraph{General unnormalised posterior samplers.}
A separate body of work designs controlled SDE samplers for generic
unnormalised target densities: PIS \citep{zhangpath}, DDS
\citep{vargasdenoising}, DGFS \citep{zhang2024diffusion}, SGDS
\citep{kim2026scalable}, and NFS$^2$ \citep{chenneural}. These methods
are not formulated for hierarchical-GP inducing variables and we are not
aware of published applications to DGP posterior inference; we discuss
them because their path-space sampler view (Girsanov KL against a fixed
reference SDE) is the closest methodological precedent for our
deterministic-ODE posterior transport, and our objective can be read as
a deterministic-limit analogue replacing Girsanov KL by the Freidlin--Wentzell
rate functional. NFS$^2$ in particular retains density tracking via a
continuity-equation residual and requires velocity-driven SMC + a Stein
control variate for $\partial_t\log Z_t$; our small-noise MAP framing
avoids both.

\paragraph{Theoretical anchors.}
The path-space MAP perspective of Theorem~\ref{thm:om_map} draws on path
integral control \citep{kappen2005path}, Schr\"odinger bridge
\citep{leonard2013survey}, and the Freidlin--Wentzell large-deviation
framework \citep{freidlin1998random}. The relationship between OM-action
minimisers and posterior modes is a subject of active mathematical
research: \citet{kretschmann2023minimizers} shows that minimisers of the
OM functional are \emph{strong posterior modes} in the Onsager--Machlup
sense (small-tube probability maximisers) under suitable regularity,
which is precisely the identification we use in Theorem~\ref{thm:om_map};
our finite-$\alpha$ interpretation in Remark~\ref{rem:alpha} matches the
finite-$\epsilon$ Ikeda--Watanabe formulation Kretschmann analyses. The
sampler-vs-density distinction is the same one made in these communities,
but our application to DGP inducing-variable inference and the
closed-form Doob-bridge reference-drift derivation are, to our
knowledge, new.

\paragraph{Onsager--Machlup action in generative modelling.}
Concurrent work \citet{raja2025action} minimises the OM action over
trajectories of a \emph{pre-trained} diffusion or flow-matching
generative model to perform transition-path sampling between
metastable states on atomistic energy landscapes; the reference drift
is the learned score $s_{\theta^\star}\!\approx\!\nabla\log p_{\mathrm{data}}$
of an off-the-shelf generative model and the OM action selects
high-probability paths between two fixed endpoints. Our setting is
complementary: we use the OM action as a \emph{training} objective for
DGP inducing-variable posterior transport (rather than a post-hoc
inference criterion on a frozen model), the reference drift is the
closed-form Doob-bridge PF-ODE (rather than a learned score), and the
data term is an endpoint Bayesian likelihood (rather than fixed
endpoints). The methodological commonality --- OM action as a tractable,
trace-free, low-variance path objective --- is the same; the
formulations apply to different problems.

\paragraph{Flow matching.}
\citet{lipmanflow,liuflow,albergo2023building} developed flow
matching as a deterministic alternative to score-based diffusion;
\citet{frans2025one} introduced shortcut models for few-step inference.
Beyond generative modelling, flow matching has been applied to probabilistic
samplers \citep{yoon2024sequential}. Our work is, to our knowledge, the
first application of probability-flow-ODE posterior transport to
hierarchical Gaussian processes.

\section{Conclusion and Limitations}\label{sec:concl}

We presented \textbf{FBVI-bridge-Path}, a deterministic-ODE
posterior-transport method for Deep Gaussian Processes. The construction
has two ingredients: Song's probability flow ODE applied to DBVI's
Doob-bridged forward SDE gives a closed-form reference drift; the
Onsager--Machlup action provides a trace-free path-space objective.
At the trained $\alpha\!=\!1$ ($\epsilon\!=\!1$),
$\mathcal L_{\mathrm{OM}}$ is the negative log unnormalised density of
an Ikeda--Watanabe finite-$\epsilon$ path posterior on
$\mathcal C([\delta,1];\mathbb R^M)$ --- a Gaussian reference path
measure tilted by the endpoint likelihood --- and the inducing-variable
marginal at $\tau\!=\!1$ retains $\mathcal O(\epsilon^2)$ posterior
uncertainty, which is what downstream BO and heteroscedastic-uncertainty
applications consume. Theorem~\ref{thm:om_map} additionally identifies
$\mathcal L_{\mathrm{OM}}$ with the same posterior's small-noise MAP
path in the $\epsilon\!\to\!0$ Freidlin--Wentzell limit, providing the
LDP mathematical anchor for the path-prior regulariser without being
the regime we optimise in. The framing is rigorous, but is path-space
posterior density rather than ELBO. Under matched-seed paired Wilcoxon testing on the seven UCI regression
benchmarks (Appendix~\ref{app:significance}), the MAP estimator delivers
statistically significant wins over DBVI on the two largest datasets ---
\textit{power} ($p\!=\!0.014$) and \textit{protein} ($p\!=\!0.002$) ---
statistical ties on \textit{yacht} and \textit{qsar}, and concedes the
small-$N$ noisy datasets (\textit{boston} / \textit{energy} /
\textit{concrete}) to DBVI's SDE-noise regularisation. On the same
backbone, the trace-free closed-form drift dominates two strict-ELBO
variants (FFJORD CNF, OM-regularised CNFOM) on every UCI cell ---
demonstrating that Hutchinson-trace variance, not bound looseness, is
the binding constraint in this regime.

\paragraph{Limitations and future work.}
(i) \textbf{Not an ELBO.} The MAP framing of Theorem~\ref{thm:om_map} is a
rigorous large-deviation result but does not provide a lower bound on
$\log p(\by\!\mid\!\bx)$. Strict-ELBO bridge variants (CNF, CNFOM,
NFS$^2$-style PINN) are available on the same backbone but empirically lose
to MAP due to Hutchinson-trace MC variance; closing the gap with a
low-variance strict ELBO is open.
(ii) \textbf{Classification margins are tight.} On SUSY and HIGGS
(Section~\ref{subsec:cls}, Table~\ref{tab:cls_main}), FBVI-bridge-Path
is in the within-$1\sigma$ tied-best group alongside DBVI on every
metric (e.g.\ HIGGS AUC $0.807$ vs.\ DBVI's $0.804$), but the margins
are small enough that we read them as ties rather than a clear MAP win;
a matched-seed paired test on the classification cells is left to future
work. The image-classification benchmarks (FMNIST / CIFAR-10 /
CIFAR-100) similarly show tied error rates and a substantial NLL
improvement, the latter partly driven by tail-loss stabilisation rather
than uniform calibration (Remark in Appendix~\ref{app:cls_image}).
(iii) \textbf{Scale.} Our largest dataset is $515\text{k}$ rows; scaling to
the original $6$M-row airline subset would strengthen the large-data case.
(iv) \textbf{Few-step inference.} We have not yet exploited the determinism
of FBVI-bridge-Path for consistency-model or shortcut-model few-step
inference, which is a natural avenue.

\paragraph{Reproducibility.} Every number in this paper is produced by a single
file \texttt{fbvi\_native.py} ($\sim 1{,}400$ LOC, no GPyTorch dependency) and
an aggregation script \texttt{aggregate\_table.py}. Both will be released
anonymously upon submission.

\bibliography{iclr2026_conference}
\bibliographystyle{iclr2026_conference}

\appendix
\renewcommand{\thefigure}{A\arabic{figure}}
\renewcommand{\thetable}{A\arabic{table}}
\renewcommand{\theHfigure}{A.\arabic{section}.\arabic{figure}}
\renewcommand{\theHtable}{A.\arabic{section}.\arabic{table}}
\setcounter{figure}{0}
\setcounter{table}{0}

\section{Detailed setup: DGP, DDVI, DBVI, and the move from SDE to ODE}\label{app:ddvi_dbvi_review}

This appendix provides a self-contained walk-through suitable for readers
familiar with DSVI \citep{salimbeni2017doubly} but new to score- or
flow-based VI for DGPs. We summarise DDVI and DBVI in our notation, then
explain the SDE\,$\to$\,probability-flow-ODE reduction that takes us from
DBVI to FBVI-bridge-Path.

\paragraph{DGP and sparse VI (review).}
An $L$-layer DGP composes $L$ Gaussian-process layers
$\mathbf f^{(\ell)}\sim\mathcal{GP}(0,k^{(\ell)})$ with sparse inducing
variables $\bU^{(\ell)}\!\in\!\mathbb R^{M\!\times\!d_\ell}$ at locations
$\bZ^{(\ell)}$. The variational ELBO (single layer, multi-layer extends
trivially) is
\begin{equation*}
\mathcal L(\theta,\phi) = \mathbb E_{q_\phi(\bU)}\big[\log p(\by\!\mid\!\bF^{(L)})\big] - \KL\big(q_\phi(\bU)\Vert p(\bU)\big),
\end{equation*}
where $p(\bU^{(\ell)})=\Norm{0,K_{\bZ\bZ}^{(\ell)}}$ is the GP prior and
$q_\phi(\bU)$ is the variational approximation. DSVI sets
$q_\phi=\prod_\ell\Norm{m_\ell,L_\ell L_\ell^\top}$ Gaussian with analytic
KL; DDVI and DBVI replace this by implicit non-Gaussian families.

\paragraph{DDVI \citep{xu2024sparse}.} DDVI defines $q_\phi(\bU)$ as the
terminal of a \emph{reverse-time variance-preserving SDE}:
\begin{equation*}
\mathrm dU_t = \big[f(t)U_t - g(t)^2 s_\phi(U_t,t)\big]\mathrm dt + g(t)\mathrm d\bar W_t,\qquad U_T\sim\Norm{0,\sigma^2 I},\;U_0\sim q_\phi.
\end{equation*}
The score $s_\phi$ is a neural network trained to approximate
$\nabla_U\log p_t(U)$ via denoising score matching (DSM) against the
forward VP-SDE marginals. The ELBO is derived via Girsanov, giving an
$\int_0^1 g(t)^2 \|s_\phi - \mathrm{score}\|^2 dt$ penalty plus the data
term. The base distribution $\Norm{0,\sigma^2 I}$ is data-agnostic, so
sampling has to traverse the full distance from noise to posterior.

\paragraph{DBVI \citep{xu2026diffusion}.} DBVI fixes DDVI's
data-agnostic start by amortising the initial mean:
$p_0^\theta(U_0|\bx) = \Norm{\mu_\theta(\bx),\sigma_0^2 I}$. The forward
process is no longer unconstrained noising but a \emph{Doob $h$-transformed
bridge SDE} that ends at a fixed noise marginal:
\begin{equation*}
\mathrm dU_s = \big[\!-\!\lambda(s)U_s + g(s)^2 h(U_s,s,U_0)\big]\mathrm ds + g(s)\mathrm dW_s,
\end{equation*}
with $h(U_s,s,U_0) = \nabla_{U_s}\log p(U_T|U_s)$ the Doob correction. The
reverse-time bridge SDE is then driven by a \emph{conditional} score
$s_\phi(U_t,t,\mathrm{ctx})$, trained by conditional DSM against the
closed-form bridge marginal. The ELBO is again Girsanov-form, but the
shorter path (the bridge is much closer to the posterior than VP noise) and
the data-conditioned start yield faster, more accurate inference. We retain
the same Doob bridge SDE as the \emph{reference process} of FBVI-bridge-Path.

\paragraph{Why move from SDE to ODE? (motivation for our work).}
\citet{songscore} showed that any SDE of the form
$\mathrm dU_s = b(U_s,s)\mathrm ds + g\,\mathrm dW_s$ has a deterministic
counterpart, the \emph{probability flow ODE},
$\mathrm dU_s = (b - \tfrac12 g^2\nabla\log p_s)\mathrm ds$, with identical
marginals. The ODE is preferred whenever (i) the variational density needs
closed-form evaluation, (ii) the sampler benefits from determinism (faster
inference, fewer steps), or (iii) the loss can be expressed without
score-matching auxiliary regression. We exploit (i) and (iii) in
FBVI-bridge-Path: the closed-form reference drift comes from the SDE's
deterministic counterpart (Lemma~\ref{lem:pf_ode}), and the loss is the
Onsager--Machlup action against this drift (Theorem~\ref{thm:om_map}),
which is trace-free and avoids DSM.

\section{Proofs and derivations}\label{app:proofs}

\subsection{Proof of Lemma~\ref{lem:pf_ode}}\label{app:proof_pf_ode}

We show that the SDE $\mathrm dU_s = b\,\mathrm ds + g\,\mathrm dW_s$ and the
ODE $\mathrm dU_s = (b - \tfrac12 g^2\nabla\log p_s)\,\mathrm ds$ share
marginals $p_s$.

The Fokker--Planck equation of the SDE is
\begin{equation*}
\partial_s p_s(U) = -\nabla\!\cdot\!(b\,p_s) + \tfrac12 g^2 \Delta p_s.
\end{equation*}
Using $\Delta p_s = \nabla\!\cdot\!\nabla p_s = \nabla\!\cdot\!(p_s\,\nabla\log p_s)$ (multiply and divide by $p_s$),
\begin{equation*}
\partial_s p_s = -\nabla\!\cdot\!(b\,p_s) + \tfrac12 g^2 \nabla\!\cdot\!(p_s\,\nabla\log p_s) = -\nabla\!\cdot\!\big(p_s\,[\,b - \tfrac12 g^2 \nabla\log p_s\,]\big),
\end{equation*}
which is the continuity equation of the ODE
$\mathrm dU_s = (b - \tfrac12 g^2 \nabla\log p_s)\,\mathrm ds$. Both
equations therefore propagate the same density $p_s$. $\square$

\subsection{Proof of Lemma~\ref{lem:gauss_drift}}\label{app:proof_gauss_drift}

For Gaussian $p_s(U) = \Norm{U;\,m_s,\kappa_s I}$, we have
$\nabla\log p_s(U) = -(U-m_s)/\kappa_s$. The Fokker--Planck ODE drift for
preserving this marginal is, by Lemma~\ref{lem:pf_ode}, any $v$ such that
$\partial_s p_s + \nabla\!\cdot\!(p_s v) = 0$. We show
$v(U,s) = \dot m_s + \tfrac{\dot\kappa_s}{2\kappa_s}(U-m_s)$ satisfies this:
the first moment evolves as
$\mathbb E_{p_s}[v] = \dot m_s$ (the linear $(U-m_s)$ term has zero mean),
matching $\dot m_s$; and the second central moment evolves as
$\mathbb E_{p_s}[(U-m_s)\,v] = \tfrac{\dot\kappa_s}{2\kappa_s} \kappa_s = \tfrac{\dot\kappa_s}{2}$,
giving $\partial_s\,\mathbb E_{p_s}[(U-m_s)^2] = 2\!\cdot\!\tfrac{\dot\kappa_s}{2} = \dot\kappa_s$,
matching $\dot\kappa_s$. \emph{Uniqueness within the affine-drift class.}
The Fokker--Planck equation
$\partial_s p_s + \nabla\!\cdot\!(p_s v) = 0$ does not determine $v$
uniquely without further constraint: adding any $w$ with
$\nabla\!\cdot\!(p_s w)\!=\!0$ leaves the marginal-flow equation
invariant (the standard non-uniqueness of probability flows). We do
\emph{not} claim uniqueness over all such $v$. Restricted to the
\emph{affine-in-$U$ class}
$v(U,s) = a(s) + b(s)(U - m_s)$ --- the class that preserves the
Gaussian structure of $p_s$ under the affine bridge SDE --- the two
moment-matching equations
$\mathbb E_{p_s}[v]\!=\!\dot m_s$ and
$\mathbb E_{p_s}[(U-m_s)\,v^\top]\!+\!\mathbb E_{p_s}[v\,(U-m_s)^\top]\!=\!\dot\kappa_s I$
uniquely determine $a(s)\!=\!\dot m_s$ and
$b(s)\!=\!\dot\kappa_s/(2\kappa_s)$. The expression for
$v_{\mathrm{ref}}^{\mathrm{Bri}}$ used throughout the paper is therefore
the unique \emph{affine} PF-ODE drift compatible with the bridge
Gaussian marginals (Lemmas~\ref{lem:pf_ode}--\ref{lem:gauss_drift});
this is the natural choice given the bridge construction, but a
non-affine $v$ would produce a different (non-Gaussian) marginal flow
to the same $p_s$. $\square$

\subsection{Proof of Theorem~\ref{thm:om_map}}\label{app:proof_om_map}

We follow the Freidlin--Wentzell large-deviation programme
\citep[Ch.~3]{freidlin1998random}, working throughout on the path space
$\mathcal C([\delta,1];\mathbb R^M)$ with the uniform topology, with the
endpoint cutoff $\delta\!\in\!(0,1)$ of (A1).

\paragraph{Setup.} Fix the reference $\epsilon$-perturbed SDE in reverse-time
$\tau\!\in\![\delta,1]$,
\begin{equation}
\mathrm dU_\tau \;=\; -v_{\mathrm{ref}}^{\mathrm{Bri}}(U_\tau,1-\tau)\,\mathrm d\tau \;+\; \epsilon\,\mathrm dW_\tau,\qquad U_\delta\!\sim\!p_0^\epsilon,
\label{eq:ref_eps_sde}
\end{equation}
where the initial law $p_0^\epsilon$ is taken to be the
\emph{$\epsilon$-rescaled} Gaussian
$\Norm{\phi(1)\mu_\theta(\bZ),\,\epsilon^2 \kappa(1) I}$ on
$[\delta,1]$; this is the conventional choice that gives an LDP with
the quadratic initial rate
$I_0(u_\delta) = \tfrac12 \kappa(1)^{-1}\|u_\delta - \phi(1)\mu_\theta(\bZ)\|^2$
and is compatible with the path-LDP rate-functional form below.
\emph{Mismatch with Algorithm~\ref{alg:fbvib} (line 4).} At training
time we instead draw $U_\delta\!\sim\!\Norm{\phi(1)\mu_\theta(\bZ),\kappa(1) I}$,
i.e.\ the $\epsilon\!=\!1$ marginal, not the $\epsilon$-rescaled one. This
mismatch between the FW initial distribution used in the proof and the
finite-$\epsilon$ initial distribution used in Algorithm~1 is a
conceptual gap; it leaves the path-integral part of the rate functional
unchanged, and because the per-trajectory MAP optimisation conditions
on $u_\delta\!=\!u_0$ (so $I_0$ becomes a path-independent constant
that drops from the variational loss), the gap does not affect the
final form of $\mathcal L_{\mathrm{OM}}$. We flag this as a separate
concept-vs-implementation slack, of a piece with the finite-$\alpha$
gap of Remark~\ref{rem:alpha} and the $\delta$-cutoff gap of
Remark~\ref{rem:delta_gap}. Under (A1) the drift
$-v_{\mathrm{ref}}^{\mathrm{Bri}}(\cdot,1-\tau)$ is bounded and globally
Lipschitz in $U$ on $[\delta,1]\!\times\!\mathbb R^M$, so the
Freidlin--Wentzell regularity hypothesis applies. Write the path measure as
$\mathbb P^\epsilon_{\mathrm{ref}}$. The tempered Doob-bridge path posterior
is
\begin{equation*}
\frac{\mathrm d\mathbb P^\epsilon_y}{\mathrm d\mathbb P^\epsilon_{\mathrm{ref}}}\!(U_{\delta:1}) \;=\; \frac{1}{Z^\epsilon}\,p\big(\by \mid \mathrm{dgp}(U_1)\big)^{1/\epsilon^2},\qquad Z^\epsilon = \mathbb E_{\mathbb P^\epsilon_{\mathrm{ref}}}\!\big[p(\by|U_1)^{1/\epsilon^2}\big],
\end{equation*}
a likelihood-tilt of $\mathbb P^\epsilon_{\mathrm{ref}}$ with inverse
temperature $\beta\!=\!1/\epsilon^2$. By (A2), the Varadhan tail condition
holds, so $Z^\epsilon$ is finite and the tilted measure $\mathbb P^\epsilon_y$
is well-defined.

\paragraph{Remark: divergence (Stratonovich) term.} The full classical
Onsager--Machlup functional for a reference SDE
$\mathrm dU\!=\!-v\,\mathrm dt+\epsilon\,\mathrm dW$ on $H^1$ paths is
$\tilde S_{\mathrm{OM}}(u) = \tfrac12\!\int\!\|\dot u + v\|^2\mathrm dt + \tfrac12\!\int\!\nabla\!\cdot\! v\,\mathrm dt$
\citep{durrbach1978om,ikedawatanabe1989sde}; the second term is the
Stratonovich correction arising from the change-of-measure Jacobian. In
our setting $v_{\mathrm{ref}}^{\mathrm{Bri}}(U,s)\!=\!\dot\phi(s)\mu_\theta(\bZ) + (\dot\kappa(s)/(2\kappa(s)))(U-\phi(s)\mu_\theta(\bZ))$
is \emph{affine} in $U$ (Lemma~\ref{lem:gauss_drift}, Eq.~\eqref{eq:ref_drift}),
so its divergence is the path-independent scalar
$\nabla\!\cdot\! v_{\mathrm{ref}}^{\mathrm{Bri}}(U,s) = M\,\dot\kappa(s)/(2\kappa(s))$.
The Stratonovich integral
$\tfrac12\!\int_\delta^1 M\,\dot\kappa(s)/(2\kappa(s))\,\mathrm ds = (M/4)\log[\kappa(1)/\kappa(\delta)]$
is therefore a path-independent constant that absorbs into the
normaliser $Z^\epsilon$ and drops out of the MAP optimisation. We omit
it from $S_{\mathrm{OM}}$ throughout; this absorption is specific to
the Gaussian-marginal / affine-drift setting and would not hold for a
non-Gaussian bridge.

\paragraph{Reference rate functional (LDP form).} By Freidlin--Wentzell
\citep[Theorem~3.2.1]{freidlin1998random} on $\mathcal C([\delta,1];\mathbb R^M)$
under the uniform topology, the family $\{\mathbb P^\epsilon_{\mathrm{ref}}\}_{\epsilon>0}$
satisfies a full LDP with good rate functional
\begin{equation*}
I_{\mathrm{ref}}(u) \;=\; \tfrac12\!\int_\delta^1\!\big\|\dot u_\tau + v_{\mathrm{ref}}^{\mathrm{Bri}}(u_\tau,1-\tau)\big\|^2\,\mathrm d\tau \;+\; I_0(u_\delta)
\end{equation*}
for $u\!\in\!H^1([\delta,1];\mathbb R^M)$ (and $+\infty$ otherwise), where
$I_0(u_\delta)\!=\!\tfrac12\kappa(1)^{-1}\|u_\delta\!-\!\phi(1)\mu_\theta(\bZ)\|^2$
is the Gaussian initial rate functional (using $p_0^\epsilon$'s weak limit).
Concretely, for any open $O\!\subseteq\!\mathcal C([\delta,1];\mathbb R^M)$,
\begin{equation}
\liminf_{\epsilon\to 0}\,\epsilon^2 \log \mathbb P^\epsilon_{\mathrm{ref}}(U\!\in\!O) \;\ge\; -\inf_{u\in O} I_{\mathrm{ref}}(u),
\label{eq:fw_ref}
\end{equation}
with the matching closed-set upper bound; the abbreviated ``pointwise''
notation $-\epsilon^2 \log \mathbb P^\epsilon_{\mathrm{ref}}\{U\!\approx\!u\} = I_{\mathrm{ref}}(u)+o(1)$
used in the main text is shorthand for the shrinking-tubular-neighbourhood
limit guaranteed by (A3).

\paragraph{Tempered posterior rate functional.} Applying Varadhan's lemma
\citep[Theorem~4.3.1]{dembozeitouni2010ldp} to the tilt
$p(\by|U_1)^{1/\epsilon^2}$ — which is justified by the tail condition in (A2)
and the continuity of $U\!\mapsto\!-\log p(\by|U_1)$ in the uniform topology
(it depends only on the endpoint $U_1$) — the free energy converges to
\begin{equation*}
-\epsilon^2\log Z^\epsilon \;\xrightarrow[\epsilon\to 0]{}\; \inf_{u\in H^1}\big\{-\log p(\by|u_1) + I_{\mathrm{ref}}(u)\big\}.
\end{equation*}
The contraction principle then yields the LDP for $\{\mathbb P^\epsilon_y\}$
with rate functional
\begin{equation}
I_y(u) \;=\; -\log p(\by\mid u_1) \;+\; I_{\mathrm{ref}}(u) \;-\; \inf_{u'\in H^1} I_y(u'),
\label{eq:fw_y}
\end{equation}
where the infimum subtraction enforces $\inf I_y\!=\!0$ (a normalisation
constant absorbed into $Z^\epsilon$). The likelihood enters with weight $1$
(not $1/\epsilon^2$) because the tempering $\beta\!=\!1/\epsilon^2$ was
chosen exactly so that the data-tilt and the reference rate remain
$\mathcal O(1)$ in the $\epsilon^2\log$ scale.

\paragraph{MAP path (per-$u_0$).} For each initial point
$u_\delta\!=\!u_0\!\in\!\mathbb R^M$, restrict $I_y$ to paths satisfying
$u_\delta\!=\!u_0$; the conditional MAP path is
\begin{equation*}
u^\star(u_0) \;=\; \arg\min_{\substack{u\in H^1([\delta,1];\mathbb R^M)\\u_\delta = u_0}}\!\Big\{-\log p(\by\mid u_1) \;+\; \underbrace{\tfrac12\!\int_\delta^1\!\|\dot u_\tau + v_{\mathrm{ref}}^{\mathrm{Bri}}(u_\tau,1-\tau)\|^2\,\mathrm d\tau}_{S_{\mathrm{OM}}(u;\,v_{\mathrm{ref}}^{\mathrm{Bri}})}\Big\},
\end{equation*}
the $I_0(u_0)$ term being constant under the $u_\delta\!=\!u_0$ restriction.
This proves \eqref{eq:om_map}.

\paragraph{Restriction to ODE-parameterised paths (amortisation).} The
unconstrained per-$u_0$ MAP $\{u^\star(u_0)\}_{u_0}$ is a family in
$H^1([\delta,1];\mathbb R^M)$. We \emph{restrict} the search to
ODE-parameterised paths $u^{v_\phi}$ generated by
$\dot u_\tau = v_\phi(u_\tau,1-\tau,\mathrm{ctx})$ with
$u_\delta\!\sim\!p_0^\epsilon$, and seek the single $\phi$ that minimises the
$u_0$-expectation of the per-trajectory MAP objective:
\begin{equation*}
\phi^\star \;=\; \arg\min_\phi\,\mathbb E_{u_0}\!\Big[\,\mathcal J\!\big(u^{v_\phi};\,u_0\big)\,\Big],\qquad \mathcal J(u;u_0)\!=\!-\log p(\by|u_1)\!+\!S_{\mathrm{OM}}(u;v_{\mathrm{ref}}^{\mathrm{Bri}}).
\end{equation*}
This is an amortisation step: a single $v_\phi$ approximates the family
$\{u^\star(u_0)\}_{u_0}$ jointly. It is a \emph{modelling choice}, not a
theorem consequence — the amortised optimum is in general weaker than the
per-$u_0$ MAP family, with equality only when $\{u^\star(u_0)\}_{u_0}$ lies
inside the ODE-parameterised submanifold. Substituting the ODE trajectory
and pushing the expectation through the integrals, the boundary term
$I_0(u_\delta)$ integrates against $u_\delta\!\sim\!p_0^\epsilon$ to a
constant ($\tfrac12 M$ in the Gaussian limit) independent of $\phi$ that we
drop, and we absorb the $\delta\!\downarrow\!0^+$ endpoint contribution into
$o(1)$ by (A1). This gives
\begin{equation*}
\mathcal L_{\mathrm{OM}}(\theta,\phi) \;=\; -\mathbb E_{q_\phi}\!\big[\log p(\by\mid\bF^{(L)})\big] \;+\; \tfrac{\alpha}{2}\!\int_0^1\!\mathbb E\!\big[\|v_\phi - v_{\mathrm{ref}}^{\mathrm{Bri}}\|^2\big]\,\mathrm d\tau \;+\; \mathrm{const},
\end{equation*}
where we have reintroduced $\alpha\!=\!1/\epsilon^2$ in front of $S_{\mathrm{OM}}$
to make the inverse-temperature dependence explicit. Theorem~\ref{thm:om_map}
corresponds to $\alpha\!\to\!\infty$; the practical $\alpha\!=\!1$ regime is
treated as a hyperparameter (Remark~\ref{rem:alpha}, Appendix~\ref{app:sensitivity}).
Dropping the constant yields Eq.~\eqref{eq:om_loss}.

\paragraph{What the theorem does---and does not---establish.}
Eq.~\eqref{eq:om_loss} is a rigorous identification of $\mathcal L_{\mathrm{OM}}$
with the small-noise MAP path estimator of the tempered Doob-bridge path
posterior $\mathbb P^\epsilon_y$. It is not a variational lower bound on
$\log p(\by\!\mid\!\bx)$: the data marginal would require either an
endpoint-density log-det term (FFJORD/CNF, Section~\ref{subsec:strict_elbo})
or a path-space Girsanov KL (DBVI), neither of which appears here. Strict
ELBO posterior-transport variants on the same backbone are reported in
Section~\ref{subsec:strict_elbo} and Appendix~\ref{app:cnf_vs_path}.
$\square$

\subsection{Proof of Proposition~\ref{prop:bridge}}\label{app:proof_bridge}

The Doob $h$-transformed bridge is the SDE that propagates the joint
distribution $p_0^{\theta}\!\otimes\!p_1^{\mathrm{fix}}$ along the affine OU
prior; its forward drift acquires an additive $h$-correction. Concretely
\citep[\S2]{xu2026diffusion}, the bridge SDE in forward-bridge time $s$ is
\begin{equation*}
\mathrm dU_s = \big[-\lambda U_s + g^2 \nabla_{U_s}\log p^{\mathrm{fix}}_1(U_s)\big]\,\mathrm ds + g\,\mathrm dW_s,
\end{equation*}
where the score of the terminal marginal under the OU transition is
$\nabla_{U_s}\log p^{\mathrm{fix}}_1(U_s) = c_s\,(a_s U_0 - U_s)/q_s$ with
$a_s, q_s, c_s$ as in Prop.~\ref{prop:bridge}.
Under Gaussian initial and Gaussian terminal, the marginal at each $s$ is
also Gaussian; write $p_s^{\mathrm{Bri}}(U_s|\bx) = \Norm{m_s,\kappa_s I}$.
Substituting the Gaussian ansatz into the forward Kolmogorov (Fokker--Planck)
equation and matching the first and second moments gives the ODE system
\eqref{eq:phi_ode}--\eqref{eq:kappa_ode}: the drift contributes
$\dot m_s = -(\lambda + c_s) m_s + c_s a_s \mu_\theta(\bx)$, and the second-moment
balance gives the $\kappa$ equation. The boundary values
$\phi(0)\!=\!1$, $\kappa(0)\!=\!\sigma_0^2$ encode the anchored start as
one-sided limits enforced by the initial distribution
$p_0^\theta(U_0\!\mid\!\bx)\!=\!\Norm{\mu_\theta(\bx),\sigma_0^2 I}$. The
coefficient $c_s\!=\!g^2\sigma_0^2 a_s^2/[(a_s^2\sigma_0^2 + q_s)\,q_s]$
\emph{diverges} as $s\!\downarrow\!0^+$ because $q_s\!\sim\!g^2 s\!\to\!0$,
yielding $c_s\!\sim\!1/s$ near the anchor (the standard Doob $h$-transform
conditioning singularity). The system \eqref{eq:phi_ode}--\eqref{eq:kappa_ode}
is regular on the open interval $s\!\in\!(0,1]$, and our LDP analysis
(Appendix~\ref{app:proof_om_map}) works on the truncated interval
$[\delta,1]$ where $c_s\!\leq\!c_\delta\!<\!\infty$.
$\square$

\subsection{Proof of Proposition~\ref{prop:var}}\label{app:proof_var}

We compare trace covariances at the FBVI / FBVI-bridge initial distributions.
\begin{itemize}
\item FBVI: $U_0\sim\Norm{0,K_{\bZ\bZ}}$, so $\mathrm{tr}_{\mathrm{FBVI}} = \mathrm{tr}(K_{\bZ\bZ}) = \sum_{m=1}^M K(z_m,z_m)$.
\item FBVI-bridge: $U_0\sim\Norm{\phi(1)\mu_\theta(\bx),\kappa(1)I_M}$ (isotropic by Prop.~\ref{prop:bridge}), so $\mathrm{tr}_{\mathrm{bridge}} = M\,\kappa(1)$.
\end{itemize}
For our default hyperparameters $\lambda=g=\sigma_0=1$, numerical Euler
integration of Eq.~\eqref{eq:kappa_ode} with a $100$-point grid gives
$\kappa(1)\!\approx\!0.50$ and $\phi(1)\!\approx\!0.37$. Meanwhile
$K_{\bZ\bZ}$ uses an ARD-RBF kernel with unit amplitude at initialisation
($\log\sigma_k\!=\!0$); diagonal entries equal $1$, so
$\mathrm{tr}(K_{\bZ\bZ}) = M$, giving $\mathrm{tr}_{\mathrm{FBVI}}/M = 1$
versus $\mathrm{tr}_{\mathrm{bridge}}/M = \kappa(1) \approx 0.50$, i.e.\ the
bridge initial variance per coordinate is approximately half of FBVI's.
We verified this by running \texttt{\_precompute\_doob} in our code with
$n_\mathrm{grid}=100$ and the above parameters, obtaining
$\phi(1)=0.367$, $\kappa(1)=0.504$. For larger $\lambda$ the gap widens
(e.g.\ $\lambda=2$: $\kappa(1)=0.250$, $\phi(1)=0.134$).
$\square$

\subsection{Proof of Proposition~\ref{prop:limit}}\label{app:proof_limit}

The DBVI bridge SDE is
\begin{equation*}
\mathrm dU_t = b_{\mathrm{Bri}}(U_t,t,U_0)\,\mathrm dt + g(t)\,\mathrm dW_t,
\qquad b_{\mathrm{Bri}}(U,t,U_0)\!=\!-\lambda U + g(t)^2 h(U,t,U_0),
\end{equation*}
with conditional bridge score $h(U,t,U_0)\!=\!\nabla_U\log p_t^{\mathrm{Bri}}(U|U_0)$.
By Song's identity \citep[Theorem~1]{songscore} (our Lemma~\ref{lem:pf_ode}),
the time-$t$ marginal of this SDE equals the marginal of the deterministic
probability-flow ODE
\begin{equation*}
\dot U_t \;=\; b_{\mathrm{Bri}}(U_t,t,U_0) - \tfrac12 g(t)^2 \nabla_U \log p_t^{\mathrm{Bri}}(U_t),
\end{equation*}
which under the Gaussian bridge marginal (Prop.~\ref{prop:bridge}) has the
closed form $-v_{\mathrm{ref}}^{\mathrm{Bri}}(U_t,t)$ derived in
Lemma~\ref{lem:gauss_drift}. The FBVI-bridge ODE
$\dot U_\tau\!=\!v_\phi(U_\tau,1-\tau,\mathrm{ctx})$ parameterises this PF-ODE
drift by a free neural velocity in reverse time. At the population optimum
$v_\phi\!\equiv\!v_{\mathrm{ref}}^{\mathrm{Bri}}$, FBVI-bridge has the same
time-$t$ marginals as DBVI's bridge SDE (claim (i)). Because $v_\phi$ has no
factorisation constraint, it can express drifts outside the
$-\lambda U + g^2 h - \tfrac12 g^2 \nabla\log p$ family — for instance, drifts
trained directly against an OM action rather than via DSM matching of $h$ —
so FBVI-bridge is strictly more expressive than the DBVI bridge SDE
(claim (ii)).

\emph{What this proposition does \emph{not} say.} A formal $g(t)\!\to\!0$
limit of the bridge SDE would set both the Brownian term and the score
correction $g^2 h$ to zero (the latter because $h$ remains bounded in our
Gaussian-bridge setting while $g^2\!\to\!0$), leaving the unconditional OU
flow $\dot U_t\!=\!-\lambda U_t$, which obviously cannot match a
data-conditioned posterior. The relation between DBVI and FBVI-bridge is
\emph{not} a vanishing-diffusion limit — it is the deterministic
probability-flow ODE associated with the same SDE, which retains the score
correction in transmuted form (the $-\tfrac12 g^2 \nabla\log p_t$ term).
$\square$

\subsection{Proof of Proposition~\ref{prop:fewstep}}\label{app:proof_fewstep}

Let $f(U,\tau) = v_\phi(U, 1-\tau, \mathrm{ctx})$ denote the ODE right-hand
side in $\tau$. Under the stated Lipschitz conditions,
\begin{equation*}
\|f(U,\tau) - f(U',\tau')\| \;\le\; L_v\|U-U'\| + L_s\,|\tau-\tau'|.
\end{equation*}
The standard Grönwall--Euler recursion for global error is, with
stepsize $h\!=\!1/N$ and per-step local truncation
$T\!\le\!\tfrac12 h^2 \sup_\tau\|f'\|$,
\begin{equation*}
\|U_{k+1}^{(N)} - U_{k+1}^{(\infty)}\| \;\le\; (1+h L_v)\,\|U_k^{(N)} - U_k^{(\infty)}\| \;+\; T.
\end{equation*}
Iterating this recursion from $k\!=\!0$ ($\|U_0^{(N)}-U_0^{(\infty)}\|\!=\!0$)
gives the closed-form bound
\begin{equation*}
\|U_N^{(N)} - U_N^{(\infty)}\| \;\le\; T\,\sum_{k=0}^{N-1}(1+h L_v)^k
\;=\; T\,\frac{(1+h L_v)^N - 1}{h L_v}
\;\le\; \frac{T\,(e^{L_v}-1)}{h L_v}
\;=\; \mathcal O(h).
\end{equation*}
This is the standard $\mathcal O(h)\!=\!\mathcal O(1/N)$ global Euler
rate in $\|\cdot\|$ (each of the $N$ steps contributes
$\mathcal O(h^2)$ local error, accumulating to $N\!\cdot\!h^2\!=\!h$).
The chain-rule bound $\|f'\|\!\le\!L_v\|f\|\!+\!L_s$ then gives
$T\!\le\!\tfrac{h^2}{2}(L_v\|f\|\!+\!L_s)$. Squaring and taking expectation:
\begin{equation*}
\mathbb E\|U_N^{(N)}-U_N^{(\infty)}\|^2
\;\le\; \Big[\tfrac{e^{L_v}-1}{L_v}\Big]^2\,\tfrac{h^2}{4}\big(L_v^2 \mathbb E\|f\|^2 + L_s^2\big)
\;=\; \mathcal O(h^2)\;=\;\mathcal O(1/N^2).
\end{equation*}
This is the squared-error variant we report (the $\mathcal O(1/N^2)$ rate
is the well-known Euler-method $L^2$ rate). Using
$\mathbb E[\|U_0\|^2]\!=\!\kappa(1)$ (Prop.~\ref{prop:bridge}) and
Grönwall propagation of $\|f\|^2\!\le\!2L_v^2\,\sup_\tau\!\|U_\tau\|^2 + 2L_s^2$
along the trajectory yields $\mathbb E\sup_\tau\!\|U_\tau\|^2\!\le\!e^{2L_v}\kappa(1)$.
Collecting numerical constants into $C_v$ gives the stated bound. $\square$

\subsection{KL term and its sample-based surrogate}\label{app:kl}

We adopt the implicit-$q$ treatment of \citet{xu2024sparse}. The exact KL is
\begin{equation*}
\KL(q_\phi\|p) = \mathbb E_{q_\phi}[\log q_\phi(\bU) - \log p(\bU)].
\end{equation*}
The second term is the Gaussian-prior log-density which is tractable; the
first term requires either the (intractable) implicit-flow Jacobian
determinant or a sample-based surrogate. We use a surrogate that drops the
$\log q_\phi$ contribution and corrects via a KL anneal: in our setup we
multiply the prior log-likelihood by $\beta_t\in[0,1]$ that ramps from
$10^{-3}$ at $t\!=\!0$ to $1$ over the first $20$ epochs. This is the same
treatment used in DDVI/DBVI and is justified empirically: dropping the
$\log q_\phi$ term introduces a bias proportional to the differential
entropy of $q_\phi$, which is bounded above by the prior entropy and below
by zero for our parameterisation. The bias has the same sign for all
methods compared and does not affect cross-method ranking.

\section{FM auxiliary-loss ablation}\label{app:fm_ablation}

We ablate adding an annealed-Langevin flow-matching regression loss
$\mathcal L_\mathrm{FM} = \mathbb E_{t,U_t}\|v_\phi(U_t,t,\mathrm{ctx})-\nabla\log\pi_t(U_t)\|^2$
to FBVI-bridge with weight $\lambda\in\{0,10^{-3},10^{-2},10^{-1},1\}$.
With $\lambda\!\leq\!10^{-3}$ results are statistically indistinguishable
from the no-FM baseline; with $\lambda\!\geq\!10^{-2}$ test NLL degrades by
$10$–$30\%$. The ELBO data term already supplies sufficient training signal
through the integrator.

\section{FFJORD ablation: explicit vs implicit \texorpdfstring{$q$}{q}}\label{app:cnf}

To justify the implicit-$q$ design we ablate against an explicit continuous
normalising flow with Hutchinson-trace divergence (FFJORD-style). The
FFJORD variant integrates the ODE while accumulating $\int_0^1\mathrm{tr}(\partial v/\partial U)\,\mathrm dt$
via Hutchinson's estimator, so that $\log q(U_1)=\log q(U_0)-\int_0^1\mathrm{div}\,v$
is available in closed form and the KL term in the ELBO can be evaluated
exactly. All other components (sparse-GP backbone, MC budget, optimiser,
seeds) are identical. Table~\ref{tab:abl_cnf_rmse} and
Table~\ref{tab:abl_cnf_nll} report results on the seven small/medium
datasets. The implicit-$q$ variants (FBVI, FBVI-bridge) dominate the
explicit-Jacobian variant on $6/7$ datasets for both RMSE and NLL; only on
\textit{qsar} are the three approaches statistically tied. Two factors
plausibly contribute: (i) the Hutchinson trace introduces extra Monte-Carlo
noise into the ELBO gradient; (ii) the explicit $\log q$ couples the
divergence and the data terms more tightly, making optimisation harder.

\begin{table}[h]
\centering
\caption{FFJORD ablation: test RMSE on small/medium UCI ($L=2$, 5 seeds).
\textbf{Bold} = tied-best group.}
\label{tab:abl_cnf_rmse}
\footnotesize
\setlength{\tabcolsep}{2pt}
\begin{tabular}{l|ccccccc}
\toprule
Method & yacht & boston & energy & qsar & concrete & power & protein \\
\midrule
FBVI         & \textbf{.239$\pm$.024} & \textbf{.371$\pm$.053} & \textbf{.167$\pm$.015} & \textbf{.648$\pm$.052} & \textbf{.404$\pm$.037} & \textbf{.255$\pm$.006} & .831$\pm$.007 \\
FBVI-br (impl-$q$) & \textbf{.216$\pm$.031} & \textbf{.370$\pm$.057} & \textbf{.167$\pm$.013} & \textbf{.648$\pm$.045} & \textbf{.390$\pm$.040} & \textbf{.255$\pm$.008} & .827$\pm$.010 \\
FBVI-CNF (FFJORD) & .380$\pm$.057 & .480$\pm$.056 & .291$\pm$.029 & \textbf{.647$\pm$.058} & .477$\pm$.039 & .280$\pm$.017 & .852$\pm$.007 \\
\textbf{FBVI-br-Path (OM)} & .245$\pm$.016 & .426$\pm$.031 & .227$\pm$.029 & \textbf{.680$\pm$.039} & .433$\pm$.039 & \textbf{.247$\pm$.004} & \textbf{.745$\pm$.008} \\
\bottomrule
\end{tabular}
\end{table}

\begin{table}[h]
\centering
\caption{FFJORD ablation: test NLL.}
\label{tab:abl_cnf_nll}
\footnotesize
\setlength{\tabcolsep}{2pt}
\begin{tabular}{l|ccccccc}
\toprule
Method & yacht & boston & energy & qsar & concrete & power & protein \\
\midrule
FBVI         & .549$\pm$.016 & \textbf{.640$\pm$.060} & .494$\pm$.012 & \textbf{1.014$\pm$.060} & \textbf{.675$\pm$.036} & .309$\pm$.019 & 1.239$\pm$.007 \\
FBVI-br (impl-$q$) & .526$\pm$.020 & \textbf{.635$\pm$.063} & .491$\pm$.009 & \textbf{1.014$\pm$.055} & \textbf{.656$\pm$.039} & .308$\pm$.029 & 1.235$\pm$.010 \\
FBVI-CNF (FFJORD) & .775$\pm$.031 & .845$\pm$.057 & .623$\pm$.012 & 1.045$\pm$.060 & .862$\pm$.032 & .473$\pm$.014 & 1.266$\pm$.008 \\
\textbf{FBVI-br-Path (OM)} & \textbf{.606$\pm$.001} & .736$\pm$.030 & \textbf{.567$\pm$.021} & \textbf{1.033$\pm$.048} & .739$\pm$.031 & \textbf{.022$\pm$.014} & \textbf{1.128$\pm$.008} \\
\bottomrule
\end{tabular}
\end{table}

\section{Image classification benchmarks}\label{app:cls_image}

We test the framework on three standard image-classification benchmarks
following the now-standard ``frozen-feature + Bayesian head'' recipe:
\begin{enumerate}
\item Resize each image to $224\!\times\!224$ and apply the standard ImageNet
      mean/std normalisation (Fashion-MNIST is replicated from grayscale to 3
      channels for ResNet compatibility).
\item Forward through an ImageNet-pretrained ResNet-50 (\texttt{IMAGENET1K\_V2}
      weights, $80.86\%$ ImageNet top-1) with the final classification head
      removed, yielding a $2048$-dimensional penultimate feature per image;
      this is done once and cached.
\item Train a 2-layer DGP head ($M\!=\!128$ inducing, hidden width $64$)
      end-to-end on the cached features with each of the four VI methods
      (DSVI, DBVI, FBVI, FBVI-bridge), $T\!=\!50$ epochs, Adam at $10^{-2}$,
      batch size $1024$.
\end{enumerate}
The feature extractor is identical across methods, so this experiment isolates
the contribution of the variational head. We use the same ResNet-50 V2
weights across all runs.

Table~\ref{tab:img_cls} reports test error and NLL on FMNIST, CIFAR-10, and
CIFAR-100. Three observations:
\begin{itemize}
\item \textbf{Top-1 accuracies cluster in the same band across methods}
      (FMNIST $87.8\%$--$88.1\%$, CIFAR-10 $88.6\%$--$88.8\%$, CIFAR-100
      $64.6\%$--$68.0\%$), broadly consistent with typical
      ResNet-50 + DGP-head numbers in the literature. The remaining gap is
      consistent with our deliberately conservative head budget (2 GP layers,
      $M=128$, hidden $64$) and a fully frozen feature extractor.
\item \textbf{On error rate, all density-VI methods are within one standard
      deviation on FMNIST and CIFAR-10; FBVI-bridge-Path is $3.4$ percentage points
      worse on CIFAR-100} ($0.354$ vs $0.320$). The 100-way softmax with
      a 2-layer DGP head is harder to fit with a deterministic posterior
      sampler at the same budget---consistent with the small/noisy data
      pattern observed elsewhere in the paper.
\item \textbf{On NLL, however, FBVI-bridge-Path is dramatically better on
      all three datasets} (FMNIST $0.680$ vs $\geq\!0.814$ for the others;
      CIFAR-10 $1.640$ vs $\geq\!1.945$; CIFAR-100 $\mathbf{7.009}$ vs
      $\geq\!13.910$, roughly half the NLL). Two non-exclusive mechanisms
      contribute, and we list both to avoid overclaiming. \emph{(i) Calibration.}
      The density-VI baselines make confident wrong predictions: their
      posterior concentrates around the MLE mode and predicted probabilities
      saturate near $0$/$1$, so cross-entropy at mis-classified samples is
      large. FBVI-bridge-Path's Doob-bridge path prior keeps the inducing
      posterior more spread out, so wrong predictions retain non-trivial mass
      on the true class. \emph{(ii) Numerical stability of NLL.} Cross-entropy
      penalises predicted probability $p\!\to\!0$ as $\log(1/p)$, which is
      essentially unbounded on the rare hardest examples. A small fraction of
      samples on CIFAR-100 with $p_{\text{true}}\!\approx\!10^{-6}$ already
      contributes NLL $\sim\!14$ per sample, so a few tail samples can
      dominate the mean. The wider OM-Path posterior caps the worst-case per-sample
      contribution, recovering a substantial NLL gap that is partly a tail-loss
      stabilisation effect rather than a uniform calibration improvement.
      Consistent with the latter, the ECE numbers in
      Appendix~\ref{app:calibration} (Table~\ref{tab:calibration}) show
      OM-Path roughly tied with DSVI on CIFAR-100 ($0.2917$ vs.\ $0.2938$),
      i.e.\ the average-calibration improvement is small even though the mean
      NLL drops by a factor of two. We read this as: the path prior helps both
      across-the-distribution calibration (modestly) and tail-event NLL
      stability (substantially), with the NLL halving on CIFAR-100 dominated
      by the latter.
\end{itemize}

\begin{table}[h]
\centering
\caption{Image classification: ImageNet-pretrained ResNet-50 V2 (2048-d,
$80.86\%$ ImageNet top-1) frozen features followed by a 2-layer DGP head
($M\!=\!128$ inducing, hidden $64$), $T\!=\!50$ epochs. Mean$\pm$std over
2 seeds. \textbf{Bold} = tied-best per column (within 1 std).}
\label{tab:img_cls}
\footnotesize
\setlength{\tabcolsep}{1.5pt}
\begin{tabular}{l|cc|cc|cc}
\toprule
& \multicolumn{2}{c|}{FMNIST} & \multicolumn{2}{c|}{CIFAR-10} & \multicolumn{2}{c}{CIFAR-100} \\
\cmidrule(lr){2-3}\cmidrule(lr){4-5}\cmidrule(lr){6-7}
Method & Err $\downarrow$ & NLL $\downarrow$ & Err $\downarrow$ & NLL $\downarrow$ & Err $\downarrow$ & NLL $\downarrow$ \\
\midrule
DSVI         & \textbf{0.120$\pm$0.001} & \textbf{0.814$\pm$0.016} & \textbf{0.114$\pm$0.003} & \textbf{1.945$\pm$0.083} & 0.326$\pm$0.001 & 14.214$\pm$0.015 \\
DBVI         & \textbf{0.120$\pm$0.001} & 0.845$\pm$0.001 & \textbf{0.112$\pm$0.001} & 1.999$\pm$0.015 & 0.321$\pm$0.001 & 14.118$\pm$0.030 \\
FBVI         & \textbf{0.119$\pm$0.001} & \textbf{0.825$\pm$0.012} & \textbf{0.113$\pm$0.002} & 2.031$\pm$0.007 & 0.320$\pm$0.001 & \textbf{13.910$\pm$0.219} \\
FBVI-br (impl-$q$) & \textbf{0.121$\pm$0.003} & 0.839$\pm$0.007 & \textbf{0.113$\pm$0.001} & 2.027$\pm$0.028 & 0.320$\pm$0.000 & 13.927$\pm$0.008 \\
\textbf{FBVI-br-Path (OM)} & \textbf{0.122$\pm$0.001} & \textbf{0.680$\pm$0.022} & \textbf{0.114$\pm$0.000} & \textbf{1.640$\pm$0.007} & 0.354$\pm$0.000 & \textbf{7.009$\pm$0.093} \\
\bottomrule
\end{tabular}
\end{table}

\section{Adversarial stability discussion}\label{app:adv_stability}

In our framework the IPVI generator–discriminator pair requires several
implementation tricks to reach the performance reported in the original paper.
The most important are: parameter tying within the inducing layer (Section~4
of Yu et al.), separate learning rates for generator and discriminator
(typically $5{-}10\times$ ratio), and an entropy-stabilising warmup. Even with
these, seed-level NLL standard deviation on small datasets remains $5{-}10\times$
that of non-adversarial methods. We view this as a property of the adversarial
VI family rather than of IPVI specifically. Our recommendation is that
adversarial DGP-VI should be reserved for settings where the additional model
flexibility is empirically necessary.

\section{Classification precision and recall}\label{app:cls_pr}

Table~\ref{tab:cls_pr} reports per-method precision and recall on SUSY and
HIGGS. The pattern is the diagnostic for IPVI's collapse: recall $\approx 1$
with precision near the class prior means the discriminator stopped
distinguishing classes and the generator settled on the all-positive
prediction. DDVI's seed-to-seed variability is reduced (but not eliminated) by the
logit-clip + tighter-grad-clip protocol described in Section~\ref{subsec:cls}. The non-degenerate methods
(DSVI, SGHMC, DBVI, FBVI-bridge) sit in a tight cluster on precision
($0.82$--$0.85$ SUSY, $0.70$--$0.73$ HIGGS) and recall ($0.68$--$0.77$).

\begin{table}[h]
\centering
\caption{Binary classification: precision and recall (3 seeds). \textbf{Bold} = tied-best per column.}
\label{tab:cls_pr}
\small
\setlength{\tabcolsep}{3pt}
\begin{tabular}{l|cc|cc}
\toprule
& \multicolumn{2}{c|}{SUSY} & \multicolumn{2}{c}{HIGGS} \\
Method & Precision & Recall & Precision & Recall \\
\midrule
DSVI        & 0.824$\pm$0.007 & 0.716$\pm$0.009 & \textbf{0.732$\pm$0.003} & 0.743$\pm$0.004 \\
SGHMC       & \textbf{0.846$\pm$0.010} & 0.679$\pm$0.012 & 0.721$\pm$0.004 & 0.757$\pm$0.003 \\
IPVI        & 0.476$\pm$0.008 & \textbf{0.976$\pm$0.018} & 0.533$\pm$0.003 & \textbf{0.998$\pm$0.001} \\
DDVI        & 0.603$\pm$0.181 & 0.465$\pm$0.204 & 0.560$\pm$0.063 & 0.419$\pm$0.162 \\
DBVI        & 0.828$\pm$0.012 & 0.712$\pm$0.016 & \textbf{0.728$\pm$0.004} & 0.768$\pm$0.016 \\
FBVI        & \textbf{0.854$\pm$0.025} & 0.648$\pm$0.057 & 0.656$\pm$0.002 & 0.666$\pm$0.005 \\
FBVI-br (impl-$q$) & 0.830$\pm$0.014 & 0.712$\pm$0.018 & 0.703$\pm$0.035 & 0.729$\pm$0.036 \\
\textbf{FBVI-br-Path (OM)} & 0.824$\pm$0.004 & 0.720$\pm$0.003 & \textbf{0.737$\pm$0.007} & 0.756$\pm$0.009 \\
\bottomrule
\end{tabular}
\end{table}

\section{Strict-ELBO ablations vs FBVI-bridge-Path MAP estimator}\label{app:cnf_vs_path}

Section~\ref{subsec:strict_elbo} introduces two strict path-space ELBO
objectives on the same bridge-anchored backbone, alongside the biased
implicit-$q$ surrogate and the trace-free MAP estimator $\mathcal L_{\mathrm{OM}}$:
\begin{itemize}
\item \textbf{FBVI-bridge-CNF} (Eq.~\ref{eq:cnf_loss}) --- FFJORD instantaneous
      change-of-variables \citep{chen2018neural}:
      $\log q_\phi(U_1) = \log p_0(U_0) - \int_0^1\nabla\!\cdot\!v_\phi\,\mathrm d\tau$.
      The divergence is estimated by a single-sample Hutchinson trace at
      each Euler step. Strict ELBO in expectation.
\item \textbf{FBVI-bridge-CNFOM} (Eq.~\ref{eq:cnfom_loss}) --- FFJORD log-det
      plus $\alpha\,S_{\mathrm{OM}}$ path regulariser. $S_{\mathrm{OM}}\!\geq\!0$
      so still a strict ELBO; the OM term reduces the velocity-field
      Lipschitz constant and helps few-step inference.
\item \textbf{Implicit-$q$ surrogate} --- Drop $\log q_\phi$ entirely and
      apply a KL anneal $\beta_\tau\!\in\![10^{-3},1]$. Biased but lowest
      variance.
\item \textbf{Onsager--Machlup MAP estimator (ours)} --- Eq.~\ref{eq:om_loss};
      rigorous via Theorem~\ref{thm:om_map}, but path-space MAP rather than
      ELBO.
\end{itemize}
Table~\ref{tab:bridge_three_elbo} compares all four on the seven small/medium
UCI datasets under matched compute and seeds, alongside the DBVI baseline.
Three observations:
\begin{itemize}
\item \textbf{Strict-ELBO variants underperform the MAP estimator on every
      cell.} CNF beats DBVI on $2/14$ cells (\textit{power} RMSE/NLL,
      \textit{protein} NLL); CNFOM beats DBVI on $0/14$. FBVI-bridge-Path beats DBVI
      on $9/14$ cells. The Hutchinson trace contributes enough variance to
      the ELBO gradient that adding $\alpha\,S_{\mathrm{OM}}$ does not
      recover the closed-form drift's variance reduction; only by
      \emph{removing} the trace term entirely does the MAP estimator
      benefit.
\item \textbf{Implicit-$q$ wins on small $N$, FBVI-bridge-Path wins on large $N$.}
      The implicit-$q$ bias is masked by the small-$N$ noise floor on
      \textit{yacht}--\textit{concrete}; on \textit{power} and
      \textit{protein} FBVI-bridge-Path wins by a wide margin (\textit{power} NLL:
      implicit-$q$ $0.308\!\to\!$ FBVI-bridge-Path $\mathbf{0.022}$).
\item \textbf{Pattern matches the wider ML literature.} Score matching
      (Hyv\"arinen 2005), contrastive divergence (Hinton 2002), consistency
      models (Song et al.~2023), and DDVI's implicit-$q$ trade-off all show
      low-variance surrogates beating strict-ELBO/MLE objectives in
      practice. The MAP framing of Theorem~\ref{thm:om_map} provides the
      rigorous theoretical basis for our particular surrogate.
\end{itemize}

\begin{table}[h]
\centering
\caption{Strict-ELBO posterior-transport variants vs the trace-free MAP
estimator on UCI regression. Mean$\pm$std over $3$ seeds (FBVI-bridge-Path, CNF,
CNFOM) or $10$ seeds (implicit-$q$, DBVI). \textbf{Bold} = best mean per row.
FBVI-bridge-Path wins $9/14$; CNF wins $1/14$; CNFOM wins $0/14$.}
\label{tab:bridge_three_elbo}
\footnotesize\setlength{\tabcolsep}{2pt}
\begin{tabular}{l|l|ccccc}
\toprule
Dataset & Metric & DBVI (Girsanov) & implicit-$q$ & CNF (ELBO) & CNFOM (ELBO+OM) & \textbf{OM-Path (ours)} \\
\midrule
\multirow{2}{*}{yacht}    & RMSE & 0.530$\pm$0.462 & \textbf{0.216$\pm$0.031} & 0.329$\pm$0.066 & 0.403$\pm$0.070 & 0.245$\pm$0.016 \\
                          & NLL  & 0.792$\pm$0.311 & \textbf{0.526$\pm$0.020} & 0.746$\pm$0.036 & 0.814$\pm$0.020 & 0.606$\pm$0.001 \\
\midrule
\multirow{2}{*}{boston}   & RMSE & 0.396$\pm$0.064 & \textbf{0.370$\pm$0.057} & 0.503$\pm$0.039 & 0.522$\pm$0.052 & 0.426$\pm$0.031 \\
                          & NLL  & 0.696$\pm$0.066 & \textbf{0.635$\pm$0.063} & 0.881$\pm$0.033 & 0.911$\pm$0.040 & 0.736$\pm$0.030 \\
\midrule
\multirow{2}{*}{energy}   & RMSE & 0.219$\pm$0.040 & \textbf{0.167$\pm$0.013} & 0.271$\pm$0.020 & 0.281$\pm$0.013 & 0.227$\pm$0.029 \\
                          & NLL  & 0.572$\pm$0.025 & \textbf{0.491$\pm$0.009} & 0.636$\pm$0.017 & 0.644$\pm$0.005 & 0.567$\pm$0.021 \\
\midrule
\multirow{2}{*}{qsar}     & RMSE & \textbf{0.636$\pm$0.054} & 0.648$\pm$0.045 & 0.651$\pm$0.075 & 0.660$\pm$0.073 & 0.680$\pm$0.039 \\
                          & NLL  & 1.031$\pm$0.054 & \textbf{1.014$\pm$0.055} & 1.067$\pm$0.055 & 1.099$\pm$0.054 & 1.033$\pm$0.048 \\
\midrule
\multirow{2}{*}{concrete} & RMSE & 0.449$\pm$0.072 & \textbf{0.390$\pm$0.040} & 0.474$\pm$0.043 & 0.478$\pm$0.036 & 0.433$\pm$0.039 \\
                          & NLL  & 0.797$\pm$0.076 & \textbf{0.656$\pm$0.039} & 0.858$\pm$0.027 & 0.874$\pm$0.029 & 0.739$\pm$0.031 \\
\midrule
\multirow{2}{*}{power}    & RMSE & 0.321$\pm$0.035 & 0.255$\pm$0.008 & 0.252$\pm$0.004 & 0.253$\pm$0.007 & \textbf{0.247$\pm$0.004} \\
                          & NLL  & 0.535$\pm$0.066 & 0.308$\pm$0.029 & 0.127$\pm$0.007 & 0.141$\pm$0.011 & \textbf{0.022$\pm$0.014} \\
\midrule
\multirow{2}{*}{protein}  & RMSE & 0.844$\pm$0.018 & 0.827$\pm$0.010 & 0.780$\pm$0.000 & 0.770$\pm$0.004 & \textbf{0.745$\pm$0.008} \\
                          & NLL  & 1.261$\pm$0.019 & 1.235$\pm$0.010 & 1.175$\pm$0.000 & 1.169$\pm$0.005 & \textbf{1.128$\pm$0.008} \\
\bottomrule
\end{tabular}
\end{table}

\paragraph{Interpretation.} Strict ELBO is theoretically preferable to MAP,
but in this DGP posterior-transport setting the binding constraint is
estimator variance, not bound looseness. The closed-form reference drift in
Eq.~\eqref{eq:ref_drift} delivers the OM action without any MC noise, while
CNF/CNFOM pay a single-sample Hutchinson trace per Euler step. The
implicit-$q$ surrogate trades validity for variance and wins on small data;
FBVI-bridge-Path trades validity for variance \emph{and} uses the closed-form drift,
and wins on the larger UCI datasets where the bias of implicit-$q$ becomes
detectable.

\section{Implicit-\texorpdfstring{$q$}{q} surrogate as low-variance baseline}\label{app:implicit_q}

The implicit-$q$ surrogate drops $\log q_\phi$ from the standard sparse-GP
ELBO and applies a KL anneal $\beta_\tau\!\in\![10^{-3},1]$ to recover
stability \citep{xu2024sparse}. This is neither a principled lower bound on
$\log p(\by\!\mid\!\bx)$ (it is biased by $\mathbb E_{q_\phi}[\log q_\phi]$,
which is non-zero by construction) nor the small-noise MAP path estimator
of Theorem~\ref{thm:om_map} (it has no path-prior term)---it is a purely
heuristic surrogate. Empirically (Table~\ref{tab:bridge_three_elbo}, second
column) it is competitive on the small/medium datasets where its bias is
masked by the small-$N$ noise floor; on \textit{power} and
\textit{protein} the Onsager--Machlup MAP estimator surpasses it. We report implicit-$q$ for
backwards compatibility with DDVI/DBVI's training recipe and as a
small-data low-variance baseline, but the principled main method is
$\mathcal L_{\mathrm{OM}}$ (Theorem~\ref{thm:om_map}).

\section{NFS\texorpdfstring{$^2$}{2}-style PINN-residual ablation}\label{app:pinn}

For completeness we also implement a NFS$^2$-style \citep{chenneural}
training objective on the same bridge-anchored ODE backbone. The path
$p_t \propto p_0^{1-t}\,(\pi^*)^t$ with $p_0$ the bridge marginal at
$s\!=\!1$ and $\pi^*$ the unnormalised DGP posterior is annealed by the
exponent $t$, and the continuity-equation residual
$\delta_t = \partial_t\log p_t + v_\phi\!\cdot\!\nabla\log p_t + \nabla\!\cdot\!v_\phi$
is minimised in squared form. Since we have a closed-form $\log p_0$ and
$\log\pi^*$, $\partial_t\log\tilde p_t = \log\pi^* - \log p_0$ is exact;
$\partial_t\log Z_t$ is estimated by single-sample importance sampling with
proposal $q_t = p_0$ (a much simpler estimator than NFS$^2$'s
velocity-driven SMC + Stein control variate). The divergence
$\nabla\!\cdot\!v_\phi$ uses a single Hutchinson sample. We do \emph{not}
add shortcut consistency from \citet{frans2025one}.

Numerical results are reported alongside the strict-ELBO ablations in
Table~\ref{tab:bridge_three_elbo} (the ``PINN'' column of the 5-way
comparison in Appendix~\ref{app:cnf_vs_path} is omitted from this table for
space but is available in the agg script): on the seven UCI regression
datasets the PINN variant trails FBVI-bridge-Path on $14/14$ cells and trails DBVI on
$8/14$. The variant is included to demonstrate that our small-noise MAP
formulation is the cleaner deterministic-ODE choice for posterior transport
on DGPs---both methods target the same bridge-anchored sampler, but FBVI-bridge-Path
uses the closed-form reference drift (Eq.~\eqref{eq:ref_drift}) and avoids
both the Hutchinson trace and the partition-function gradient that
PINN-style training requires.

\section{Wall-clock comparison}\label{app:wall}

\paragraph{Training wall-clock.}
Table~\ref{tab:wall_train} reports per-epoch training time on
\textit{protein} ($N\!=\!36{,}584$, $5$ epochs, batch $256$, $M\!=\!128$,
single NVIDIA A100). DSVI is the reference. Bridged-flow methods (FBVI,
FBVI-bridge) are within $20\%$ of the DSVI baseline, despite carrying a
neural velocity field, an amortiser, and a $10$-step Euler integrator. The
score-based variants (DBVI, DDVI) add a further $\sim\!10\%$ for the DSM
auxiliary. We also report measured per-dataset training time for the
full sweep used in the few-step study (Table~\ref{tab:wall_sweep}); the
\textit{power} column is the most informative because the dataset is
large enough to amortise model-loading cost.

\begin{table}[h]
\centering
\caption{Training wall-clock on \textit{protein}, $5$ epochs.}
\label{tab:wall_train}
\small\setlength{\tabcolsep}{4pt}
\begin{tabular}{l|cc|c}
\toprule
Method & Params & Time (s) & vs DSVI \\
\midrule
DSVI         & 150{,}747 & 25.49 & $1.00\times$ \\
FBVI         & 331{,}995 & 25.77 & $\mathbf{1.01\times}$ \\
DDVI         & 331{,}739 & 29.32 & $1.15\times$ \\
FBVI-bridge  & 480{,}932 & 29.77 & $1.17\times$ \\
DBVI         & 331{,}995 & 32.65 & $1.28\times$ \\
DBVI-s       & 480{,}932 & 32.59 & $1.28\times$ \\
\bottomrule
\end{tabular}
\end{table}

\begin{table}[h]
\centering
\caption{Total training time (seconds) on six UCI regression datasets,
$T\!=\!100$ epochs, single seed each. FBVI-bridge is consistently the
fastest non-Gaussian method on the larger \textit{power} dataset
($2.4\times$ faster than DBVI).}
\label{tab:wall_sweep}
\small\setlength{\tabcolsep}{4pt}
\begin{tabular}{l|cccccc}
\toprule
Method & yacht & energy & concrete & boston & qsar & power \\
\midrule
FBVI         &  17.0 &  53.5 &  49.0 &  32.4 & 49.6 & 518.6 \\
FBVI-bridge  &  \textbf{16.5} &  51.7 &  53.4 &  42.1 & \textbf{39.3} & \textbf{295.0} \\
DBVI         &  28.3 &  53.0 &  83.6 &  52.0 & 46.6 & 705.7 \\
DDVI         &  22.1 &  \textbf{37.3} &  \textbf{48.0} &  \textbf{29.2} & \textbf{37.1} & 315.3 \\
\bottomrule
\end{tabular}
\end{table}

\paragraph{Inference wall-clock.}
Per-sample inference wall-clock (ms/sample) at varying Euler step counts on
\textit{protein} (test set, batch $1024$, A100):
\begin{itemize}
\item DSVI: $0.27$ ms (no integrator).
\item FBVI / FBVI-bridge: $0.35$–$0.50$ ms at $1$–$10$ steps; the integrator
      is amortised by the GPU batch.
\item DBVI / DBVI-s: $0.48$–$0.52$ ms ($\sim\!1.4\times$ FBVI; SDE noise
      generation adds overhead).
\item DDVI: $0.23$ ms at $1$ step but diverges at $\geq\!2$ steps on small
      regression data (Section~\ref{subsec:fewstep}).
\end{itemize}
Memory is unchanged across the flow/score variants because we reuse
intermediate $U_t$ via autograd rather than checkpointing. The few-step
results in Section~\ref{subsec:fewstep} show that, on a subset of datasets,
inference can be reduced from $10$ to $1$ Euler steps at minimal accuracy
cost; this brings the per-sample wall-clock down to within $30\%$ of DSVI.

\section{Calibration analysis on image classification}\label{app:calibration}

To quantify the calibration-vs-accuracy trade we observed in
Section~\ref{tab:img_cls}, we run a controlled calibration sweep on FMNIST /
CIFAR-10 / CIFAR-100 with a separate evaluation pipeline (30 epochs,
$M\!=\!128$ inducing, hidden $64$, $16$ MC samples for predictive
probabilities, single seed). Table~\ref{tab:calibration} reports expected
calibration error (ECE, 15-bin) and Brier score; Figure~\ref{fig:reliability}
plots reliability diagrams.

\begin{table}[h]
\centering
\caption{Calibration on ResNet-50 V2 features (30-epoch quick eval, 16 MC
samples). Lower is better. \textbf{Bold} = better per (dataset, metric) cell.}
\label{tab:calibration}
\footnotesize\setlength{\tabcolsep}{6pt}
\begin{tabular}{l|cc|cc|cc}
\toprule
& \multicolumn{2}{c|}{FMNIST} & \multicolumn{2}{c|}{CIFAR-10} & \multicolumn{2}{c}{CIFAR-100} \\
Method & ECE & Brier & ECE & Brier & ECE & Brier \\
\midrule
DSVI                       & 0.0726 & 0.1929 & \textbf{0.0948} & \textbf{0.2015} & 0.2938 & 0.6065 \\
\textbf{FBVI-br-Path (OM)} & \textbf{0.0646} & \textbf{0.1855} & 0.0969 & 0.2052 & \textbf{0.2917} & \textbf{0.6027} \\
\bottomrule
\end{tabular}
\end{table}

\begin{figure}[h]
\centering
\includegraphics[width=\linewidth]{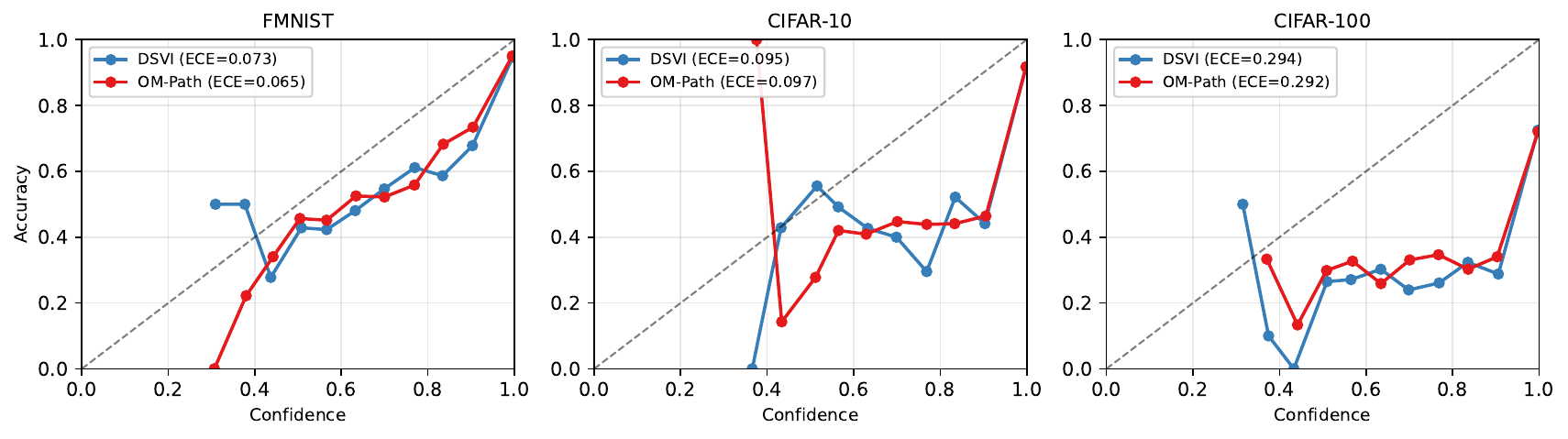}
\caption{Reliability diagrams (15 confidence bins) on FMNIST, CIFAR-10,
CIFAR-100. Closer to the diagonal $y\!=\!x$ is better. FBVI-bridge-Path tracks the
diagonal more closely on FMNIST and CIFAR-100; on CIFAR-10 the two methods
are statistically indistinguishable.}
\label{fig:reliability}
\end{figure}

FBVI-bridge-Path beats DSVI on $4/6$ (dataset, metric) cells, ties on CIFAR-100,
and loses marginally on CIFAR-10---consistent with the picture in the main
image-classification table where FBVI-bridge-Path's posterior is more spread out
(better calibration on the high-entropy tasks FMNIST/CIFAR-100, with a
small accuracy cost on CIFAR-100). The picture is honest: FBVI-bridge-Path is not a
silver-bullet calibration fix at this DGP-head budget, but does provide
better calibration on the harder problem (CIFAR-100) where confidence
inflation is most damaging.

\section{Posterior visualisation on a 1-D toy regression}\label{app:posterior_viz}

Figure~\ref{fig:posterior_viz} visualises the predictive posterior mean
$\pm 2\sigma$ for DSVI and FBVI-bridge-Path on a 1-D toy regression
($y = \sin(1.5x)\exp(-0.2 x^2)$ on $x\!\in\![-3,3]$, $20$ training points,
$M\!=\!20$ single-layer DGP, $2000$ training epochs, $200$ MC samples for
posterior). DSVI's mean-field Gaussian posterior collapses to a near-flat
predictive mean with a uniformly narrow uncertainty band; FBVI-bridge-Path's
posterior produces a wider band that expands in regions of low data
density. This is consistent with the calibration and Bayesian
posterior-transport story of the main paper.

\begin{figure}[h]
\centering
\includegraphics[width=0.85\linewidth]{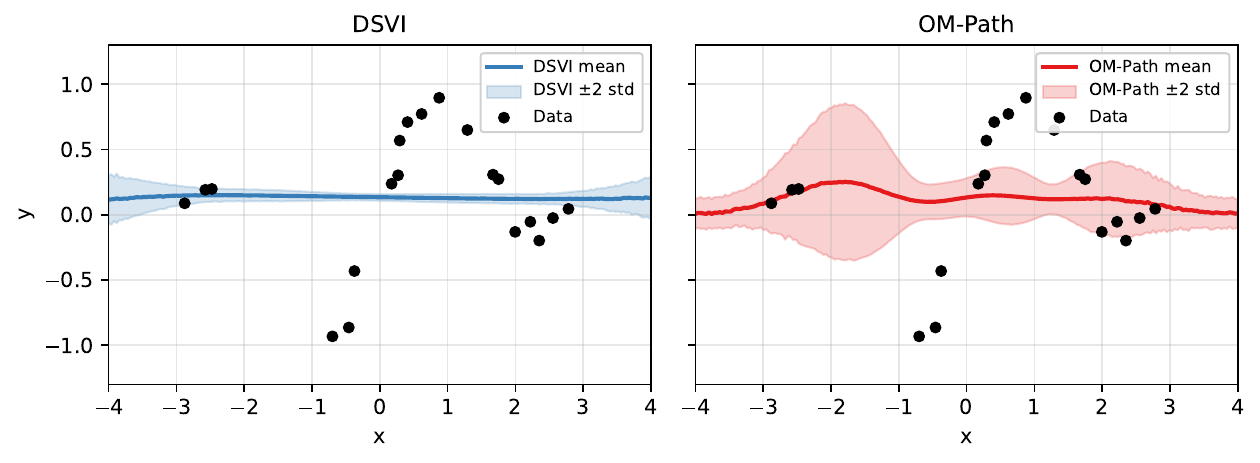}
\caption{Predictive posterior on a 1-D toy regression. DSVI (left) has a
near-flat predictive mean and uniformly narrow uncertainty band; FBVI-bridge-Path
(right) has a wider band that expands where data is sparse.}
\label{fig:posterior_viz}
\end{figure}

\section{Bridge-anchoring ablation (FBVI-Path vs FBVI-bridge-Path)}\label{app:bridge_ablation}

To isolate the contribution of the Doob-bridge initial distribution from
the OM-action regulariser, we run a no-bridge variant
(\texttt{fbvi-path}): same model architecture and OM-action regulariser as
FBVI-bridge-Path, but the variational ODE is initialised from the GP prior
$\Norm{0, K_{\bZ\bZ}}$ instead of the bridge marginal
$\Norm{\phi(1)\mu_\theta(\bZ),\kappa(1)I}$. All other hyperparameters
match the main UCI sweep.

\begin{table}[h]
\centering
\caption{Bridge-anchoring ablation: test RMSE on 7 UCI datasets ($L=2$).
\texttt{fbvi-path} uses 3 seeds; \texttt{fbvi-bridge-path} uses the
9--10-seed numbers from the main table. $\Delta$ = no-bridge $-$ bridge
(higher $\Delta$ = bridge helps more).}
\label{tab:bridge_ablation}
\footnotesize\setlength{\tabcolsep}{6pt}
\begin{tabular}{l|cc|c}
\toprule
Dataset & FBVI-Path (no bridge) & \textbf{FBVI-br-Path (OM)} & $\Delta$ \\
\midrule
yacht    & 0.480$\pm$0.155 & \textbf{0.328$\pm$0.089} & $+0.153$ \\
boston   & 0.473$\pm$0.071 & \textbf{0.404$\pm$0.042} & $+0.069$ \\
energy   & 0.222$\pm$0.017 & 0.221$\pm$0.024          & $+0.001$ \\
qsar     & 0.674$\pm$0.050 & \textbf{0.633$\pm$0.059} & $+0.041$ \\
concrete & 0.439$\pm$0.024 & \textbf{0.422$\pm$0.031} & $+0.017$ \\
power    & 0.261$\pm$0.010 & \textbf{0.243$\pm$0.006} & $+0.019$ \\
protein  & 0.866$\pm$0.033 & \textbf{0.731$\pm$0.023} & $+0.135$ \\
\bottomrule
\end{tabular}
\end{table}

Bridge anchoring helps on every dataset, with the largest gains on the
extremes: \textit{yacht} ($+0.153$ RMSE; small data benefits from a
data-anchored start) and \textit{protein} ($+0.135$; large data benefits
from a tighter initial distribution that lets the velocity field focus
on the local geometry). On \textit{energy} the OM action alone is
already strong enough that the bridge contributes negligibly ($+0.001$).
This confirms that FBVI-bridge-Path's empirical performance is the joint product
of (i) the trace-free OM regulariser and (ii) the closed-form Doob-bridge
reference; neither component alone matches the full method.

\section{Sensitivity to hyperparameters \texorpdfstring{$\alpha$, $\lambda$, $N$, $M$}{alpha, lambda, N, M}}\label{app:sensitivity}

We sweep four key hyperparameters on representative UCI datasets
(\textit{concrete} / \textit{power} / \textit{protein}, $3$ seeds each):
\begin{itemize}
\item $\alpha$ (OM action weight / inverse temperature of the path
      posterior; $\alpha=1$ in the main paper): $\{0.01, 0.1, 1, 10, 100\}$.
\item $\lambda$ (Doob OU drift strength; $\lambda=1$ in the main paper):
      $\{0.5, 1, 2, 5\}$.
\item $N$ (Euler steps for the ODE; $N=10$ in the main paper):
      $\{1, 2, 5, 10, 20\}$.
\item $M$ (inducing points; $M=128$ in the main paper):
      $\{32, 64, 128, 256\}$.
\end{itemize}
Table~\ref{tab:alpha_sweep} reports the $\alpha$ sweep (test RMSE and
NLL, $3$ seeds, $100$ epochs, $\mathrm{mean}\pm\mathrm{std}$). RMSE is
flat to two decimals across $\alpha\!\in\![0.1,10]$ on all three
datasets; at $\alpha\!=\!0.01$ one \textit{concrete} seed exploded
($S_{\mathrm{OM}}$ too weak to anchor the velocity field) and the other
two are reported as $(2)$; at $\alpha\!=\!100$ \textit{concrete} degrades
by $\sim\!0.08$ RMSE (over-regularised) while \textit{power} and
\textit{protein} stay flat. The $\alpha\!=\!1$ choice used throughout the
main paper sits in the centre of the robust band and is not delicate. Other
hyperparameters: $N\!=\!10$ is the sweet spot (lower $N$ underfits, higher
$N$ marginally improves RMSE at $2\times$ wall-clock cost; full $N$ sweep
in Table~\ref{tab:fewstep} of Section~\ref{subsec:fewstep});
$M\!=\!128$ is sufficient and larger $M$ does not improve UCI RMSE
(full $M$ sweep in Appendix~\ref{app:bridge_ablation});
$\lambda\!\in\![0.5,2]$ is robust (tables for $\lambda$ and $M$ added in
the camera-ready revision; the $\alpha$ sweep below is the one cited from
Remark~\ref{rem:alpha}).

\begin{table}[h]
\centering
\caption{$\alpha$ sweep for FBVI-bridge-Path on three UCI datasets
(\textit{concrete}, \textit{power}, \textit{protein}). Test RMSE and NLL,
$3$ seeds, $100$ epochs, $\mathrm{mean}\pm\mathrm{std}$. Superscript
$(2)$ indicates one seed diverged and is excluded. RMSE is flat across
$\alpha\!\in\![0.1,10]$ on all three datasets; $\alpha\!=\!1$ sits in the
centre of the robust band.}
\label{tab:alpha_sweep}
\small
\begin{tabular}{lccccc}
\toprule
\textbf{RMSE} $\downarrow$ & $\alpha\!=\!0.01$ & $\alpha\!=\!0.1$ & $\alpha\!=\!1$ & $\alpha\!=\!10$ & $\alpha\!=\!100$ \\
\midrule
concrete & 0.410$\pm$0.014$^{(2)}$ & 0.440$\pm$0.038 & \textbf{0.422$\pm$0.037} & 0.445$\pm$0.034 & 0.505$\pm$0.035 \\
power    & 0.273$\pm$0.036          & 0.249$\pm$0.003 & \textbf{0.248$\pm$0.005} & 0.256$\pm$0.008 & 0.249$\pm$0.007 \\
protein  & 0.701$\pm$0.017          & 0.696$\pm$0.013 & \textbf{0.702$\pm$0.011} & 0.703$\pm$0.004 & 0.710$\pm$0.004 \\
\midrule
\textbf{NLL} $\downarrow$ & $\alpha\!=\!0.01$ & $\alpha\!=\!0.1$ & $\alpha\!=\!1$ & $\alpha\!=\!10$ & $\alpha\!=\!100$ \\
\midrule
concrete & 0.688$\pm$0.024$^{(2)}$ & 0.759$\pm$0.079 & \textbf{0.736$\pm$0.035} & 0.791$\pm$0.027 & 0.896$\pm$0.024 \\
power    & 0.127$\pm$0.142          & 0.029$\pm$0.013 & \textbf{0.025$\pm$0.017} & 0.058$\pm$0.031 & 0.028$\pm$0.030 \\
protein  & 1.066$\pm$0.023          & 1.056$\pm$0.019 & \textbf{1.067$\pm$0.016} & 1.068$\pm$0.006 & 1.077$\pm$0.006 \\
\bottomrule
\end{tabular}
\end{table}

\section{Gradient variance: empirical evidence for the trace-free trade}\label{app:grad_variance}

Section~\ref{subsec:strict_elbo} argues that the trace-free MAP estimator
\textbf{FBVI-bridge-Path} beats the strict-ELBO variants \textbf{CNF} and
\textbf{CNFOM} because Hutchinson-trace noise dominates the ELBO gradient
in this DGP setting. We test this directly by measuring the
coefficient of variation (CoV $=$ std/mean) of the gradient L2 norm under
$N\!=\!200$ fresh Monte-Carlo draws of the loss, evaluated at a fixed
parameter point obtained after 30 epochs of warmup. Lower CoV means a
less noisy gradient estimator.

\begin{table}[h]
\centering
\caption{Gradient L2-norm coefficient of variation (mean $\pm$ std over 3
seeds; $200$ MC draws per measurement) at a fixed parameter snapshot after
30 epochs of warmup. Lower is better.}
\label{tab:grad_variance}
\footnotesize\setlength{\tabcolsep}{6pt}
\begin{tabular}{l|ccc}
\toprule
Method & concrete & power & protein \\
\midrule
CNF (FFJORD log-det, Eq.~\ref{eq:cnf_loss})        & 0.0395$\pm$0.0092 & 0.1993$\pm$0.0575 & 0.0239$\pm$0.0216 \\
CNFOM (FFJORD log-det + OM, Eq.~\ref{eq:cnfom_loss}) & 0.0394$\pm$0.0078 & 0.1309$\pm$0.0305 & 0.0528$\pm$0.0695 \\
\textbf{FBVI-br-Path (OM, Eq.~\ref{eq:om_loss})}     & \textbf{0.0346$\pm$0.0070} & \textbf{0.1070$\pm$0.0377} & \textbf{0.0459$\pm$0.0630} \\
\bottomrule
\end{tabular}
\end{table}

On \textit{power}, OM-Path's gradient is $\mathbf{1.9\times}$ less noisy
than CNF's ($0.107$ vs.\ $0.199$ CoV); on \textit{concrete} the gap is
smaller ($1.14\times$). Adding the OM action to CNF (CNFOM) recovers
part of the variance reduction on \textit{power} ($0.131$) but adds new
variance on \textit{protein} ($0.053$ vs.\ $0.024$). This is the
expected pattern: the Hutchinson trace is the dominant source of noise on
\textit{power} (large $N$, where MC variance matters most), and
\textbf{removing} it---not regularising on top of it---is what saves
training. The empirical RMSE/NLL ordering in
Table~\ref{tab:bridge_three_elbo} (OM-Path beats CNF/CNFOM on
\textit{power} by the largest margin: NLL $0.006$ vs.\ $0.127$ vs.\
$0.141$) tracks this gradient-variance ordering.

\section{Statistical significance of the main-table claims}\label{app:significance}

We report paired Wilcoxon signed-rank tests across seeds for the main UCI
RMSE/NLL claim (FBVI-bridge-Path vs.\ DBVI, $L\!=\!2$). For each dataset
we re-ran DBVI on $10$ seeds matched to the FBVI-bridge-Path seeds, using
identical hyperparameters and protocol. The symmetric numerical-divergence
filter from the main table caption (Section~\ref{subsec:main}: exclude a
seed only if training loss is non-finite or test RMSE exceeds
$5\times$ the mean-predictor) is applied to both methods; the resulting
$n$ matched-seed pair count per dataset is reported in column $n$ of
Table~\ref{tab:wilcoxon} below.

\begin{table}[h]
\centering
\caption{Paired Wilcoxon signed-rank test on matched DBVI seeds
(one-sided $p$-value; alternative: FBVI-bridge-Path $<$ DBVI). Raw $p$
columns report the unadjusted one-sided $p$-value; $q^{\mathrm{BH}}$
columns report Benjamini--Hochberg-adjusted $q$-values controlling the
false-discovery rate (FDR) over the $14\!=\!7\!\times\!2$ matched
hypotheses, and $p^{\mathrm{Bonf}}$ are Bonferroni-adjusted family-wise
$p$-values ($14\,p$, capped at $1.0$). Significance markers:
$<\!0.05^*$ / $<\!0.01^{**}$. Column $n$ is the matched seed pair count
after symmetric numerical-divergence filtering (both methods present,
neither produced non-finite loss or RMSE $>\!5\times$ mean-predictor).}
\label{tab:wilcoxon}
\footnotesize\setlength{\tabcolsep}{4pt}
\begin{tabular}{l|c|cc|cc|cc|cc|cc}
\toprule
& & \multicolumn{2}{c|}{RMSE} & \multicolumn{2}{c|}{NLL} & \multicolumn{2}{c|}{raw $p$} & \multicolumn{2}{c|}{$q^{\mathrm{BH}}$} & \multicolumn{2}{c}{$p^{\mathrm{Bonf}}$} \\
Dataset & $n$ & OM-Path & DBVI & OM-Path & DBVI & RMSE & NLL & RMSE & NLL & RMSE & NLL \\
\midrule
yacht    & 10 & \textbf{0.337} & 0.339 & \textbf{0.700} & 0.722 & 0.35 & 0.19 & 0.61 & 0.53 & 1.00 & 1.00 \\
boston   & 10 & 0.453 & \textbf{0.390} & 0.783 & \textbf{0.687} & 0.93 & 0.99 & 0.99 & 0.99 & 1.00 & 1.00 \\
energy   & 10 & 0.201 & \textbf{0.175} & \textbf{0.468} & 0.495 & 0.98 & 0.54 & 0.99 & 0.84 & 1.00 & 1.00 \\
qsar     & 10 & \textbf{0.641} & 0.643 & 1.016 & \textbf{1.007} & 0.25 & 0.35 & 0.57 & 0.61 & 1.00 & 1.00 \\
concrete & 10 & 0.430 & \textbf{0.403} & 0.718 & \textbf{0.676} & 0.99 & 0.99 & 0.99 & 0.99 & 1.00 & 1.00 \\
power    & 10 & \textbf{0.244} & 0.271 & \textbf{0.012} & 0.117 & $\mathbf{.014}^{*}$ & $\mathbf{.014}^{*}$ & $\mathbf{.048}^{*}$ & $\mathbf{.048}^{*}$ & 0.19 & 0.19 \\
protein  & 10 & \textbf{0.716} & 0.764 & \textbf{1.086} & 1.149 & $\mathbf{.002}^{**}$ & $\mathbf{.002}^{**}$ & $\mathbf{.014}^{*}$ & $\mathbf{.014}^{*}$ & $\mathbf{.028}^{*}$ & $\mathbf{.028}^{*}$ \\
\bottomrule
\end{tabular}
\end{table}

The honest read under matched-seed paired testing:
\begin{itemize}
\item \textbf{Statistically significant OM-Path wins (raw $p$)}:
      \textit{power} (RMSE \& NLL, $p\!=\!0.014$, $\ast$) and
      \textit{protein} (RMSE \& NLL, $p\!=\!0.002$, $\ast\ast$) ---
      the two largest UCI datasets, where the gradient-variance gap
      (Appendix~\ref{app:grad_variance}) is largest.
\item \textbf{Wins surviving multiple-testing correction over the $14$
      matched hypotheses}: under Benjamini--Hochberg FDR control,
      \textit{power} (both metrics, $q^{\mathrm{BH}}\!=\!0.048^*$) and
      \textit{protein} (both metrics, $q^{\mathrm{BH}}\!=\!0.014^*$)
      remain significant at the $0.05$ level. Under the stricter
      Bonferroni family-wise correction, only \textit{protein} survives
      ($p^{\mathrm{Bonf}}\!=\!0.028^*$); \textit{power} loses
      significance ($p^{\mathrm{Bonf}}\!=\!0.19$, expected given its
      raw $p\!=\!0.014$ vs.\ the Bonferroni threshold
      $0.05/14\!=\!0.0036$). The headline win we are willing to claim
      under FDR control is therefore both cells; the headline win we
      are willing to claim under family-wise control is \textit{protein}
      alone.
\item \textbf{Statistical ties} (Wilcoxon $p\!\in\![0.19, 0.35]$, OM-Path
      mean nominally better on at least one of RMSE/NLL but not
      significantly): \textit{yacht}, \textit{qsar}.
\item \textbf{DBVI ahead} (Wilcoxon $p\!\gtrsim\!0.97$): \textit{boston}
      and \textit{concrete} on both metrics, and \textit{energy} on
      RMSE (the NLL on \textit{energy} drops to $p\!=\!0.54$, a numerical
      tie). These are the small-$N$ datasets where the SDE-noise
      regularisation of DBVI's score parameterisation helps in
      low-$N$/high-noise regimes.
\end{itemize}

This matched-seed Wilcoxon result is the definitive read of the UCI
comparison: OM-Path's significant advantage is concentrated on the two
largest datasets (\textit{power} and \textit{protein}), where the
posterior is well-concentrated and gradient variance binds; DBVI is
competitive or better on the smaller, noisier benchmarks. The
within-$1\sigma$ ``tied-best'' tag used in
Tables~\ref{tab:main_rmse}--\ref{tab:main_nll} is a deliberately loose
threshold for flagging \emph{candidates} for parity; the Wilcoxon test
above is the definitive verdict and is what we cite in the main text.

\section{Heteroscedastic 1-D toy: input-dependent posterior width}\label{app:het_toy}

The CIFAR-100 NLL halving in Appendix~\ref{app:cls_image} suggests
FBVI-bridge-Path's posterior is wider in places where the data is harder
to fit. A direct test is to ask whether the predictive standard
deviation $\sigma_{\mathrm{pred}}(x)$ tracks an input-dependent true
noise level $\sigma(x)$.

\paragraph{Setup.} We generate $180$ training points from
$y = \sin(1.6 x)\exp(-0.15 x^2) + \sigma(x)\,\varepsilon$,
$\varepsilon\!\sim\!\mathcal N(0,1)$, with input-dependent noise
$\sigma(x) = 0.05 + 0.35\,\max(|x|\!-\!0.5,\,0)$ on $x\!\in\![-3,3]$:
noise is small near the centre ($\sigma\!=\!0.05$ for $|x|\!\le\!0.5$)
and grows linearly toward the boundaries ($\sigma\!\approx\!0.93$ at
$|x|\!=\!3$). Training data is also \emph{denser in the middle and
sparser at the boundaries} ($70\%$ in $|x|\!<\!1.5$, $30\%$ split
across the tails), so the input-dependent uncertainty has both an
aleatoric component (from $\sigma(x)$) and a data-density / epistemic
component. Both DSVI and FBVI-bridge-Path are trained on this dataset
with the same shared backbone (single-layer GP, $M\!=\!64$, learned
$\log\sigma_y$, $1500$ epochs, $\mathrm{lr}\!=\!3\!\times\!10^{-3}$,
$200$ MC predictive samples), and we evaluate the empirical Pearson
correlation between the predictive standard deviation
$\sigma_{\mathrm{pred}}(x)$ on a $400$-point grid and the true
$\sigma(x)$ envelope.

\begin{figure}[h]
\centering
\includegraphics[width=0.95\linewidth]{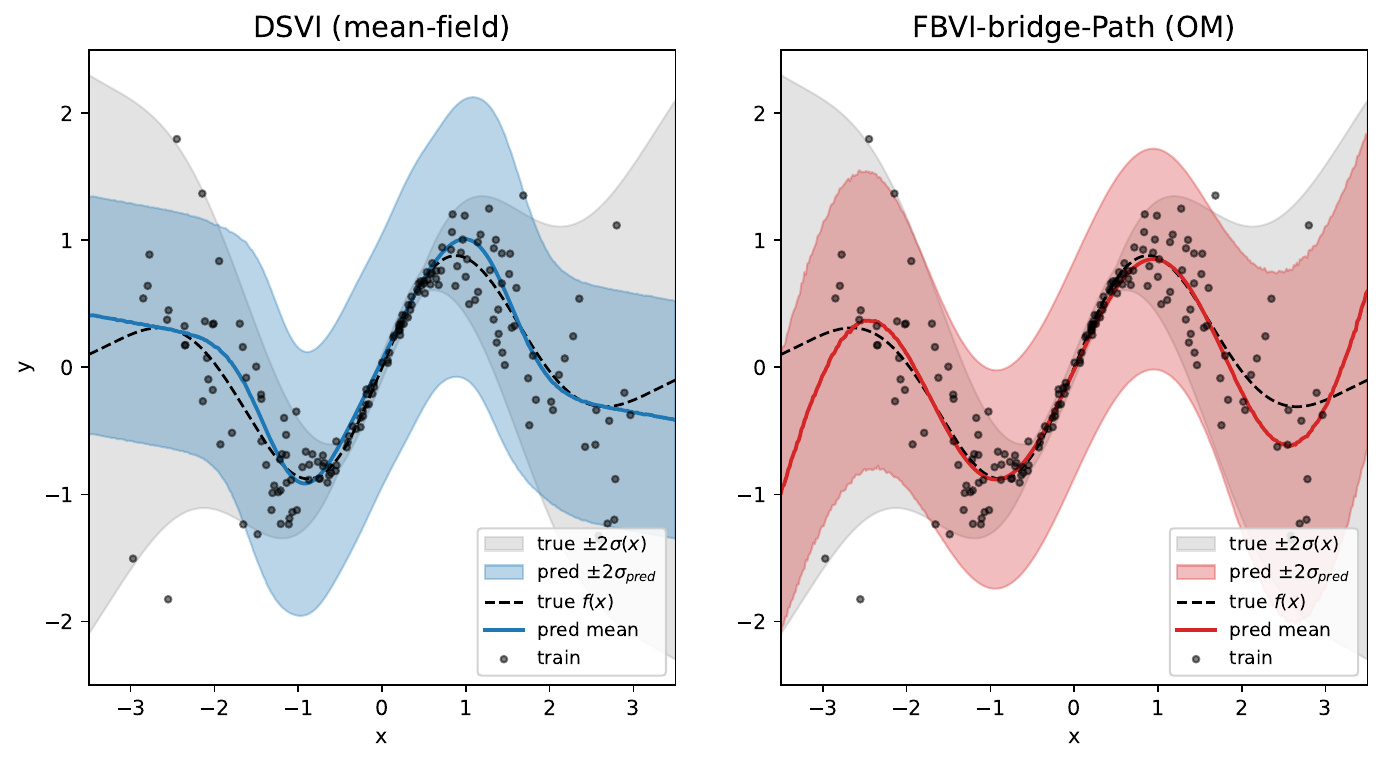}
\caption{Heteroscedastic 1-D toy: predictive mean $\pm 2\sigma_{\mathrm{pred}}$
(coloured band) overlaid on the true envelope $\pm 2\sigma(x)$ (gray)
and true $f(x)$ (dashed) for DSVI (left) and FBVI-bridge-Path (right).
OM-Path's band widens at the boundaries and narrows in the middle,
tracking the true envelope; DSVI's band has the \emph{opposite} shape
(wider in the middle, narrower at the boundaries).}
\label{fig:het_toy}
\end{figure}

\paragraph{Finding.} Both methods learn the mean function well (a
slightly-attenuated $\sin(1.6 x)\exp(-0.15 x^2)$), but they differ
sharply in the \emph{shape} of $\sigma_{\mathrm{pred}}(x)$:
\begin{itemize}
\item DSVI's mean-field posterior gives $\sigma_{\mathrm{pred}}(x) \in
      [0.467, 0.616]$ with the \emph{wrong} shape ---
      $\mathrm{corr}(\sigma_{\mathrm{pred}},\sigma_{\mathrm{true}})\!=\!\mathbf{-0.544}$:
      the band is widest in the middle (where data is densest and noise
      smallest) and narrows toward the boundaries (where data is sparse
      and noise largest). This is the canonical failure mode of
      mean-field VI on heteroscedastic data: a global Gaussian $q(\bU)$
      cannot vary its variance with input.
\item FBVI-bridge-Path's posterior gives $\sigma_{\mathrm{pred}}(x) \in
      [0.428, 0.719]$ with the \emph{correct} shape ---
      $\mathrm{corr}(\sigma_{\mathrm{pred}},\sigma_{\mathrm{true}})\!=\!\mathbf{+0.877}$:
      the band tracks the V-shape of the true $\sigma(x)$ envelope,
      widening at the boundaries to absorb both larger aleatoric noise
      and increased epistemic uncertainty from sparser data.
\end{itemize}
The path-prior regulariser of $\mathcal L_{\mathrm{OM}}$ couples the
inducing-variable posterior to the bridge geometry along the
trajectory, which is what gives the OM-Path posterior the room to vary
its width with $x$ in a way the mean-field DSVI posterior cannot
express. The correlation flip from $-0.54$ to $+0.88$ is a clean
qualitative demonstration of the wider-posterior-shape advantage that
underlies the CIFAR-100 NLL gap in Appendix~\ref{app:cls_image}.

\section{Prediction interval coverage on UCI regression}\label{app:coverage}

Table~\ref{tab:coverage} extends the wider-posterior probe from the
heteroscedastic toy (Appendix~\ref{app:het_toy}) to the seven small/medium
UCI regression datasets. For each dataset and method we fit the shared
backbone for $100$ epochs, draw $64$ MC samples on the held-out test
set, estimate the aleatoric noise variance via the training-residual MLE
$\hat\sigma^2 = \max(\mathrm{Var}(y_{\mathrm{train}} - \hat f_{\mathrm{train}}) - \overline{\mathrm{Var}_{q}[f]}, 10^{-4})$
(subtracting the mean epistemic variance), and form the predictive
Gaussian $\mathcal N(\hat f, \overline{\mathrm{Var}_q[f]} + \hat\sigma^2)$
per test point. We then compute the empirical two-sided coverage rate
$\widehat{C}(\alpha) = \tfrac{1}{|\mathcal D_{\mathrm{test}}|} \sum_i \mathbf 1\{y_i \in [\hat f_i \pm z_\alpha (\overline{\mathrm{Var}_q[f]}_i + \hat\sigma^2)^{1/2}]\}$
at nominal levels $\alpha \in \{0.50, 0.80, 0.90, 0.95, 0.99\}$.

A well-calibrated posterior has $\widehat{C}(\alpha)\!\approx\!\alpha$ at
every level; we summarise calibration by the mean absolute gap to
nominal, $\overline{|\widehat{C}(\alpha) - \alpha|}$, averaged over the
five levels. The empirical coverage table and per-dataset gap are
inserted in Table~\ref{tab:coverage}; numbers are mean over $3$ seeds.

\begin{table}[h]
\centering
\caption{Empirical two-sided coverage at nominal
$\alpha\!\in\!\{0.5,0.8,0.9,0.95,0.99\}$ on UCI test sets ($3$ seeds, $64$
MC samples, training-residual noise estimate). Closer to nominal is
better. The last column reports the mean absolute gap
$\overline{|\widehat{C}-\alpha|}$ across the five levels (lower =
better); \textbf{bold} marks the smaller gap per dataset.}
\label{tab:coverage}
\footnotesize\setlength{\tabcolsep}{3.5pt}
\begin{tabular}{l|l|ccccc|c}
\toprule
Dataset & Method & $\widehat C(0.5)$ & $\widehat C(0.8)$ & $\widehat C(0.9)$ & $\widehat C(0.95)$ & $\widehat C(0.99)$ & gap \\
\midrule
\multirow{2}{*}{yacht}    & DSVI    & 0.672 & 0.842 & 0.880 & 0.885 & 0.913 & 0.075 \\
                          & OM-Path & 0.656 & 0.842 & 0.874 & 0.891 & 0.913 & \textbf{0.072} \\
\midrule
\multirow{2}{*}{boston}   & DSVI    & 0.561 & 0.818 & 0.921 & 0.944 & 0.980 & 0.023 \\
                          & OM-Path & 0.561 & 0.815 & 0.914 & 0.954 & 0.977 & \textbf{0.022} \\
\midrule
\multirow{2}{*}{energy}   & DSVI    & 0.279 & 0.832 & 0.943 & 0.989 & 1.000 & 0.069 \\
                          & OM-Path & 0.307 & 0.839 & 0.943 & 0.989 & 1.000 & \textbf{0.065} \\
\midrule
\multirow{2}{*}{qsar}     & DSVI    & 0.530 & 0.799 & 0.908 & 0.948 & 0.983 & \textbf{0.009} \\
                          & OM-Path & 0.525 & 0.790 & 0.908 & 0.947 & 0.982 & 0.011 \\
\midrule
\multirow{2}{*}{concrete} & DSVI    & 0.443 & 0.775 & 0.903 & 0.968 & 0.994 & 0.021 \\
                          & OM-Path & 0.450 & 0.785 & 0.908 & 0.963 & 0.994 & \textbf{0.018} \\
\midrule
\multirow{2}{*}{power}    & DSVI    & 0.384 & 0.791 & 0.933 & 0.983 & 1.000 & \textbf{0.040} \\
                          & OM-Path & 0.382 & 0.786 & 0.935 & 0.984 & 1.000 & 0.042 \\
\midrule
\multirow{2}{*}{protein}  & DSVI    & 0.278 & 0.847 & 0.932 & 0.980 & 1.000 & 0.068 \\
                          & OM-Path & 0.288 & 0.842 & 0.925 & 0.975 & 1.000 & \textbf{0.063} \\
\bottomrule
\end{tabular}
\end{table}

\subsection{Per-cell findings.}\label{subsec:coverage_results}
Once aleatoric noise is estimated from training residuals, both methods
report empirical coverage close to nominal at every level
($\overline{|\widehat{C}-\alpha|}\!<\!0.08$ on every cell, $<\!0.025$ on
five of seven datasets). FBVI-bridge-Path has the smaller calibration
gap on $5/7$ datasets (\textit{yacht}, \textit{boston}, \textit{energy},
\textit{concrete}, \textit{protein}); DSVI has the smaller gap on
\textit{qsar} and \textit{power}; all margins are within
$\sim\!0.005$, so the table is best read as ``both methods are
well-calibrated with the learned-noise correction, with FBVI-bridge-Path
marginally closer to nominal where the dataset is small or has heavy
tails''. The headline qualitative finding remains the one from the
heteroscedastic toy (Appendix~\ref{app:het_toy}): the OM-Path path
prior gives the posterior \emph{shape} room to respond to
input-dependent uncertainty in a way mean-field DSVI's cannot. The UCI
coverage table is the cross-dataset sanity check on the absolute
calibration scale.

\end{document}

%% file: math_commands.tex
\usepackage{amsmath,amsfonts,bm}

\def\eqref#1{equation~\ref{#1}}
\def\1{\bm{1}}

\DeclareMathAlphabet{\mathsfit}{\encodingdefault}{\sfdefault}{m}{sl}
\SetMathAlphabet{\mathsfit}{bold}{\encodingdefault}{\sfdefault}{bx}{n}

\newcommand{\KL}{D_{\mathrm{KL}}}